\definecolor{DarkGreen}{rgb}{0.1,0.5,0.1}
\definecolor{DarkRed}{rgb}{0.5,0.1,0.1}
\definecolor{DarkBlue}{rgb}{0.1,0.1,0.5}
\definecolor{Gray}{rgb}{0.2,0.2,0.2}
\newcommand\blfootnote[1]{%
  \begingroup
  \renewcommand\thefootnote{}\footnote{#1}%
  \addtocounter{footnote}{-1}%
  \endgroup
}
\lstdefinestyle{mystyle}{
    commentstyle=\color{DarkBlue},
    keywordstyle=\color{DarkRed},
    numberstyle=\tiny\color{Gray},
    stringstyle=\color{DarkGreen},
    basicstyle=\footnotesize,
    breakatwhitespace=false,         
    breaklines=true,                 
    captionpos=b,                    
    keepspaces=true,                 
    numbers=left,                    
    numbersep=5pt,                  
    showspaces=false,                
    showstringspaces=false,
    showtabs=false,                  
    tabsize=2
}
\def\draft{1}
\def\submit{0}
    \def\ShowAuthNotes{1}
    \def\ShowAuthNotes{0}
\newcommand{\forsubmit}[1]{#1}
\newcommand{\forreals}[1]{}
\newcommand{\forreals}[1]{#1}
\newcommand{\forsubmit}[1]{}
\newcommand{\authnote}[2]{{ \footnotesize \bf{\color{DarkRed}[#1's Note:
{\color{DarkBlue}#2}]}}}
\newcommand{\authnote}[2]{}
\newtheorem{theorem}{Theorem}[section]
\newtheorem{remark}[theorem]{Remark}
\newtheorem{lemma}[theorem]{Lemma}
\newtheorem{corollary}[theorem]{Corollary}
\newtheorem{proposition}[theorem]{Proposition}
\theoremstyle{definition}
\newtheorem{assumption}[theorem]{Assumption}
\newtheoremstyle{example_contd}
{\topsep} {\topsep}%
{}% Body font
{}% Indent amount (empty = no indent, \parindent = para indent)
{\bfseries}% Thm head font
{.}% Punctuation after thm head
{1em}% Space after thm head (\newline = linebreak)
{\thmname{#1} \thmnumber{ #2}\thmnote{#3} (continued)}% Thm head spec
\theoremstyle{example_contd}
\newcommand{\chapterref}[1]{\hyperref[ch:#1]{Chapter~\ref{ch:#1}}}
\newcommand{\claimref}[1]{\hyperref[claim:#1]{Claim~\ref{claim:#1}}}
\newcommand{\corollaryref}[1]{\hyperref[cor:#1]{Corollary~\ref{cor:#1}}}
\newcommand{\definitionref}[1]{\hyperref[def:#1]{Definition~\ref{def:#1}}}
\newcommand{\equationref}[1]{\hyperref[eq:#1]{Equation~\ref{eq:#1}}}
\newcommand{\factref}[1]{\hyperref[fact:#1]{Fact~\ref{fact:#1}}}
\newcommand{\figureref}[1]{\hyperref[fig:#1]{Figure~\ref{fig:#1}}}
\newcommand{\tableref}[1]{\hyperref[tab:#1]{Table~\ref{tab:#1}}}
\newcommand{\itemref}[1]{\hyperref[item:#1]{Item~(\ref{item:#1})}}
\newcommand{\lemmaref}[1]{\hyperref[lem:#1]{Lemma~\ref{lem:#1}}}
\newcommand{\propref}[1]{\hyperref[prop:#1]{Proposition~\ref{prop:#1}}}
\newcommand{\propositionref}[1]{\hyperref[prop:#1]{Proposition~\ref{prop:#1}}}
\newcommand{\remarkref}[1]{\hyperref[rem:#1]{Remark~\ref{rem:#1}}}
\newcommand{\sectionref}[1]{\hyperref[sec:#1]{Section~\ref{sec:#1}}}
\newcommand{\theoremref}[1]{\hyperref[thm:#1]{Theorem~\ref{thm:#1}}}
\renewcommand{\hat}{\widehat}
\newcommand{\cB}{{\cal B}}
\newcommand{\cD}{{\cal D}}
\newcommand{\cK}{{\cal K}}
\newcommand{\cL}{{\cal L}}
\newcommand{\cN}{{\cal N}}
\newcommand{\cR}{{\cal R}}
\newcommand{\cS}{{\cal S}}
\newcommand{\cX}{{\cal X}}
\newcommand{\cY}{{\cal Y}}
\newcommand{\cZ}{{\cal Z}}
\renewcommand{\leq}{\leqslant}
\renewcommand{\le}{\leqslant}
\renewcommand{\geq}{\geqslant}
\renewcommand{\ge}{\geqslant}
\newcommand{\ignore}[1]{}
\DeclareMathOperator*{\argmin}{arg\,min}
\renewcommand{\epsilon}{\varepsilon}
\newcommand{\remove}[1]{}
\newcommand{\bE}{\mathbb{E}}
\newcommand{\bN}{\mathbb{N}}
\newcommand{\bP}{\mathbb{P}}
\newcommand{\bR}{{\mathbb R}}
\definecolor{DarkGreen}{rgb}{0.1,0.5,0.1}
\definecolor{DarkRed}{rgb}{0.5,0.1,0.1}
\definecolor{DarkBlue}{rgb}{0.1,0.1,0.5}
\definecolor{Gray}{rgb}{0.2,0.2,0.2}
\title{Statistical Inference under Performativity}
\let\@fnsymbol\@arabic
\author{Xiang Li\textsuperscript{*}\thanks{Independent Researcher},~~~Yunai Li\textsuperscript{*}\thanks{Northwestern University},~~~Huiying Zhong\textsuperscript{*}\thanks{Massachusetts Institute of Technology},~~~Lihua Lei\thanks{Stanford University},~~~ Zhun Deng\thanks{UNC at Chapel Hill; Email: zhundeng@cs.unc.edu}}
\date{}
\begin{document}

\maketitle

\begin{abstract}
  Performativity of predictions refers to the phenomenon where prediction-informed decisions influence the very targets they aim to predict—a dynamic commonly observed in policy-making, social sciences, and economics. In this paper, we \textit{initiate} an end-to-end framework of statistical inference under performativity. Our contributions are twofold. First, we establish a central limit theorem for estimation and inference in the performative setting, enabling standard inferential tasks such as constructing confidence intervals and conducting hypothesis tests in policy-making contexts. Second, we leverage this central limit theorem to study prediction-powered inference (PPI) under performativity. This approach yields more precise estimates and tighter confidence regions for the model parameters (i.e., policies) of interest in performative prediction. We validate the effectiveness of our framework through numerical experiments. To the best of our knowledge, this is the first work to establish a complete statistical inference under performativity, introducing new challenges and inference settings that we believe will provide substantial value to policy-making, statistics, and machine learning. \blfootnote{\textsuperscript{*} for equal contribution, listed in alphabetical order.}

\end{abstract}

\section{Introduction}
Prediction-informed decisions are ubiquitous in nearly all areas and play important roles in our daily lives. An important and commonly observed phenomenon is that prediction-informed decisions can impact the targets they aim to predict, which is called \textit{performativity} of predictions. For instance, policies about loans based on default risk prediction can alter consumption habits of the population that will further have an impact on their ability to pay off their loans. 

To characterize performativity of predictions, a rich line of work on performative prediction \cite{perdomo2020performative} have been formalizing and investigating this idea that predictive models used to support decisions can impact the data-generating process. Mathematically, given a parameterized loss function $\ell$, the aim of performative prediction is to optimize the performative risk:
\begin{equation}\label{eq:pp}
\text{PR}(\theta):=\bE_{z\sim \mathcal{D}(\theta)}\ell(z;\theta)
\end{equation}
where $z=(x,y)\in\cX\times\cY$ is the input and output pair drawn from a distribution $\cD(\theta)$ that is dependent on the loss parameter $\theta$. Typically, $\cD(\theta)$ is unknown and the optimization objective $\text{PR}(\theta)$ can be non-convex even if $\ell(z;\theta)$ is convex in $\theta$. Thus, finding a \textit{performative optimal} point $\theta_{\text{PO}}\in \argmin_{\theta} \text{PR}(\theta)$ (there might exist multiple optimizers due to non-convexity) can be theoretically intractable unless we impose very strong distributional assumptions \citep{Miller2021OutsideTE}. As an alternative, \cite{perdomo2020performative} mainly study how to obtain a \textit{performative stable} point, which satisfies the following relationship: 
$$\theta_{\text{PS}}=\arg\min_{\theta} \bE_{z\sim\cD(\theta_{\text{PS}})}\ell(z;\theta).$$
The performative stable point could be proved unique under some regularity conditions, and it could be shown close to a performative optimal point when the distribution shift between different $\theta$'s is not too dramatic, which makes it a good proxy to $\theta_{\text{PO}}$. In particular, $\theta_{\text{PS}}$ could be considered as a good proxy to the Stackelberg equilibrium in the strategic classification setting. Moreover, it could be calculated in distribution-agnostic settings.

Previous work mainly focuses on prediction performance and convergence rate analysis for performative prediction. On the contrary, another important aspect, \textbf{\textit{inference under performativity}}, eludes the literature. Although a central limit theorem was established in \cite{cutler2024stochastic} for the stochastic gradient update algorithm with a single sample under performativity (a setting that is often impractical in applications such as policy-making), the authors assumed all structural knowledge to be given and did not provide any data-driven methods for estimating the covariance of the various quantities appearing in their central limit theorem. Therefore, \cite{cutler2024stochastic} \textit{\textbf{did not provide a complete statistical inference framework under performativity}}.

However, inference is extremely important in performative prediction because parameter $\theta$ in many scenarios represents a concrete policy, such as a tax rate or credit score cutoff. Thus, when it comes to policy-making, the aim of tackling $\text{PR}(\theta)$ is not just for prediction, but more for obtaining a good policy. As a result, knowing convergence to $\theta_{\text{PS}}$ is not enough, we need to build statistical inference for $\theta_{\text{PS}}$ so as to enable people to report additional critical information like confidence or conduct necessary hypothesis testing. 

\paragraph{Our contributions.} In light of the importance of building statistical inference under performativity, in this work, we build a framework including the following elements. 

(1). As our first contribution, we investigate a widely applied iterative algorithm to calculate $\theta_{\text{PS}}$, i.e., repeated risk minimization (RRM) (see details in Section~\ref{sec:bg}), and establish central limit theorems for the $\hat\theta_t$'s obtained in the RRM process towards $\theta_{\text{PS}}$. Based on that, we are able to obtain the confidence region for $\theta_{\text{PS}}$. Our results could be viewed as generalizing standard statistical inference from a \textit{\textbf{static}} setting to a \textit{\textbf{dynamic}} setting.

(2). As our second contribution, we further leverage the derived central limit theorems to investigate prediction-powered inference (PPI), another recently popular topic in modern statistical learning, under performativity. Our results generalize previous work \citep{angelopoulos2023ppi++} to a dynamic performative setting. This enables us to obtain better estimation and inference for the RRM process and $\theta_{\text{PS}}$. More importantly, our results could also help mitigate \textit{\textbf{data scarcity}} issues in getting feedback about policy implementation that often conducted by doing surveys that frequently encounter non-responses \citep{huang2015insufficient}. Thus, we also contribute to generalizing the line of work on performative prediction by introducing a \textit{\textbf{more data efficient}} algorithm.

To sum up, our work establishes the first end-to-end framework for statistical inference under performativity for the celebrated repeated risk minimization algorithm. Meanwhile, we introduce prediction-powered inference under performativity to enable a more efficient inference. We believe our work would inspire new interesting topics and bring up new challenges to both areas of perforamative prediction and prediction-powered inference, as well as add significant value to policy-making in a broad range of areas such as social science and economics.
\subsection{Related Work}
\paragraph{Performative prediction.} Performativity describes the phenomenon whereby predictions influence the outcomes they aim to predict. \cite{perdomo2020performative} were the first to formalize performative prediction in the supervised‐learning setting; their work, along with the majority of subsequent papers \citep{MendlerDnner2020StochasticOF, Drusvyatskiy2020StochasticOW, Mofakhami2023PerformativePW, Oesterheld2023IncentivizingHP, Taori2022DataFL, Khorsandi2024TightLB, Perdomo2025RevisitingTP,bracale2024microfoundation}, have been focused on performative stability and proposed algorithms for learning performative stable parameters. On the other hand, performative optimality requires much stricter conditions (e.g. distributional assumptions to ensure the convexity of $\text{PR}(\theta)$) than performative stability, a few papers address the problem of finding performative optimal parameters. \cite{Miller2021OutsideTE} introduce a two‐stage method that learns a distribution map to locate the performative optimal parameter. \cite{Kim2022MakingDU} study performative optimality in outcome‐only performative settings. Finally, \cite{Hardt2023PerformativePP} provide a comprehensive overview of learning algorithms, optimization methods, and applications for performative prediction. Unlike these prior works on performative prediction, which focus on prediction accuracy, our work is dedicated to constructing powerful and statistically valid inference procedures under the performative framework.

\paragraph{Prediction-powered inference.} \cite{Angelopoulos2023PredictionpoweredI} first introduced the prediction-powered inference (PPI) framework, which leverages black-box machine learning models to construct valid confidence intervals (CIs) for statistical quantities. Since then, PPI has been extended and applied in various settings. Closely related to our strategies, \cite{angelopoulos2023ppi++} propose PPI++, a more computationally efficient procedure that enhances predictability by accommodating a wider range of models on unlabeled data, while guaranteeing performance (e.g.\ CI width) no worse than that of classical inference methods. Other extensions include Stratified PPI \citep{Fisch2024StratifiedPI}, which incorporates simple data stratification strategies into basic PPI estimates; Cross PPI \citep{Zrnic2023CrosspredictionpoweredI}, which obtains confidence intervals with significantly lower variability by including model training; Bayesian PPI \citep{Hofer2024BayesianPI} and FAB-PPI \citep{Cortinovis2025FABPPIFA}, which propose frameworks for PPI based on Bayesian inference. PPI is also connected to topics such as semi-parametric inference and missing-data imputation \citep{Tsiatis2006SemiparametricTA,Robins1995SemiparametricEI,Demirtas2018FlexibleIO,Song2023AGM}. 
Our work is the first one to study PPI under performativity, and we validate the PPI framework in the performative setting both theoretically and empirically.

\paragraph{Inference in performativity.}\cite{bracale2024microfoundation} study identifiability and estimation error under a specific microfoundation model with performativity. Yet it doesn't address confidence interval construction. A closely related work is \citep{cutler2024stochastic}, which also establishes the asymptotic normality and minimax optimality for performative settings. It focuses on stochastic gradient update with one sample per iteration, whereas our work analyzes the empirical risk minimizer on batch updates. Moreover, \citep{cutler2024stochastic} do not provide a data-driven approach for covariance estimation and thus lacks an end-to-end inference framework. In contrast, our work explicitly handles density estimation and provides an end-to-end inference method for constructing confidence intervals, which is missing in existing literature, to the best of our knowledge.

\section{Background}\label{sec:bg}
In this section, we recap more detailed background knowledge about performative prediction and prediction-powered inference.

%=============================
\paragraph{Repeated risk minimization.} Recall that the main objective of interest is the \textit{performative stable} point, which satisfies the following relationship: 
$$\theta_{\text{PS}}=\arg\min_{\theta} \bE_{z\sim\cD(\theta_{\text{PS}})}\ell(z;\theta).$$

Repeated risk minimization (RRM) is a simple algorithm that can efficiently find $\theta_{\text{PS}}$. Specifically, one starts with an arbitrary $\theta_0$ and repeat the following procedure:
$$\theta_{t+1}=\argmin_{\theta} \bE_{z\sim\cD(\theta_{t})}\ell(z;\theta)$$
for $t\in \bN$. Under some regularity conditions, the above update is well-defined and provably converges to a unique $\theta_{\text{PS}}$ at a linear rate. 

\begin{theorem}[Informal, adopted from \citep{perdomo2020performative}]\label{thm:RRM} If the loss is smooth, strongly convex, and the mapping $\cD(\cdot)$ satisfies certain Lipchitz conditions, then $\theta_{\text{PS}}$ is uniquely defined and repeated risk minimization converges to $\theta_{\text{PS}}$ in a linear rate.
\end{theorem}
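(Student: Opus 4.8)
The plan is to show that the repeated-risk-minimization update is a contraction on $\Theta$ and then invoke the Banach fixed-point theorem. Write $G(\theta) := \argmin_{\phi \in \Theta} \bE_{z \sim \cD(\theta)} \ell(z; \phi)$ for the RRM operator, so that the iteration is $\theta_{t+1} = G(\theta_t)$ and a performative stable point is precisely a fixed point of $G$. Let $\beta$ be the joint-smoothness constant (so that $z \mapsto \nabla_\phi \ell(z;\phi)$ is $\beta$-Lipschitz, not only $\phi \mapsto \nabla_\phi \ell(z;\phi)$), let $\gamma$ be the strong-convexity constant of $\phi \mapsto \ell(z;\phi)$, and let $\epsilon$ be the sensitivity constant, $W_1(\cD(\theta), \cD(\theta')) \le \epsilon \|\theta - \theta'\|$. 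The goal is the bound $\|G(\theta) - G(\theta')\| \le (\epsilon \beta/\gamma)\,\|\theta - \theta'\|$, which makes $G$ a contraction precisely when $\epsilon < \gamma/\beta$.

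To establish this I would start from the first-order optimality conditions for $G(\theta)$ and $G(\theta')$ (or, when $\Theta \subsetneq \R^d$, the corresponding variational inequalities). Strong convexity of $\phi \mapsto \bE_{z\sim\cD(\theta')}\ell(z;\phi)$ gives
$$\gamma\,\|G(\theta)-G(\theta')\|^2 \le \iprod{\nabla\bE_{z\sim\cD(\theta')}\ell(z;G(\theta)) - \nabla\bE_{z\sim\cD(\theta')}\ell(z;G(\theta')),\, G(\theta)-G(\theta')}.$$
Optimality of $G(\theta')$ annihilates the second gradient, and inserting optimality of $G(\theta)$ (which gives $\nabla\bE_{z\sim\cD(\theta)}\ell(z;G(\theta)) = 0$) rewrites the right-hand side as $\iprod{\nabla\bE_{z\sim\cD(\theta')}\ell(z;G(\theta)) - \nabla\bE_{z\sim\cD(\theta)}\ell(z;G(\theta)),\, G(\theta)-G(\theta')}$. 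Cauchy--Schwarz then reduces everything to bounding $\|\nabla\bE_{z\sim\cD(\theta')}\ell(z;\phi) - \nabla\bE_{z\sim\cD(\theta)}\ell(z;\phi)\|$ at $\phi = G(\theta)$. Here I would take an optimal $W_1$-coupling $(Z,Z')$ of $\cD(\theta)$ and $\cD(\theta')$ and apply Jensen's inequality followed by $\beta$-Lipschitzness in $z$ to get $\le \beta\,\bE\|Z-Z'\| = \beta\,W_1(\cD(\theta),\cD(\theta')) \le \epsilon\beta\,\|\theta-\theta'\|$. Dividing through by $\gamma\,\|G(\theta)-G(\theta')\|$ yields the claim.

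With the contraction bound in hand the conclusion is immediate: $\Theta$ is closed, hence complete; $G$ is a self-map of $\Theta$; and $G$ is $(\epsilon\beta/\gamma)$-Lipschitz, so Banach's fixed-point theorem supplies a unique fixed point $\theta_{\text{PS}}$ and the geometric rate $\|\theta_t - \theta_{\text{PS}}\| \le (\epsilon\beta/\gamma)^t\,\|\theta_0 - \theta_{\text{PS}}\|$; uniqueness of the fixed point is exactly uniqueness of the performative stable point. The main obstacle --- really the only place care is needed --- is the step from a difference of expectations under $\cD(\theta)$ versus $\cD(\theta')$ to the Wasserstein distance: this is where joint smoothness must be used in its ``Lipschitz in $z$'' form rather than its ``Lipschitz in $\phi$'' form, and where one must recall that the Kantorovich--Rubinstein/coupling estimate extends to the vector-valued integrand $\nabla_\phi\ell(\cdot;\phi)$ via Jensen's inequality. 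A secondary subtlety is the constrained case $\Theta \subsetneq \R^d$, where the stationarity identities become variational inequalities; the algebra goes through verbatim because strong monotonicity of the gradient map still delivers the quadratic lower bound that drives the argument.
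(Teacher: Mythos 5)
Your proposal is correct and is essentially the argument the paper relies on: the paper states this theorem informally and defers to \cite{perdomo2020performative}, whose proof is exactly your route --- strong monotonicity plus the optimality (variational-inequality) conditions, the coupling/Jensen step turning the gradient gap into $\beta\,W_1(\cD(\theta),\cD(\theta'))\le\epsilon\beta\|\theta-\theta'\|$, and Banach's fixed-point theorem under $\epsilon<\gamma/\beta$. Indeed, the contraction bound $\|G(\theta)-G(\theta')\|\le(\epsilon\beta/\gamma)\|\theta-\theta'\|$ you derive is precisely the inequality the paper later invokes in its consistency proofs.
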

%Moreover, in \cite{perdomo2020performative}, the conditions imposed in Theorem~\ref{thm:RRM} are minimal in the sense that deleting one of those will make the convergence fail.
We will further explicitly state those conditions in Section~\ref{sec:CLT}. Throughout the paper, we will mainly focus on building an inference framework under the repeated risk minimization algorithm.

%=============================
\paragraph{Prediction-powered inference.} A rich line of work on prediction-powered inference (PPI)  \citep{angelopoulos2023ppi++} considers how to combine limited gold-standard labeled data with abundant unlabeled data to obtain more efficient estimation and construct tighter confidence regions for some unknown parameters. Specifically, a general predictive setting is considered in which each instance has an input $x\in\cX$ and an associated observation $y\in\cY$. People have access to a limited set of gold-standard labeled data $\{x_i,y_i\}_{i=1}^n$ that are i.i.d. drawn from a distribution $\cD$. Meanwhile, we have abundant unlabeled data $\{x^u_i\}_{i=1}^N$ that are i.i.d. drawn from the same marginal distribution as gold-standard labeled data, i.e. $\cD_\cX$, where $N\gg n$. In addition, an annotating model $f:\cX \mapsto \cY$ (possibly off-the-shelf and black-box machine learning models) is used to label data\footnote{The annotating function $f$ could either be a stochastic or deterministic function. It could even take other inputs besides $x$, but for simplicity, we only consider the annotation with the form $f(x)$.}. In \citep{angelopoulos2023ppi++}, the authors show that for a convex loss with a unique solution, compared with standard M-estimator $\hat \theta^{\text{SL}}=\argmin_{\theta}  \sum_{i=1}^n\ell(x_i,y_i;\theta)/n$, 
$${\hat \theta}^{\text{PPI}}(\lambda) :=\argmin_{\theta} \lambda \frac{1}{N} \sum_{i=1}^N\ell(x^u_i,f(x^u_i);\theta) +\frac{1}{n} \sum_{i=1}^n\ell(x_i,y_i;\theta)- \lambda \frac{1}{n} \sum_{i=1}^n\ell(x_i,f(x_i);\theta)$$
can be a better estimator of $\theta^*=\argmin_{\theta} \bE_{z\sim\cD} \ell(z;\theta)$ via appropriately chosen $\lambda$ based on data.
 
\noindent\textbf{Notation.} For $K\in \bN_+$, we use $[K]$ to denote $\{1,2,\cdots,K\}$. We use $\xrightarrow{P}$ and $\xrightarrow{D}$ to denote convergence in probability and in distribution, respectively. For two set $\cS$ and $\cS'$, we use $\cS+\cS'$ to denote the set $\{s+s':s\in \cS,s'\in\cS'\}$. $\cN(\mu,\Sigma)$  denotes a Gaussian distribution with mean $\mu$ and covariance matrix $\Sigma$. Lastly, we denote a $k$-dimensional identity matrix as $I_k$. We use $\cB(c,r)$ to denote a ball with center $c$ and radius $r$. Lastly, we use $\|\cdot\|$ for $\ell_2$-norm and $\bm{1}$ to denote a column vector with all coordinates $1$.

\section{Inference under Performativity}\label{sec:CLT}

In this section, we initiate the inference framework for repeated risk minimization in the batch setting under performativity. We mainly consider the repeated risk minimization setting. Unlike standard inference problems, where estimators are built for a \textbf{\textit{fixed}} underlying data distribution, in our \textbf{\textit{dynamic}} setting specified below, building asymptotic results such as CLT imposes extra challenges and this has not been covered by any existing literature so far.

Specifically, at time $t=0$, we have access to a set of labeled data $\{z_{0,i}\}_{i=1}^{n_0}$ that is i.i.d. drawn from a distribution $\cD(\theta_0)$, where $z_{0,i}=(x_{0,i},y_{0,i})$ and $\theta_0$ is chosen by us. Then, we use the empirical repeated risk minimization to output 
$$\hat\theta_1=\argmin_{\theta}\frac{1}{n_0}\sum_{i=1}^{n_0}\ell (z_{0,i};\theta)$$ as an estimator of $\theta_1=\argmin_{\theta}\bE_{z\sim \cD(\theta_0)}\ell(z;\theta).$
Then, for $t\ge 1$, at time $t$, we further have access to a set of labeled data $\{z_{t,i}\}_{i=1}^{n_t}$ that are i.i.d. drawn from the distribution induced by last iteration, i.e., $\cD(\hat\theta_{t})$. Let us further define $G(\tilde \theta)=\argmin_{\theta}\bE_{z\sim \cD(\tilde \theta)}\ell(z;\theta).$
Then, we can obtain the output 
$$\hat \theta_{t+1}=\argmin_{\theta}\frac{1}{n_t}\sum_{i=1}^{n_t}\ell (z_{t,i};\theta)$$
as an estimator of $\theta_{t+1}=G(\theta_t)$. This iterative process will incur two trajectories, i.e., (1) \textit{\textbf{underlying trajectory}}: $\theta_0\rightarrow\theta_1\rightarrow\cdots \theta_t\rightarrow  \cdots $; (2) \textit{\textbf{trajectory in practice}}: $\theta_0\rightarrow\hat\theta_1\rightarrow\cdots \hat\theta_t\rightarrow  \cdots $. 

Our aim is to provide inference on $\hat \theta_t$ for any $t\ge 1$. For simplicity, we let $n_t=n$ for all $t$.

%Our aim is to use $\hat\theta_T$ for a certain $T$ to estimate $\theta_{\text{PS}}$ and to design a statistically principled mechanism $M$ to get a more accurate estimator of $\theta_{\text{PS}}$ i.e., providing tighter confidence region for $\hat\theta_T-\theta_{\text{PS}}$ and achieving smaller $\bE\|\hat\theta_T-\theta_{\text{PS}}\|$.

 %In particular, as we know that the trajectory $\{\theta_t\}_t$ will approximate $\theta_{\text{PS}}$ as $t$ increases, we would expect 

\subsection{Central Limit Theorem of $\hat \theta_t$} \label{sec:clt}
In order to build CLT for $\hat \theta_t$, we first establish the consistency of $\hat\theta_t$, which is relatively straightforward given that \citep{perdomo2020performative} has built the non-asymptotic convergence results. Then, we introduce our main result on building CLT for $\hat\theta_t$. Lastly, we provide a novel method to estimate the variance of $\hat\theta_t$.

\noindent\textbf{Consistency of $\hat\theta_t$.}  We start with proving the consistency of $\hat\theta_t$. Recall that we have a trajectory induced by the samples
$\theta_0\rightarrow\hat\theta_1\rightarrow\cdots \hat\theta_t\rightarrow  \cdots $ by the iterative algorithm deployed. Without consistency, CLT is not expected to hold. Our results are based on the following assumptions.

\begin{assumption}\label{as:pp}
Assume the loss function $\ell$ satisfies: 

(a). (\textit{Local Lipschitzness}) Loss function $\ell(z;\theta)$ is locally Lipschitz: for each $\theta$, there exist a neighborhood $\Upsilon(\theta)$ of $\theta$ such that $\ell(z;\tilde\theta)$ is $L(z)$ Lipschitz  w.r.t $\tilde\theta$ for all $\tilde\theta\in\Upsilon(\theta)$ and $\mathbb{E}_{z\sim\cD(\theta)} L(z)<\infty$.

(b). (\textit{Joint Smoothness}) Loss function $\ell(z;\theta)$ is $\beta$-jointly smooth in both $z$ and $\theta$:
$$\left\|\nabla_{\theta}\ell(z;\theta)-\nabla_{\theta}\ell(z;\theta')\right\|_2\leqslant\beta\left\|\theta-\theta^{\prime}\right\|_2,~\left\|\nabla_{\theta}\ell(z;\theta)-\nabla_{\theta}\ell(z^{\prime};\theta)\right\|_2\leqslant\beta\left\|z-z^{\prime}\right\|_2,$$
for any $z, z'\in\cZ$ and $\theta, \theta'\in\Theta$.

(c). (\textit{Strong Convexity}) Loss function $\ell(z;\theta)$ is $\gamma$-strongly convex w.r.t $\theta$:
$$\ell(z;\theta)\geqslant\ell(z;\theta^{\prime})+\nabla_\theta\ell(z;\theta^{\prime})^\top(\theta-\theta^{\prime})+\frac{\gamma}{2}\left\|\theta-\theta^{\prime}\right\|^2,$$
for any $z \in \mathcal{Z}$ and $\theta, \theta^{\prime}$.

(d). (\textit{$\varepsilon$-Sensitivity}) The distribution map $D(\theta)$ is $\varepsilon$-sensitive, i.e.:
$$W_1\left(\mathcal{D}(\theta),\mathcal{D}(\theta^{\prime})\right)\leqslant\varepsilon\|\theta-\theta^{\prime}\|,$$
for any $\theta, \theta^{\prime}$, where $W_1$ is the Wasserstein-1 distance.

\end{assumption}
\begin{remark}
The assumptions (b), (c), (d) follow the standard ones in \citep{perdomo2020performative}, which are proved to be the minimal requirements for trajectory convergence.  We additionally require (a) to build consistency for $\hat\theta_t$ beyond convergence of $\hat\theta_t$ to $\theta_{\text{PS}}$.
\end{remark}

\begin{proposition}\label{prop:consistency}
Under Assumption~\ref{as:pp}, if $\varepsilon<\frac{\gamma}{\beta}$, then for any given $T\ge 0$, we have that for all $t\in[T]$,
$$\hat\theta_t\xrightarrow{P} \theta_{t}.$$
\end{proposition}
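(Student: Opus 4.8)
The plan is to prove the claim by induction on $t$, using the non-asymptotic convergence results of \cite{perdomo2020performative} together with a uniform law of large numbers argument to pass from the empirical risk minimizer to its population counterpart at each step. The key structural observation is that consistency is a statement about two kinds of error: the statistical error between $\hat\theta_{t+1}=\argmin_\theta \frac{1}{n}\sum_i \ell(z_{t,i};\theta)$ and $G(\hat\theta_t)$ (the minimizer of the true risk under $\cD(\hat\theta_t)$), and the propagation error from $G(\hat\theta_t)$ to $G(\theta_t)=\theta_{t+1}$. The latter is controlled by the fact that, under Assumption~\ref{as:pp} with $\varepsilon<\gamma/\beta$, the map $G$ is a contraction (this is exactly the mechanism behind \theoremref{RRM}): $\|G(\tilde\theta)-G(\tilde\theta')\|_2 \leq \frac{\varepsilon\beta}{\gamma}\|\tilde\theta-\tilde\theta'\|_2$. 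So if $\hat\theta_t \xrightarrow{P}\theta_t$, then $G(\hat\theta_t)\xrightarrow{P} G(\theta_t)=\theta_{t+1}$ by the continuous mapping theorem.

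The base case $t=1$: here the data $\{z_{0,i}\}$ are genuinely i.i.d.\ from the fixed distribution $\cD(\theta_0)$, so $\hat\theta_1$ is a standard M-estimator and its consistency for $\theta_1=G(\theta_0)$ follows from classical M-estimation theory — strong convexity (Assumption~\ref{as:pp}(c)) gives a well-separated unique minimizer, and local Lipschitzness (Assumption~\ref{as:pp}(a)) plus joint smoothness give the uniform convergence of the empirical risk to the population risk needed to invoke the argmax/argmin consistency theorem (e.g.\ van der Vaart, Theorem 5.7). For the inductive step, the subtlety is that conditional on $\hat\theta_t$, the samples $\{z_{t,i}\}_{i=1}^n$ are i.i.d.\ from $\cD(\hat\theta_t)$, but $\hat\theta_t$ is itself random. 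I would handle this by proving a \emph{uniform-in-$\tilde\theta$} version of the M-estimation consistency: show that $\sup_{\tilde\theta}\| \argmin_\theta \frac1n\sum_i \ell(z_i;\theta) - G(\tilde\theta)\|_2 \xrightarrow{P} 0$ where the $z_i$ are drawn from $\cD(\tilde\theta)$, or more precisely establish an appropriate stochastic equicontinuity / uniform LLN over the parameter range (using joint smoothness in $z$ and $\varepsilon$-sensitivity to control how the relevant function classes vary with $\tilde\theta$), so that plugging in the random $\hat\theta_t$ — which concentrates near $\theta_t$ by the inductive hypothesis — is legitimate. Combining the statistical error bound (which $\to 0$) with the propagation bound $\|G(\hat\theta_t)-\theta_{t+1}\|_2 \leq \frac{\varepsilon\beta}{\gamma}\|\hat\theta_t-\theta_t\|_2 \xrightarrow{P} 0$ and the triangle inequality yields $\hat\theta_{t+1}\xrightarrow{P}\theta_{t+1}$, closing the induction. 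Since $T$ is fixed and finite, no uniformity in $t$ is needed and the finitely many error terms can each be driven to zero.

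The main obstacle is the conditioning issue in the inductive step: one cannot directly apply a fixed-distribution LLN because the data distribution $\cD(\hat\theta_t)$ at stage $t$ depends on the random estimate from the previous stage. The cleanest route is to first prove that the "oracle" estimator $G_n(\tilde\theta) := \argmin_\theta \frac1n\sum_i \ell(z_i;\theta)$ with $z_i \sim \cD(\tilde\theta)$ satisfies $G_n(\tilde\theta)-G(\tilde\theta)\xrightarrow{P}0$ uniformly over $\tilde\theta$ in a neighborhood of the trajectory (a uniform law of large numbers using the local Lipschitz envelope and joint smoothness to get a finite bracketing/covering bound, plus the strong convexity to turn uniform convergence of risks into convergence of minimizers), and then note $\hat\theta_{t+1} = G_n(\hat\theta_t)$ with $\hat\theta_t$ concentrating near $\theta_t$; a standard "random-index into a uniformly-convergent process" argument then gives $\hat\theta_{t+1}\xrightarrow{P} G(\theta_t)=\theta_{t+1}$. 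A slightly softer alternative that avoids full uniformity is to condition on $\hat\theta_t$, apply the fixed-distribution consistency for each realized value, and then integrate out using dominated convergence together with the inductive hypothesis; either way, carefully tracking the dependence structure across rounds is where the real work lies, while the contraction property of $G$ makes the error-propagation part essentially automatic.
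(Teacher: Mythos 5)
Your proposal is correct and follows essentially the same route as the paper: the paper also splits the error at each step into the statistical error $\|G(\hat\theta_{t-1})-\hat G(\hat\theta_{t-1})\|$ (handled, conditionally on $\hat\theta_{t-1}$, by a local-Lipschitz uniform LLN on a small ball around $G(\hat\theta_{t-1})$ plus strong convexity to rule out minimizers outside that ball) and the propagation error controlled by the contraction $\|G(\theta)-G(\theta')\|\le \frac{\varepsilon\beta}{\gamma}\|\theta-\theta'\|$, then unrolls the recursion $\|\hat\theta_t-\theta_t\|\le\sum_{i=0}^{t}\left(\frac{\varepsilon\beta}{\gamma}\right)^{t-i}\|G(\hat\theta_i)-\hat G(\hat\theta_i)\|$, which is your induction in closed form. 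The conditioning subtlety you flag is handled in the paper exactly in the spirit of your ``softer alternative'' (fixed-distribution consistency conditional on the previous iterate), so there is no substantive difference in approach.
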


\paragraph{Building CLT for $\hat \theta_t$.} In order to build the central limit theorem for $\hat \theta_t$, we need to introduce a few extra assumptions. Due to limited space, going forward, we defer the required assumptions in later theorems to Appendix.

Let us denote
$\Sigma_{\tilde{\theta}}(\theta)=H_{\tilde{\theta}}(\theta)^{-1} V_{\tilde{\theta}}(\theta) H_{\tilde{\theta}}(\theta)^{-1}$, where $H_{\tilde\theta}(\theta)=\nabla^2_\theta\bE_{z\sim \cD(\tilde\theta)}\ell(z;\theta)$ and $V_{\tilde{\theta}}(\theta)=\text{Cov}_{z\sim\cD(\tilde\theta)}\big(\nabla_\theta\ell(z;\theta)\big)$. And recall that $G(\tilde \theta)=\argmin_{\theta}\bE_{z\sim \cD(\tilde \theta)}\ell(z;\theta).$

%\begin{assumption}\label{as:clt}
%\zhun{specify conditions later}
%\end{assumption}

\begin{theorem}[Central Limit Theorem of $\hat\theta_t$]\label{thm:clt}
Under Assumption~\ref{as:pp} and \ref{app:as1}, if $\varepsilon<\frac{\gamma}{\beta}$, then for any given $T\ge 0$, we have that for all $t\in[T]$,
$$\sqrt{n} (\hat{\theta}_t - \theta_t) \overset{D}{\to} \cN\left(0, V_t\right)$$
with 
$$V_t=\sum_{i=1}^t \left[\prod_{k=i}^{t-1}\nabla G(\theta_{k})\right] \Sigma_{\theta_{i-1}}(\theta_i) \left[\prod_{k=i}^{t-1}\nabla G(\theta_{k})\right]^\top.$$
In particular, 
$ \nabla G(\theta_{k})
    = -H_{\theta_{k}}(\theta_{k+1})^{-1}\big( \nabla_{\tilde{\theta}}\bE_{z\sim\mathcal{D}(\theta_{k})}\nabla_{\theta}\ell(z;\theta_{k+1}) \big)$, where $\nabla_{\tilde\theta}$ is taking gradient for the parameter in $\cD(\tilde\theta)$, $\nabla_{\theta}$ is taking gradient for the parameter in $\ell(z;\theta)$ and $\prod_{k=t}^{t-1}\nabla G(\theta_{k})=I_d$.
\end{theorem}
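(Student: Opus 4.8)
The plan is to prove the CLT by induction on $t$, treating the empirical RRM as a perturbed deterministic iteration. For the base case $t=1$, $\hat\theta_1$ is a standard M-estimator for i.i.d. data from the fixed distribution $\cD(\theta_0)$, so the classical M-estimation CLT (which under Assumption~\ref{as:pp} plus the regularity in Assumption~\ref{app:as1} gives asymptotic normality of $\argmin$ of an empirical average of a strongly convex smooth loss) yields $\sqrt n(\hat\theta_1-\theta_1)\overset{D}{\to}\cN(0,\Sigma_{\theta_0}(\theta_1))$, which matches the claimed $V_1$. For the inductive step, the key decomposition is
$$\sqrt n(\hat\theta_{t+1}-\theta_{t+1}) = \sqrt n\big(\hat\theta_{t+1}-G(\hat\theta_t)\big) + \sqrt n\big(G(\hat\theta_t)-G(\theta_t)\big).$$
The first term is, conditionally on $\hat\theta_t$, again an M-estimator CLT but now for data drawn from the \emph{random} distribution $\cD(\hat\theta_t)$; since $\hat\theta_t\xrightarrow{P}\theta_t$ by Proposition~\ref{prop:consistency} and $\tilde\theta\mapsto\Sigma_{\tilde\theta}(G(\tilde\theta))$ is continuous (from the smoothness assumptions), this term converges in distribution to $\cN(0,\Sigma_{\theta_t}(\theta_{t+1}))$. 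The second term is handled by the delta method: $G$ is differentiable (this is where Assumption~\ref{app:as1} is needed, together with the implicit function theorem applied to the first-order condition $\nabla_\theta\bE_{z\sim\cD(\tilde\theta)}\ell(z;G(\tilde\theta))=0$, which also yields the stated formula for $\nabla G$), so $\sqrt n(G(\hat\theta_t)-G(\theta_t)) = \nabla G(\theta_t)\sqrt n(\hat\theta_t-\theta_t) + o_P(1)$, and by the inductive hypothesis this converges to $\cN(0,\nabla G(\theta_t)V_t\nabla G(\theta_t)^\top)$.

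The remaining — and most delicate — point is the \emph{asymptotic independence} of these two terms. The randomness in the first term, $\sqrt n(\hat\theta_{t+1}-G(\hat\theta_t))$, comes from the fresh sample $\{z_{t,i}\}$ drawn at stage $t$, while the second term depends only on $\hat\theta_t$, i.e. on samples from stages $0,\dots,t-1$. Conditionally on the whole history $\cF_t = \sigma(\hat\theta_1,\dots,\hat\theta_t)$, the first term is asymptotically $\cN(0,\Sigma_{\theta_t}(\theta_{t+1}))$ \emph{regardless of} the realized history (by consistency and continuity of the limiting covariance), so its limit law does not depend on $\cF_t$; this is precisely the kind of conditional-CLT / stable-convergence statement that lets one conclude joint convergence of $(\text{term }1,\ \sqrt n(\hat\theta_t-\theta_t))$ to a product of independent Gaussians. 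I would make this rigorous via characteristic functions: show $\E[\exp(i s^\top \sqrt n(\hat\theta_{t+1}-G(\hat\theta_t))) \mid \cF_t] \to \exp(-\tfrac12 s^\top\Sigma_{\theta_t}(\theta_{t+1})s)$ in probability, multiply by $\exp(i r^\top\sqrt n(\hat\theta_t-\theta_t))$, take expectations, and use dominated convergence together with the inductive hypothesis on the $\cF_t$-measurable factor. Adding the two independent limits gives covariance $\Sigma_{\theta_t}(\theta_{t+1}) + \nabla G(\theta_t)V_t\nabla G(\theta_t)^\top$, and unwinding the recursion $V_{t+1} = \Sigma_{\theta_t}(\theta_{t+1}) + \nabla G(\theta_t) V_t \nabla G(\theta_t)^\top$ produces exactly the stated telescoping-product formula for $V_{t+1}$.

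Two technical wrinkles need care along the way. First, to apply the conditional M-estimation CLT uniformly over the (random) data-generating parameter $\hat\theta_t$, I would want a \emph{locally uniform} version of the CLT — e.g. showing that the map $\tilde\theta\mapsto(\text{law of }\sqrt n(\hat\theta^{(n)}(\tilde\theta)-G(\tilde\theta)))$ converges to $\cN(0,\Sigma_{\tilde\theta}(G(\tilde\theta)))$ uniformly for $\tilde\theta$ in a neighborhood of $\theta_t$; this is where joint smoothness in $z$ and $\theta$ (Assumption~\ref{as:pp}(b)) and the $\varepsilon$-sensitivity (so that $\cD(\tilde\theta)$ varies continuously in $W_1$) do the work, controlling the relevant empirical-process and covariance terms. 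Second, the differentiability of $G$ and the validity of the delta-method expansion require that $G(\tilde\theta)$ stay in the interior of $\Theta$ near $\theta_t$ and that $H_{\theta_k}(\theta_{k+1})$ be invertible with $\nabla_{\tilde\theta}\nabla_\theta$ well-defined — all consequences of strong convexity and the smoothness assumptions collected in Assumption~\ref{app:as1}. I expect the \textbf{asymptotic-independence argument} to be the main obstacle, since it is the one genuinely new ingredient relative to both the static M-estimation CLT and the deterministic RRM convergence analysis; everything else is a careful but routine combination of known tools.
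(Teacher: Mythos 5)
Your proposal follows essentially the same route as the paper's proof: the same decomposition of $\sqrt{n}(\hat{\theta}_{t}-\theta_{t})$ into the fresh-sample fluctuation $\sqrt{n}(\hat{\theta}_{t}-G(\hat{\theta}_{t-1}))$ plus the drift $\sqrt{n}(G(\hat{\theta}_{t-1})-G(\theta_{t-1}))$, a conditional M-estimation CLT (empirical-process expansion) for the first term, differentiability of $G$ via the implicit function theorem giving the stated formula for $\nabla G$, a characteristic-function argument to combine the two sources of randomness, and the recursion $V_{t}=\Sigma_{\theta_{t-1}}(\theta_{t})+\nabla G(\theta_{t-1})V_{t-1}\nabla G(\theta_{t-1})^{\top}$ unwound into the product formula. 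The only difference is in the mechanics of the joint-convergence step: you linearize $G$ by the delta method and invoke a stable-convergence/conditional characteristic-function argument, while the paper keeps the drift inside the conditional mean and computes the marginal characteristic function explicitly by Fourier inversion of the density of $\sqrt{n}(\hat{\theta}_{t-1}-\theta_{t-1})$ (this is where Assumption~\ref{app:as1}(b) is used), applying dominated convergence together with $\sqrt{n}\bigl(G(u/\sqrt{n}+\theta_{t-1})-G(\theta_{t-1})\bigr)\to\nabla G(\theta_{t-1})u$.
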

\paragraph{Estimation of $\nabla G(\theta_t)$, $V_t$}
%\paragraph{Estimation of $\nabla G(\theta_t)$, $V_t$.} 
Given the established CLT for $\hat\theta_t$, in order to construct confidence regions for $\hat\theta_t$ in practice, the only thing left is to provide an estimation of $\nabla G(\theta_t)$ and $V_t$ with samples. In previous results, with a more detailed calculation, we obtain 
$$\nabla G(\theta_{k})=-H_{\theta_{k}}(\theta_{k+1})^{-1}\bE_{z\sim\mathcal{D}(\theta_{k})}[\nabla_\theta\ell(z,\theta_{k+1})\nabla_{\theta}\operatorname{log}p(z,\theta_{k})^\top].$$
where $p(\cdot,\theta)$ is the density function of distribution $D(\theta)$, and the score function $\nabla_{\theta}\operatorname{log}p(z,\theta)$ is thus a $d$-dimensional vector (recall that $\theta$ is of dimension $d$). In order to estimate it for any $\theta$, we propose a novel score matching method. Specifically, we use a model $M(z,\theta;\psi)$ parameterized by $\psi$ to approximate $p(z,\theta)$. Inspired by the objective in \citep{hyvarinen2005estimation}, for any given $\theta$ (e.g., $\hat\theta_t$), we aim to minimize the following objective parameterized by $\psi$ \textit{\textbf{for all}} $\theta$:
\begin{align*}
J(\theta;\psi)&=\int p(z,\theta)\|\nabla_\theta \log p(z,\theta)-s(z,\theta;\psi)\|^2dz\\
&= \int p(z,\theta)\Big(\|\nabla_\theta \log p(z,\theta)\|^2+\|s(z,\theta;\psi)\|^2-2 \nabla_\theta \log p(z,\theta)^\top s(z,\theta;\psi)\Big)dz
\end{align*}
where $s(z,\theta;\psi)=\nabla_\theta\log M(z,\theta;\psi)$. If we can learn a $\hat\psi$ so that $s(z,\theta;\hat\psi)=\nabla_\theta \log p(z,\theta)$ for all $\theta$, then we can reach the minimum $J(\theta;\hat\psi)=0$.

Notice the first term is unrelated to $\psi$; the second term involves the model $M$ that is chosen by us, so we have the analytical expression of it. Thus, our key task will be estimating the third term, which involves
$\cK(\theta;\psi):=\int p(z,\theta)\nabla_\theta \log p(z,\theta)^\top s(z,\theta;\psi)dz$.

%A more detailed calculation gives:  
%$$\int p(z,\theta)\nabla_\theta \log p(z,\theta)^\top s(z,\theta;\eta)dz=\sum_{i=1}^d\int\frac{\partial p(z,\theta)}{\partial \theta^{(i)}}\cdot\frac{\partial \log M(z,\theta;\psi)}{\partial \theta^{(i)}}dz$$
We remark that in our setting, instead of taking the gradient at $z$, we have new challenges in taking the gradient at $\theta$. So, we derive the following key lemma.

\begin{lemma}\label{lm:K}
Under Assumption~\ref{as:exchange}, we have
\begin{align*}
\cK(\theta;\psi)
%=\sum_{i=1}^d\int\frac{\partial p(z,\theta)}{\partial \theta^{(i)}}\cdot\frac{\partial \log M(z,\theta;\psi)}{\partial \theta^{(i)}}dz\\
=\sum_{i=1}^d\left[ \frac{\partial}{\partial \theta^{(i)}}\int p(z,\theta)\frac{\partial \log M(z,\theta;\psi)}{\partial \theta^{(i)}} dz-\int p(z,\theta)\frac{\partial^2 \log M(z,\theta;\psi)}{\partial \theta^{(i)2}}dz\right]
\end{align*}
where $\theta^{(i)}$ is the $i$-th coordinate of $\theta$. %and $s^{(i)}(z,\theta;\psi)=\partial s(z,\theta;\psi)/\partial \theta^{(i)}$. 
\end{lemma}

Based on the lemma, we propose a novel gradient-free score matching method with \textit{policy perturbation} to estimate $\cK(\hat\theta_t;\psi)$ for any $t\in [T]$. Policy perturbation is a commonly used technique in estimating the policy effect under general equilibrium shift \citep{munro2021treatment} or interference \citep{viviano2024policy}.
% \zhun{@Lihua, could you mention some good examples in policy-making with citations? }
Instead of just getting samples for $\hat\theta_t$ for each $t$, we additionally sample for all perturbed policies in $\{\hat\theta_t+\eta e_1,  \hat\theta_t+\eta e_2,\cdots, \hat\theta_t+\eta e_d\}$, where $\eta > 0$ is a small scalar at our choice and $\{e_j\}_j$ are standard basis for $\bR^d$. Typically, for a policy $\theta$, its dimension $d$ is low. One concrete example in practice is to use slightly different price strategies in different local markets. 

Specifically, the term $\int p(z,\hat{\theta}_t)\frac{\partial^2 \log M(z,\hat\theta_t;\psi)}{\partial \theta^{(i)2}}dz$ could be easily estimated by using empirical mean, e.g., $\frac{1}{n}\sum_{j=1}^n \frac{\partial^2 \log M(z_{t,j},\hat\theta_t;\psi)}{\partial \theta^{(i)2}}$. And for the derivative $\frac{\partial}{\partial \theta^{(i)}}\int p(z_{t,j},\hat{\theta}_t)\frac{\partial \log M(z,\hat\theta_t;\psi)}{\partial \theta^{(i)}} dz$, if we draw additional $k$ samples $\{z^{(i)}_{t,u}\}_{u=1}^k$ for each perturbed policy $\hat\theta_t+\eta e_i$, we can use the following estimator:
$$\frac{1}{\eta} \left(\frac{1}{k}\sum_{u=1}^k \frac{\partial \log M(z^{(i)}_{t,u},\hat\theta_t+\eta e_i;\psi)}{\partial \theta^{(i)}}-\frac{1}{n}\sum_{u=1}^n \frac{\partial \log M(z_{t,u},\hat\theta_t;\psi)}{\partial \theta^{(i)}}\right).$$
Combining the above, we have a straightforward way to estimate $G(\theta_t)$ and $V_t$ for any $t\in[T]$ by plugging in the empirical estimate. Let us denote the estimator of $V_t$ by $\hat V_t$. Then, we would have 
\begin{align}\label{eq:clt}
\hat{V}_t^{-1/2}\sqrt{n} (\hat{\theta}_t - \theta_t) \overset{D}{\to} \cN\left(0, I_d\right)
\end{align}
if the model $M(z,\theta;\psi)$ is expressive enough.

\paragraph{Theoretical validity of estimation.}
Now we prove that the policy perturbation method provides a valid estimator of $\nabla G(\theta_{k})$. 
Recall that the gradient of $G$ is given by
\begin{equation*}
    \nabla G(\theta_{k})=-H_{\theta_{k}}(\theta_{k+1})^{-1}\bE_{\theta_{k}}[\nabla_\theta\ell(z;\theta_{k+1})\nabla_{\theta}\operatorname{log}p(z,\theta_{k})^\top],
\end{equation*}
and the estimator is defined by 
\begin{equation*}
     \hat{g}_{k} := -H_{\hat{\theta}_{k}}(\hat{\theta}_{k+1})^{-1}\Hat{\mathbb{E}}_{\hat{\theta}_{k}}\big[\nabla_{\theta}\ell(z;\hat{\theta}_{k+1})s(z,\hat{\theta}_{k};\hat{\psi}(\hat{\theta}_{k}))^{\top}\big].
\end{equation*}

\begin{theorem}\label{thm:discuss2}
    Under Assumption \ref{as:pp} \ref{app:as1}, \ref{app:discussas1} and \ref{app:discussas2}, we have
    \begin{equation*}
        \|\hat{g}_{k}-\nabla {G}(\theta_{k})\|^{2} = O_{p}(\frac{1}{\sqrt{n}}+\frac{1}{\eta\sqrt{\min(n,k)}}+\eta+a_{n}).
    \end{equation*}
where $a_{n}=o(1)$ is a vanishing optimization error term defined in Assumption~\ref{app:discussas1}(d).
\end{theorem}
%\huicomment{do we ignore a square term in the statement and proof when using Theorem~\ref{thm:discuss1}???}
The proof is by decomposing the error between the empirical and true gradient into several components, where each varies in whether it uses empirical or population quantities. We bound the deviations between the true parameters $\theta_k$, $\theta_{k+1}$ and their empirical counterparts $\hat{\theta}_k$, $\hat{\theta}_{k+1}$ using Theorem~\ref{thm:clt}. We bound the true score function and its estimation via a uniform bound of $\|\nabla \log p(z,\theta)-s(z,\theta,\psi({\theta}))\|^{2}$ over the perturbed policies using Theorem~\ref{thm:discuss1}. 
Taken together, the above result shows the consistency of $\hat{g}_{k}$ obtained by policy perturbation.

\subsection{Bias-Aware Inference for Performative Stable Point} \label{sec:clt}

Finally, we further provide a way to construct the confidence region for $\theta_{\text{PS}}$. This is directly followed from our previous results on building CLT for $\hat\theta_t$. By the convergence results derived for the underlying trajectory by \citep{perdomo2020performative}, under Assumption~\ref{as:pp}, we have 
$$\|\theta_t-\theta_{\text{PS}}\|\le \left( \frac{\varepsilon \beta}{\gamma}\right)^{t}\|\theta_0-\theta_{\text{PS}}\|.$$
Thus, we can immediately obtain the following corollary by using bias-aware inference -- a commonly seen technique in econometrics \citep{armstrong2018optimal, armstrong2020bias, imbens2019optimized, noack2019bias}. 
\begin{corollary}[Confidence region construction for $\theta_{\text{PS}}$]\label{col:ps}
Under Assumption~\ref{as:pp}, \ref{app:as1}, \ref{as:exchange}, and \ref{as:M}, if $\varepsilon<\frac{\gamma}{\beta}$, for any $\delta\in(0,1)$, we can obtain a confidence region $\hat{\cR}_t(n,\delta)$ for $\theta_t$ by using Eq.~\ref{eq:clt}, such that
$$\lim_{n\rightarrow \infty}\bP\left(\theta_t \in\hat{\cR}_t(n,\delta) \right)= 1-\delta.$$
Moreover, if $\theta_0,\theta_{\text{PS}}\in \{\theta:\|\theta\|\le B\}$,
$$\lim_{n\rightarrow \infty}\bP\Big(\theta_\text{PS} \in\hat{\cR}_t(n,\delta)+\cB\Big(0,2B\big(\frac{\varepsilon\beta}{\gamma}\big)^{t}\Big) \Big)\ge 1-\delta.$$
\end{corollary}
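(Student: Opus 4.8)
The plan is to turn the pivotal limit of Eq.~\eqref{eq:clt} into an explicit confidence set and then add the deterministic bias of the underlying trajectory. Concretely, I would take the $(1-\delta)$ confidence region to be the (random) ellipsoid
$$\hat{\cR}_t(n,\delta)\;=\;\Set{\theta\in\bR^d:\; n\,(\hat\theta_t-\theta)^\top\hat V_t^{-1}(\hat\theta_t-\theta)\le q_{d,1-\delta}},$$
where $q_{d,1-\delta}$ is the $(1-\delta)$-quantile of the $\chi^2_d$ law and $\hat V_t$ is the plug-in estimator of $V_t$ built from the gradient-free score-matching-with-policy-perturbation procedure described above. By Theorem~\ref{thm:clt} and the consistency of that estimator (i.e.\ Eq.~\eqref{eq:clt}), $\hat V_t^{-1/2}\sqrt n(\hat\theta_t-\theta_t)\xrightarrow{D}\cN(0,I_d)$; applying the continuous mapping theorem to $u\mapsto\|u\|_2^2$ gives $n\,(\hat\theta_t-\theta_t)^\top\hat V_t^{-1}(\hat\theta_t-\theta_t)\xrightarrow{D}\chi^2_d$, and since the $\chi^2_d$ c.d.f.\ is continuous at $q_{d,1-\delta}$ we conclude $\lim_{n\to\infty}\bP(\theta_t\in\hat{\cR}_t(n,\delta))=1-\delta$, which is the first assertion.

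For the second assertion I would use the contraction estimate for the \emph{underlying} trajectory from \cite{perdomo2020performative}, $\|\theta_t-\theta_{\text{PS}}\|\le(\varepsilon\beta/\gamma)^t\|\theta_0-\theta_{\text{PS}}\|$, valid since $\varepsilon<\gamma/\beta$; combined with $\Theta\subseteq\{\|\theta\|\le B\}\ni\theta_0,\theta_{\text{PS}}$ and the triangle inequality this yields $\theta_{\text{PS}}-\theta_t\in\cB(0,2B(\varepsilon\beta/\gamma)^t)$ with probability one. Writing $\theta_{\text{PS}}=\theta_t+(\theta_{\text{PS}}-\theta_t)$ then shows the \emph{sure} event inclusion $\{\theta_t\in\hat{\cR}_t(n,\delta)\}\subseteq\{\theta_{\text{PS}}\in\hat{\cR}_t(n,\delta)+\cB(0,2B(\varepsilon\beta/\gamma)^t)\}$, since the right-hand set is exactly the Minkowski sum of the ellipsoid with the bias ball. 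Taking probabilities and letting $n\to\infty$ via the first assertion gives $\liminf_{n\to\infty}\bP(\theta_{\text{PS}}\in\hat{\cR}_t(n,\delta)+\cB(0,2B(\varepsilon\beta/\gamma)^t))\ge1-\delta$; the inequality rather than equality is expected, as enlarging the region by the bias ball can only increase coverage.

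The logic internal to the corollary is therefore light — a single application of the continuous mapping theorem plus one deterministic set inclusion — so the real obstacle sits one level down, in certifying Eq.~\eqref{eq:clt}: one must show that $\hat V_t$ is consistent for $V_t$ and that $V_t\succ0$ so the ellipsoid is non-degenerate. This is where Assumptions~\ref{app:as1}, \ref{as:exchange}, \ref{as:M} and the expressiveness of the model $M(z,\theta;\psi)$ come in, and it decomposes into consistent estimation, along the estimated iterates $\hat\theta_0,\dots,\hat\theta_t$, of each factor $\nabla G(\theta_k)$, $H_{\theta_{i-1}}(\theta_i)$ and $V_{\theta_{i-1}}(\theta_i)$ appearing in $V_t$, followed by a Slutsky/composition argument. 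A minor technical point there is that the finite-difference score estimator requires the perturbation radius $\eta$ and the auxiliary sample size $k$ to be taken in a consistent regime (e.g.\ coupled to $n$ so that $\eta\to0$, $k\to\infty$, and all approximation and sampling errors are $o_P(1)$).
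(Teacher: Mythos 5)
Your proposal is correct and follows essentially the same route as the paper, which treats the corollary as an immediate consequence of Eq.~\ref{eq:clt} (asymptotically exact region for $\theta_t$) combined with the contraction bound $\|\theta_t-\theta_{\text{PS}}\|\le(\varepsilon\beta/\gamma)^t\|\theta_0-\theta_{\text{PS}}\|\le 2B(\varepsilon\beta/\gamma)^t$ and the Minkowski-sum enlargement. Your added remarks on the chi-square pivot construction and on deferring the real work to the consistency of $\hat V_t$ (Slutsky plus Assumptions~\ref{app:as1}, \ref{as:exchange}, \ref{as:M}) are consistent with how the paper handles those points.
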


Corollary~\ref{col:ps} provides a way to construct the confidence region for the performative stable point based on the confidence region for $\theta_t$. Notice that the derived new confidence region is quite close to $\hat{\cR}_t(n,\delta)$ and the difference vanishes exponentially fast as $t$ grows. Thus, we expect the derived region to be quite tight for moderately large $t$, meaning: %$\lim_{n\rightarrow \infty}\bP\big(\theta_\text{PS} \in\hat{\cR}_t(n,\delta)+\cB\big(0,2B\big(\varepsilon\beta/\gamma\big)^{t}\big) \big)\approx 1-\delta.$
$$\lim_{n\rightarrow \infty}\bP\Big(\theta_\text{PS} \in\hat{\cR}_t(n,\delta)+\cB\Big(0,2B\big(\frac{\varepsilon\beta}{\gamma}\big)^{t}\Big) \Big)\approx 1-\delta.$$

For the condition $\theta_0,\theta_{\text{PS}}\in \{\theta:\|\theta\|\le B\}$, it will be natural to satisfy and we can get an explicit and feasible upper bound $B$ under mild conditions. It is because that $\theta_0$ is at our choice and we can further derive an explicit and feasible upper bound for $\|\theta_{\text{PS}}\|$ by using the strong convexity. Specifically, by $\gamma$-strong convexity of the loss function with respect to $\theta$, we have 
$$\big(\bE_{z\sim\cD(\theta_{\text{PS}})}\nabla_\theta\ell(z;\theta_{0})-\bE_{z\sim\cD(\theta_{\text{PS}})}\nabla_\theta\ell(z;\theta_{\text{PS}})\big)^\top (\theta_0-\theta_{\text{PS}})\ge \gamma\left\|\theta_0-\theta_{\text{PS}}\right\|^2.$$
Since $\bE_{z\sim\cD(\theta_{\text{PS}})}\nabla_\theta\ell(z;\theta_{\text{PS}})=0$, this leads to
$$\|\bE_{z\sim\cD(\theta_{\text{PS}})}\nabla_\theta\ell(z;\theta_{0})\|\ge \gamma\left\|\theta_0-\theta_{\text{PS}}\right\|.$$
Thus, if we further have $\sup_{z\in\cZ}\|\nabla_\theta\ell(z;\theta_{0})\|\le \tilde B$ for $\tilde B$>0, which could be achieved and calculated by assuming $\cZ$ is compact and use the continuity of $\nabla_\theta\ell(\cdot,\theta_0)$. Then, it will immediately give us an upper bound for $\|\theta_{\text{PS}}\|$ that $\|\theta_{\text{PS}}\|\le \tilde B/\gamma+\|\theta_0\|$.

\paragraph{Comparison with \citep{perdomo2020performative}.} \cite{perdomo2020performative} consider the repeated empirical risk minimizer and achieves the following finite sample guarantee: with a sufficient number of iterations $T$, with probability $1-\delta$,
$$\|\hat{\theta}_T - \theta_{\text{PS}}\|\leq C (\log(\frac{1}{\delta}))^{\frac{1}{m}} n^{-\frac{1}{m}},$$ 
where $C$ is an absolute constant, and $m \geq 2$ is the dimension of $z=(x, y)$. In contrast, our $1-\delta$ confidence interval is defined as
$\hat{\cR}_t(n,\delta)=[\hat{\theta}_{T,j} \pm \frac{z_{1-\delta/(2d)}}{\sqrt{n}}\hat{V}_{T,j}]_{j=1}^d$, where $z_{1-\delta/(2d)}$ is the $1-\delta/(2d)$ quantile of the standard normal distribution (used for Bonferroni correction), and $\hat{V}_{T,j}$ denotes the $(j,j)$ entry of the estimated covariance. Our construction satisfies: with probability $1-\delta$,
$$|\hat{\theta}_{T,j} -\theta_{T,j} |\leq C (\log(\frac{d}{\delta}))^{\frac{1}{2}} n^{-\frac{1}{2}} \quad \text{for any } j \in [d],$$
where $C$ is an absolute constant.
Compared to the $O(n^{-\frac{1}{m}})$ width by \citep{perdomo2020performative}, our interval achieves a better rate $O(n^{-\frac{1}{2}})$ when $m>2$ (ignoring the exponentially small term in the bias-aware adjustment). Thus, our proposed inference method provides a narrower confidence interval and improved statistical efficiency.

\section{Prediction-Powered Inference under Performativity}\label{sec:ppi}
In this section, we further investigate prediction-powered inference (PPI) under performativity to enhance estimation
and obtain improved confidence regions for the model parameter (i.e., policy) under performativity. This can also address the data scarcity issue in human responses when doing a survey to get feedback on policy implementation.

Specifically, at time $t=0$, besides the limited set of gold-standard labeled data $\{x_{0,i},y_{0,i}\}_{i=1}^{n_0}$ that are i.i.d. drawn from a distribution $\cD(\theta_0)$, we have abundant unlabeled data $\{x^{u}_{0,i}\}_{i=1}^{N_0}$ that are i.i.d. drawn from the same marginal distribution as gold-standard labeled data, i.e. $\cD_\cX(\theta_0)$, where $N_0\gg n_0$. In addition, an annotating model $f:\cX \mapsto \cY$ is used to label data~\footnote{Our theory can easily be extended to allow using different annotating function for each iteration, but for simplicity in presentation, we use $f$ for all iterations.}, which leads to $\{x_{0,i},f(x_{0,i})\}_{i=1}^{n_0}$ and $\{x^u_{0,i},f(x^u_{0,i})\}_{i=1}^{N_0}$. Then, we use the following mechanism to output
$${\hat \theta}^{\text{PPI}}_1(\lambda_1)=\argmin_{\theta} \frac{\lambda_1}{N} \sum_{i=1}^N\ell(x^u_{0,i},f(x^u_{0,i});\theta) +\frac{1}{n} \sum_{i=1}^n\big(\ell(x_{0,i},y_{0,i};\theta)-\lambda_{1}\ell(x_{0,i},f(x_{0,i});\theta)\big)$$
for a scalar $\lambda_1$ as an estimator of $\theta_1.$ After that, for $t\ge 1$, at time $t$, besides having access to the set of gold-standard labeled data $\{x_{t,i},y_{t,i}\}_{i=1}^{n_t}$ that are i.i.d. drawn from a distribution $\cD(\hat\theta_{t})$. Meanwhile, we have abundant unlabeled data $\{x^{u}_{t,i}\}_{i=1}^{N_{t}}$ that are i.i.d. drawn from the same marginal distribution as gold-standard labeled data, i.e. $\cD_\cX(\hat\theta_{t})$, where $N_t\gg n_t$. Similar as before, we can estimate $\theta_{t+1}$ via
$${\hat \theta}^{\text{PPI}}_{t+1}(\lambda_{t+1})=\argmin_{\theta} \frac{\lambda_{t+1}}{N} \sum_{i=1}^N\ell(x^u_{t,i},f(x^u_{t,i});\theta) +\frac{1}{n} \sum_{i=1}^n\big(\ell(x_{t,i},y_{t,i};\theta)-\lambda_{t+1}\ell(x_{t,i},f(x_{t,i});\theta)\big)$$
for a scalar $\lambda_{t+1}$.  This incurs a trajectory in practice: $\theta_0\rightarrow\hat\theta^{\text{PPI}}_1(\lambda_1)\rightarrow\cdots \hat\theta^{\text{PPI}}_t(\lambda_t)\rightarrow  \cdots $. Notice that if we choose $\lambda_t=0$ for all $t\in[T]$, this will degenerate to the case in Section~\ref{sec:CLT}. Later on, we will demonstrate how to choose $\{\lambda_t\}_{t=1}^T$ via data to enhance inference. Our mechanism could be adaptive to the data quality with carefully chosen $\{\lambda_t\}_{t=1}^T$ and could be viewed as an extension of the classical PPI++ mechanism \citep{perdomo2020performative} to the setting under performativity.

\paragraph{Building CLT for $\hat\theta^{\text{PPI}}_t(\lambda_t)$.} We start with building the central limit theorem for $\hat\theta^{\text{PPI}}_t(\lambda_t)$ with fixed constant scalars $\{\lambda_t\}_{t=1}^T$ for any $t\in[T]$. The proof is similar to that of Theorem~\ref{thm:clt}. Specifically, we denote
$$\Sigma_{\lambda,\tilde{\theta}}(\theta;r)=H_{\tilde{\theta}}(\theta)^{-1} \left( rV_{\lambda,\tilde{\theta}}^f(\theta) + V_{\lambda,\tilde{\theta}}(\theta)\right) H_{\tilde{\theta}}(\theta)^{-1}$$ with $V_{\lambda,\tilde{\theta}}^f(\theta) =\lambda^2 \operatorname{Cov}_{x\sim\mathcal{D}_\cX(\tilde{\theta})}(\nabla_{\theta} \ell(x,f(x);\theta))$ and $V_{\lambda,\tilde{\theta}}(\theta)=\operatorname{Cov}_{(x,y)\sim\mathcal{D}(\tilde{\theta})}(\nabla_{\theta} \ell(x,y;\theta)-\lambda \nabla_{\theta}  \ell(x,f(x);\theta)).$ Then, we have the following theorem.
\begin{theorem}[Central Limit Theorem of $\hat\theta^{\text{PPI}}_t(\lambda_t)$]\label{thm:ppiclt}
Under Assumption~\ref{as:pp}, \ref{as:ppiaddition}, and \ref{app:ppias1}, if $\varepsilon<\frac{\gamma}{\beta}$ and $\frac{n}{N}\rightarrow r$ for some $r\ge 0$, then for any given $T\ge 0$, we have that for all $t\in[T]$,
$$\sqrt{n} \big(\hat\theta^{\text{PPI}}_t(\lambda_t) - \theta_t\big) \overset{D}{\to} \cN\Big(0, V^{\text{PPI}}_t\big(\{\lambda_j,\theta_j\}_{j=1}^t;r\big)\Big)$$
with 
$$V^{\text{PPI}}_t\big(\{\lambda_j,\theta_j\}_{j=1}^t;r\big)=\sum_{i=1}^t \left[\prod_{k=i}^{t-1}\nabla G(\theta_{k})\right] \Sigma_{\lambda_{i},\theta_{i-1}}(\theta_i;r) \left[\prod_{k=i}^{t-1}\nabla G(\theta_{k})\right]^\top.$$
\end{theorem}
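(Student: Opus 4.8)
The plan is to mirror the proof of Theorem~\ref{thm:clt}, proceeding by induction on $t$, the only new ingredient being that at each step the empirical risk minimizer is replaced by the PPI Z-estimator. The first point to record is that the \emph{population} objective underlying $\hat\theta^{\text{PPI}}_{t}(\lambda_t)$ is unchanged: since the unlabeled covariates share the marginal $\cD_\cX(\tilde\theta)$ of the labeled data, the population version of $\frac{\lambda_t}{N}\sum_i\ell(x^u_{t-1,i},f(x^u_{t-1,i});\theta)+\frac1n\sum_i\big(\ell(x_{t-1,i},y_{t-1,i};\theta)-\lambda_t\ell(x_{t-1,i},f(x_{t-1,i});\theta)\big)$ equals $\bE_{z\sim\cD(\tilde\theta)}\ell(z;\theta)$ for every fixed $\tilde\theta$, so its minimizer is still $G(\tilde\theta)$. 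Consequently, consistency $\hat\theta^{\text{PPI}}_t(\lambda_t)\xrightarrow{P}\theta_t$ follows exactly as in Proposition~\ref{prop:consistency}: the PPI correction terms are $O_P(n^{-1/2})$ uniformly on $\Theta$ by Assumption~\ref{as:pp}(a)--(b), and the strong-convexity and $\varepsilon$-sensitivity contraction argument of \cite{perdomo2020performative} propagates consistency along the trajectory.

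For the base case $t=1$ the data $\{z_{0,i}\}$ are i.i.d. from the fixed distribution $\cD(\theta_0)$ and $\hat\theta^{\text{PPI}}_1(\lambda_1)$ solves a PPI estimating equation of the PPI++ type, so standard Z-estimator asymptotics (equivalently, the PPI++ CLT of \cite{angelopoulos2023ppi++}) gives $\sqrt n(\hat\theta^{\text{PPI}}_1(\lambda_1)-\theta_1)\xrightarrow{D}\cN(0,\Sigma_{\lambda_1,\theta_0}(\theta_1;r))$: the labeled- and unlabeled-data scores are independent, the rescaled unlabeled score has variance $\to r\lambda_1^2\Cov_{x\sim\cD_\cX(\theta_0)}(\nabla_\theta\ell(x,f(x);\theta_1))=rV^f_{\lambda_1,\theta_0}(\theta_1)$ because $n/N\to r$, the labeled score has variance $V_{\lambda_1,\theta_0}(\theta_1)$, and the Hessian $H_{\theta_0}(\theta_1)$ of the common population objective supplies the sandwiching.

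For the inductive step, assume $\sqrt n(\hat\theta^{\text{PPI}}_{t-1}(\lambda_{t-1})-\theta_{t-1})\xrightarrow{D}\cN(0,V^{\text{PPI}}_{t-1})$ and abbreviate $\hat\theta_{t-1}=\hat\theta^{\text{PPI}}_{t-1}(\lambda_{t-1})$. Using $\theta_t=G(\theta_{t-1})$, decompose
$$\sqrt n\big(\hat\theta^{\text{PPI}}_t(\lambda_t)-\theta_t\big)=\underbrace{\sqrt n\big(\hat\theta^{\text{PPI}}_t(\lambda_t)-G(\hat\theta_{t-1})\big)}_{(\mathrm{I})}+\underbrace{\sqrt n\big(G(\hat\theta_{t-1})-G(\theta_{t-1})\big)}_{(\mathrm{II})}.$$
Conditionally on $\hat\theta_{t-1}$, term $(\mathrm{I})$ is a PPI Z-estimator centered at the minimizer $G(\hat\theta_{t-1})$ of the population objective for $\cD(\hat\theta_{t-1})$, so the base-case argument applied along the converging sequence $\cD(\hat\theta_{t-1})\to\cD(\theta_{t-1})$ makes it asymptotically $\cN(0,\Sigma_{\lambda_t,\hat\theta_{t-1}}(G(\hat\theta_{t-1});r))$, and by consistency of $\hat\theta_{t-1}$ and continuity of $\tilde\theta\mapsto\Sigma_{\lambda_t,\tilde\theta}(G(\tilde\theta);r)$ this covariance converges to $\Sigma_{\lambda_t,\theta_{t-1}}(\theta_t;r)$. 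Term $(\mathrm{II})$ is handled by the delta method: differentiability of $G$ with continuous derivative near $\theta_{t-1}$ (from Assumption~\ref{app:ppias1}, with $\nabla G$ as in Theorem~\ref{thm:clt}) gives $(\mathrm{II})=\nabla G(\theta_{t-1})\sqrt n(\hat\theta_{t-1}-\theta_{t-1})+o_P(1)\xrightarrow{D}\cN\big(0,\nabla G(\theta_{t-1})V^{\text{PPI}}_{t-1}\nabla G(\theta_{t-1})^\top\big)$. Since $(\mathrm{I})$, conditionally on the history, is asymptotically Gaussian with covariance converging to the \emph{deterministic} limit $\Sigma_{\lambda_t,\theta_{t-1}}(\theta_t;r)$ regardless of the realized history, while $(\mathrm{II})$ is a function of the history alone, the two terms are asymptotically independent; adding the limiting Gaussians yields $\sqrt n(\hat\theta^{\text{PPI}}_t(\lambda_t)-\theta_t)\xrightarrow{D}\cN\big(0,\Sigma_{\lambda_t,\theta_{t-1}}(\theta_t;r)+\nabla G(\theta_{t-1})V^{\text{PPI}}_{t-1}\nabla G(\theta_{t-1})^\top\big)$, and unrolling this recursion in $t$ reproduces the claimed closed form for $V^{\text{PPI}}_t$.

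The main obstacle is making the conditional argument for term $(\mathrm{I})$ rigorous: the time-$t$ sample is drawn from the \emph{random} distribution $\cD(\hat\theta_{t-1})$, so one needs a conditional CLT along a data-dependent, converging sequence of sampling distributions, with Lindeberg/moment conditions checked uniformly over this sequence. This is where the joint smoothness in $z$ and $\theta$ (Assumption~\ref{as:pp}(b)) does the work, letting us compare the time-$t$ scores under $\cD(\hat\theta_{t-1})$ with those under $\cD(\theta_{t-1})$ and control the resulting triangular array, and where the regularity in Assumptions~\ref{as:ppiaddition} and~\ref{app:ppias1} is needed to ensure smoothness of $G$, of $H_{\tilde\theta}(\cdot)$, and of the covariance maps so that the plug-in limits and the delta-method expansion are valid. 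The PPI correction itself introduces no difficulty beyond the static analysis of \cite{angelopoulos2023ppi++}, precisely because it leaves the population objective, and hence $G$, $H$, and the contraction geometry, untouched.
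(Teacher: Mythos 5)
Your proposal is correct in substance and shares the paper's skeleton: the same decomposition $\sqrt n(\hat\theta^{\text{PPI}}_t-\theta_t)=\sqrt n\big(\hat\theta^{\text{PPI}}_t-G(\hat\theta^{\text{PPI}}_{t-1})\big)+\sqrt n\big(G(\hat\theta^{\text{PPI}}_{t-1})-G(\theta_{t-1})\big)$, the same observation that the PPI correction leaves the population objective (hence $G$, $H_{\tilde\theta}$, and the contraction geometry) untouched, the same conditional-on-history analysis of the first term via empirical-process/Z-estimator arguments (the paper invokes Lemma 19.31 and Corollary 5.53 of \cite{van2000asymptotic} to get the conditional limit $\cN\big(0,\Sigma_{\lambda_t,\hat\theta^{\text{PPI}}_{t-1}}(G(\hat\theta^{\text{PPI}}_{t-1});r)\big)$), and the same one-step recursion $V^{\text{PPI}}_t=\nabla G(\theta_{t-1})V^{\text{PPI}}_{t-1}\nabla G(\theta_{t-1})^\top+\Sigma_{\lambda_t,\theta_{t-1}}(\theta_t;r)$ unrolled by induction. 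Where you genuinely diverge is the conditional-to-marginal step. The paper carries out an explicit characteristic-function computation: it Fourier-inverts the law of $U_{t-1}=\sqrt n(\hat\theta^{\text{PPI}}_{t-1}-\theta_{t-1})$, integrates the conditional characteristic function against it, and passes to the limit by dominated convergence, with the delta-method ingredient appearing only as the pointwise limit $\sqrt n\big(G(u/\sqrt n+\theta_{t-1})-G(\theta_{t-1})\big)\to\nabla G(\theta_{t-1})u$ inside the integral; this is precisely what Assumption~\ref{app:ppias1}(b) (existence of a density and an absolutely integrable characteristic function for the estimator) is there to license, an assumption your route never uses. You instead apply the delta method to the second term and assert asymptotic independence of the two terms because the conditional limit of the first term is deterministic. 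That is the right idea, but as stated it is the one soft spot: summing two marginal Gaussian limits requires \emph{joint} convergence, so you should phrase the conditional CLT as stable (mixing) convergence -- the conditional limit law not depending on the realized history implies joint convergence with any history-measurable sequence, after which the sum converges as claimed. Your closing paragraph correctly identifies the remaining technical burden (a conditional CLT for a triangular array drawn from the random distribution $\cD(\hat\theta^{\text{PPI}}_{t-1})$, with uniformity supplied by joint smoothness and the local Lipschitz conditions), which is the same burden the paper's Step 1 carries; so the two proofs differ mainly in machinery -- your stable-convergence/delta-method route is shorter and avoids Assumption~\ref{app:ppias1}(b), while the paper's characteristic-function route makes the averaging over the history explicit at the price of that extra regularity assumption.
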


\paragraph{Selection of parameters $\{\lambda_t\}_{t=1}^T$.} Now, the only thing left is to select $\{\lambda_t\}_{t=1}^T$, so as to enhance estimation and inference. As choosing $\lambda_t=0$ for all $t\in[T]$ will degenerate to Theorem~\ref{thm:clt}, we expect that we appropriately choose $\{\lambda_t\}_{t=1}^T$ to make $F(V_t)\ge F(V^{\text{PPI}}_t\big(\{\lambda_j,\theta_j\}_{j=1}^t;r\big))$, where $F$ is a user-specified scalarization operator depending on different aims. For instance, if we are interested in optimizing the sum of asymptotic variance of coordinates of $\theta_t$, then, $F(V_t)=\text{Tr}(V_t)$. Or if we are interested in the inference of the sum of all coordinates $\theta_t$, i.e., $\bm{1}^\top \theta_t$, then $F(V_t)=\bm{1}^\top V_t \bm{1}$. 

We consider a greedy sequential selection mechanism as follows. Imagine that we have selected $\{ \lambda^*_i\}_{i=1}^{t-1}$ via the data, and our aim is to select a $\lambda^*_t$ so as to make $F\left(V^{\text{PPI}}_t\big(\{\lambda^*_j,\theta_j\}_{j=1}^{t-1},\lambda,\theta_t;r\big)\right)\ge F\left(V^{\text{PPI}}_t\big(\{\lambda^*_j,\theta_j\}_{j=1}^t;r\big)\right)$ for any $\lambda$. Thus, we choose 
\begin{align}\label{eq:lambda}
\lambda^*_t=\argmin_\lambda F\Big(V^{\text{PPI}}_t\big(\{\lambda^*_j,\theta_j\}_{j=1}^{t-1},\lambda,\theta_t;r\big)\Big).
\end{align}
However, in practice, there are still several issues that need to be addressed. First, we need to choose $\{\lambda_t\}_{t=1}^T$ via observations, and this could be handled by using our results in Section~\ref{sec:CLT} to estimate $\nabla_\theta G(\theta)$, and we obtain a sample version $\hat\lambda_t$ by plugging in the estimation. Second, when obtaining $\lambda^*_t$ in Eq.~\ref{eq:lambda}, we actually need $\theta_t$ in $\Sigma_{\lambda_{i},\theta_{i-1}}(\theta_i;r)$. But when using samples to estimate, we need to get $\hat\lambda_t$ first before we obtain $\hat\theta^{\text{PPI}}_t$. Thus, we propose a similar optimizing strategy as inspired by \cite{angelopoulos2023ppi++}: at time $t\ge 1$, given the obtained $\{\hat\lambda_i\}_{i=1}^{t-1}$ and $\{\hat\theta^{\text{PPI}}_i\}_{i=1}^{t-1}$, we choose an arbitrary $\tilde\lambda$, to obtain ${\hat\theta}^{\text{PPI}}_{t}(\tilde\lambda)$
%$${\hat\theta}^{\text{PPI}}_{t}(\lambda)=\argmin_{\theta\in\Theta} \frac{\lambda}{N} \sum_{i=1}^N\ell(x^u_{t-1,i},f(x^u_{t-1,i});\theta) +\frac{1}{n} \sum_{i=1}^n\big(\ell(x_{t-1,i},y_{t-1,i};\theta)-\lambda\ell(x_{t-1,i},f(x_{t-1,i});\theta)\big)$$
as a temporary surrogate \footnote{Notice that ${\hat\theta}^{\text{PPI}}_{t}(\tilde\lambda)$ is still a consistent estimator of $\theta_t$}. Then, we further obtain
%$$F\left(\sum_{i=1}^{t-1} \left[\prod_{k=i}^{t-1}\nabla \hat G(\hat\theta^{\text{PPI}}_{k})\right] \Sigma_{\hat\lambda_{i},\hat\theta^{\text{PPI}}_{i-1}(\hat\lambda_{i-1})}(\hat\theta_i(\hat\lambda_i);\frac{n}{N}) \left[\prod_{k=i}^{t-1}\nabla \hat G(\hat\theta^{\text{PPI}}_{k})\right]^\top+
%\Sigma_{\lambda,\hat\theta^{\text{PPI}}_{t-1}(\hat\lambda_{t-1})}(\hat\theta_t(\lambda);\frac{n}{N}) \right).$$
\begin{align*}
\hat\lambda_t=\argmin_\lambda F\Big(\hat{V}^{\text{PPI}}_t\big(\{\hat\lambda_j,\hat\theta^{\text{PPI}}_j(\hat\lambda_j)\}_{j=1}^{t-1},\lambda,\hat\theta^{\text{PPI}}_t(\tilde\lambda);\frac{n}{N}\big)\Big),
\end{align*}
where $\hat V^{\text{PPI}}_t$ is obtained via replacing $\nabla_\theta G(\theta_k)$ with their estimation in $V^{\text{PPI}}_t$. In particular, if $F$ satisfies $F(U+V)=F(U)+F(V)$ as our examples of $F(V_t)=\text{Tr}(V_t)$ and $F(V_t)=\bm{1}^\top V_t \bm{1}$, then we only need to optimize $F\left(\Sigma_{\lambda,\hat\theta^{\text{PPI}}_{t-1}(\hat\lambda_{t-1})}(\hat\theta_t(\tilde\lambda);\frac{n}{N}) \right)$

To sum up, by the above process, we have the following corollary.
\begin{corollary}\label{col:ppiclt}
Under Assumption~\ref{as:pp}, \ref{as:exchange}, \ref{as:M}, \ref{as:ppiaddition}, and \ref{app:ppias1}, if $\varepsilon<\frac{\gamma}{\beta}$ and $\frac{n}{N}\rightarrow r$ for some $r\ge 0$, then for any given $T\ge 0$, we have that for all $t\in[T]$,
$$\left(\hat{V}^{\text{PPI}}_t\Big(\{\hat\lambda_j,\hat\theta^{\text{PPI}}_j(\hat\lambda_j)\}_{j=1}^t;\frac{n}{N}\Big)\right)^{-\frac{1}{2}}\sqrt{n} \big(\hat\theta^{\text{PPI}}_t(\hat\lambda_t) - \theta_t\big) \overset{D}{\to} \cN(0, I_d).$$
\iffalse
Thus, for any $\delta\in(0,1)$, we can obtain a confidence region $\hat{\cR}^{\text{PPI}}_t(n,\delta)$ for $\theta_t$ by using the above CLT, such that
$$\lim_{n\rightarrow \infty}\bP\left(\theta_t \in\hat{\cR}^{\text{PPI}}_t(n,\delta) \right)= 1-\delta.$$
Moreover, if $\Theta\subseteq \{\theta:\|\theta\|\le B\}$,
$$\lim_{n\rightarrow \infty}\bP\Big(\theta_\text{PS} \in\hat{\cR}^{\text{PPI}}_t(n,\delta)+\cB\Big(0,2B\big(\frac{\varepsilon\beta}{\gamma}\big)^{t}\Big) \Big)\ge 1-\delta.$$
\fi
\end{corollary}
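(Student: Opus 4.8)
The plan is to deduce Corollary~\ref{col:ppiclt} from Theorem~\ref{thm:ppiclt} in three moves: (i) show that the data-driven weights $\hat\lambda_t$ converge in probability to the deterministic greedy-optimal weights $\lambda^*_t$ given by Eq.~\ref{eq:lambda}; (ii) show that replacing $\lambda^*_t$ by $\hat\lambda_t$ does not change the limiting distribution of $\sqrt n(\hat\theta^{\text{PPI}}_t-\theta_t)$; and (iii) show the plug-in variance estimator $\hat V^{\text{PPI}}_t$ is consistent, so that a Slutsky / continuous-mapping argument yields the self-normalized statement. All three are carried out by a single induction on $t\in[T]$ whose hypothesis bundles: $\hat\theta^{\text{PPI}}_j(\hat\lambda_j)\xrightarrow{P}\theta_j$ and the surrogate $\hat\theta^{\text{PPI}}_j(\tilde\lambda)\xrightarrow{P}\theta_j$; $\hat\lambda_j\xrightarrow{P}\lambda^*_j$; and $\sqrt n(\hat\theta^{\text{PPI}}_j(\hat\lambda_j)-\theta_j)\xrightarrow{D}\cN\big(0,V^{\text{PPI}}_j(\{\lambda^*_k,\theta_k\}_{k=1}^j;r)\big)$. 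The base case is the PPI analogue of Proposition~\ref{prop:consistency} together with Theorem~\ref{thm:ppiclt} at a fixed weight.

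First I would record consistency of the estimators at any fixed or consistently estimated weight. Because the labelled and unlabelled annotation terms share the marginal $\cD_\cX$, the population PPI objective at a given stage equals $\bE_{z\sim\cD(\text{previous iterate})}\ell(z;\theta)$ plus terms of vanishing expectation, so its minimizer is still the corresponding $\theta_t$; the argument of Proposition~\ref{prop:consistency} then goes through verbatim (using the inductive hypothesis on the previous iterate and $\varepsilon<\gamma/\beta$), giving consistency of $\hat\theta^{\text{PPI}}_t(\tilde\lambda)$ and, once $\hat\lambda_t$ is controlled, of $\hat\theta^{\text{PPI}}_t(\hat\lambda_t)$. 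Next, $\hat V^{\text{PPI}}_t$ is consistent as a function of its arguments: the Hessian and covariance pieces of $\Sigma_{\lambda,\tilde\theta}(\theta;r)$ are empirical averages over the stage-$t$ samples and the policy-perturbed samples, hence consistent by the law of large numbers when evaluated at consistent estimates of the $\theta_k$ with $n/N$ substituted; the factors $\nabla G(\theta_k)$ are supplied by the gradient-free score-matching procedure of Section~\ref{sec:CLT}, consistent under Assumptions~\ref{as:exchange} and~\ref{as:M}. For the scalarizations of interest (those that are additive, or convex and monotone in the PSD order, as with $\mathrm{Tr}(\cdot)$ or $\bm 1^\top\!\cdot\bm 1$), the map $\lambda\mapsto F(V^{\text{PPI}}_t(\ldots,\lambda,\ldots))$ is a strictly convex quadratic whose unique minimizer is a continuous function of the matrices defining it; combining this continuity with the already-established convergences and the inductive hypothesis $\hat\lambda_j\xrightarrow{P}\lambda^*_j$ for $j<t$, the continuous-mapping theorem gives $\hat\lambda_t\xrightarrow{P}\lambda^*_t$.

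The second ingredient follows from the same linearization that underlies Theorem~\ref{thm:ppiclt}. Conditioning on the previous iterate, one expands the first-order optimality condition defining $\hat\theta^{\text{PPI}}_t(\lambda)$ around $\theta_t$ and around the random perturbation (previous iterate minus its target), obtaining
$$\sqrt n\big(\hat\theta^{\text{PPI}}_t(\lambda)-\theta_t\big)=-H^{-1}\sqrt n\,(a_n-\lambda b_n)+\nabla G(\cdot)\sqrt n\big(\hat\theta^{\text{PPI}}_{t-1}-\theta_{t-1}\big)+o_P(1),$$
where $a_n$ is the empirical gradient of $\ell(\cdot;\theta_t)$ on the labelled sample and $b_n$ is the labelled-minus-unlabelled empirical gradient of $\ell(\cdot,f(\cdot);\theta_t)$. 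Since these two annotation gradients have the same conditional mean, $b_n$ has conditional mean zero and $\sqrt n\,b_n=O_P(1)$; consequently $\sqrt n(\hat\theta^{\text{PPI}}_t(\hat\lambda_t)-\theta_t)-\sqrt n(\hat\theta^{\text{PPI}}_t(\lambda^*_t)-\theta_t)=H^{-1}(\hat\lambda_t-\lambda^*_t)\sqrt n\,b_n=o_P(1)\cdot O_P(1)=o_P(1)$. Theorem~\ref{thm:ppiclt} applied with the deterministic weights $\lambda^*_j$ then gives $\sqrt n(\hat\theta^{\text{PPI}}_t(\hat\lambda_t)-\theta_t)\xrightarrow{D}\cN(0,V^{\text{PPI}}_t(\{\lambda^*_j,\theta_j\}_{j=1}^t;r))$, closing the CLT induction. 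Finally, $\hat V^{\text{PPI}}_t$ evaluated at $(\{\hat\lambda_j,\hat\theta^{\text{PPI}}_j(\hat\lambda_j)\}_{j=1}^t,n/N)\xrightarrow{P}V^{\text{PPI}}_t(\{\lambda^*_j,\theta_j\}_{j=1}^t;r)$, which is positive definite under the regularity assumptions (strong convexity makes the Hessians invertible and the relevant covariances nondegenerate), so $M\mapsto M^{-1/2}$ is continuous there and Slutsky's theorem yields $(\hat V^{\text{PPI}}_t)^{-1/2}\sqrt n(\hat\theta^{\text{PPI}}_t(\hat\lambda_t)-\theta_t)\xrightarrow{D}\cN(0,I_d)$.

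The main obstacle is the tangle of dependencies introduced by the sequential, data-driven choice of $\hat\lambda_t$: the surrogate $\hat\theta^{\text{PPI}}_t(\tilde\lambda)$, the plug-in estimates of $\nabla G$, and the earlier weights $\hat\lambda_j$ all enter $\hat\lambda_t$, forcing the consistency and asymptotic-normality inductions to be run jointly; moreover the Taylor remainder in the linearization must be shown to be $o_P(1)$ uniformly over $\lambda$ in a shrinking neighborhood of $\lambda^*_t$ (and over the surrogate's neighborhood), which is exactly what legitimizes substituting the random $\hat\lambda_t$. A secondary point is passing from the conditional CLT — conditionally on the previous iterate the stage-$t$ data are i.i.d. from $\cD(\text{previous iterate})$ — to the unconditional statement with the telescoping covariance, where the $\nabla G$ Jacobian factors and the contraction $\varepsilon<\gamma/\beta$ appear; this machinery, however, is already available from Theorems~\ref{thm:clt} and~\ref{thm:ppiclt} and is invoked rather than redone.
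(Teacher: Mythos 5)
Your proposal is correct and follows essentially the route the paper intends for this corollary: the fixed-weight CLT of Theorem~\ref{thm:ppiclt}, consistency of the plug-in variance estimator (score matching for $\nabla G$ plus the law of large numbers for the Hessian and covariance pieces), and Slutsky's theorem — indeed the paper states the corollary as a direct consequence of "the above process" and gives no separate proof. Where you go beyond the paper's implicit argument is in explicitly establishing $\hat\lambda_t \xrightarrow{P} \lambda^*_t$ (via strict convexity in $\lambda$ and continuous mapping) and the linearization showing that substituting the random weight costs only $(\hat\lambda_t-\lambda^*_t)\cdot O_P(1)=o_P(1)$ because the labelled-minus-unlabelled annotation gradient has conditional mean zero; these steps are genuinely needed for the statement as written, and your treatment of them is sound and consistent with the PPI++-style reasoning the paper inherits.
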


\begin{figure}[htbp]
    \centering
    \includegraphics[width=\linewidth]{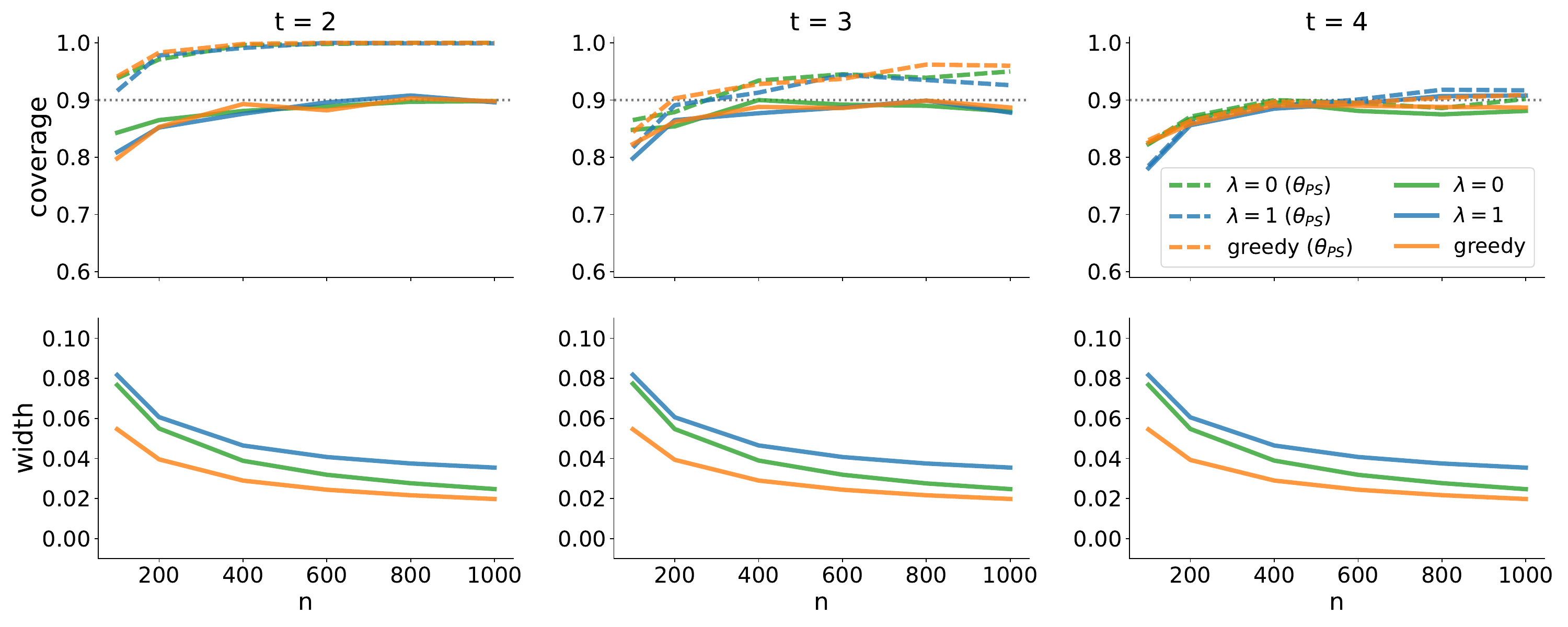}
    \caption{Confidence-region coverage (top row) and width (bottom row) with different choices of $\lambda$. The left, middle, and right columns correspond to inference steps $t=2$, $t=3$, and $t=4$, respectively. The solid and dashed curves correspond to the confidence-region coverage for $\theta_t$ and $\theta_{\rm PS}$, respectively. }
    \label{fig:CI_1}
\end{figure}
\section{Experiment}
\label{sec:exp}
In this section, we further provide numerical experimental results to support our previous theory. A case study on a semi-synthetic dataset is provided in Section~\ref{sec:case}.

\paragraph{Experimental setting.} We follow~\citep{Miller2021OutsideTE} to construct simulation studies on a performative linear regression problem. 
Given a parameter $\theta\in\mathbb{R}^d$, data are sampled from $D(\theta)$ as 
\begin{align*}
    y=\alpha^\top x+\mu^\top\theta+\nu,\  x\sim\mathcal{N}(\mu_x,\Sigma_x),\  \nu\sim\mathcal{N}(0,\sigma_y^2).
\end{align*}
The distribution map $D(\theta)$ is $\varepsilon$-sensitive with $\varepsilon=\|\mu\|_2$. For unlabeled $x^u_i$, the annotating model is defined as $f(x^u_i)=\alpha^\top x^u_i+\mu^\top\theta+\nu_i, \nu_i\sim\mathcal{N}(-0.2,\sigma_y^2)$. We use the ridge squared loss to measure the performance and update $\theta$: $\ell((x,y);\theta)=\frac{1}{2}(y-\theta^\top x)^2+\frac{\gamma}{2}\|\theta\|^2$. For easier calculation for smoothness parameter, we truncate the the distribution of $(x,y)$ in our experiment to deal with truncated normal distributions, but this is not necessary in many other choices of updating rules. In the following experiments, we set $d=2$, $\varepsilon\approx 0.02$, $\gamma=2$, and $\sigma_y^2=0.2$. We set $N=2000$ and vary the labeled sample size $n$. 
% We provide results with other choices of $\varepsilon$, $\gamma$, and $\sigma_y^2$ in the appendix~\ref{}.

\paragraph{Simulation results for PPI under performativity.} To quantify the results of PPI under performativity, we evaluate the confidence‐region coverage and width for three strategies: $\lambda=0$ (only labeled data), $\lambda=1$ (full unlabeled data weight), and our optimization method $\lambda=\hat{\lambda}_t$ as defined in Eq.\ref{eq:lambda}. 
 We vary the labeled sample size $n$ and perform $t\in\{2,3,4\}$ repeated risk minimization steps, averaging results over 1000 independent trials.
 In Figure~\ref{fig:CI_1}, we can find that all three methods approach 0.9 coverage as $n$ grows, while our optimized $\hat{\lambda}_t$ (orange) achieves the narrowest interval width, supporting its effectiveness to enhance the performative inference. 
 The dashed curves denote the bias‐adjusted confidence regions for the performative stable point $\theta_{\rm PS}$.
 It can be observed that $\theta_{\rm PS}$ coverages upper-bound that of $\theta_t$ (solid curves) across steps $t$, and the gap between them vanishes as $t$ grows. 
 This observation verifies the validity of Corollary~\ref{col:ps}.

\begin{figure}[htbp]
    \centering
    \begin{subfigure}[t]{0.4\textwidth}
        \includegraphics[width=\linewidth]{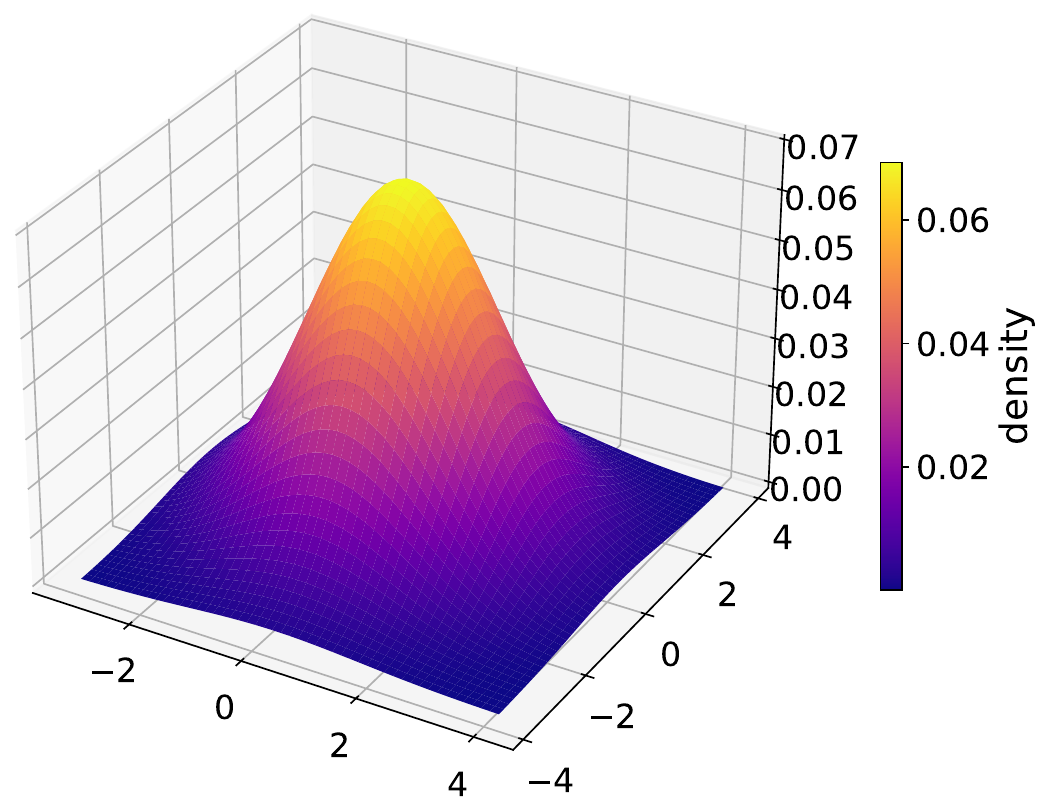}
        \caption{Density map of observed distribution.}
        \label{fig:clt_1_map}
    \end{subfigure}
    \hspace{10pt}
    \begin{subfigure}[t]{0.4\textwidth}
        \includegraphics[width=\linewidth]{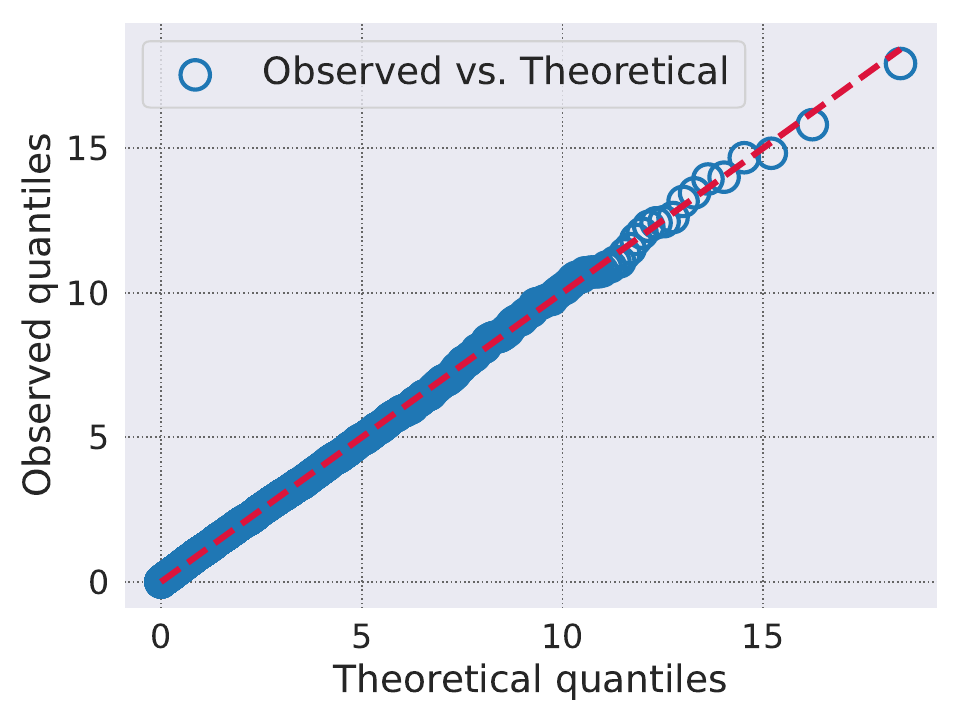}
        \caption{Multivariate Q–Q Plot for normality test.}
        \label{fig:clt_1_qq}
    \end{subfigure}

    \caption{Visualizations to verify the Central Limit Theorem. (a) plots the density map of sampled $\hat{V}_t^{-1/2}\sqrt{n}(\hat{\theta}_t - \theta_t)$, while (b) compares the observed distribution with theoretical one $\mathcal{N}(0,I_d)$.}
    \label{fig:clt_1}
\end{figure}

\paragraph{Verifying central limit theorem.} To validate the central limit theorem, we sample different $\hat{\theta}_t$ (here 
$t=4$ and $n=1000$) and visualize the distribution of $\hat{V}_t^{-1/2}\sqrt{n}(\hat{\theta}_t - \theta_t)$. 
% Here, we set $t=4$ and $n=1000$. Results for other $t$ and $n$ are provided in~\ref{XX}.
We plot the density map in Figure~\ref{fig:clt_1_map} and find it is close to a normal distribution.
In Figure~\ref{fig:clt_1_qq}, we further do a normality test with the multivariate Q-Q plot of observed squared Mahalanobis distance over theoretical Chisq quantiles. 
The tight alignment of points along the identity line (red dashed) verifies that the observed distribution is well‐approximated by its asymptotic CLT.
% The linearity of the points suggests that the data are normally distributed.
% We can also find that the points approximately lie on the identity line (red dashed), indicating that the observed distribution is well‐approximated by its asymptotic CLT.

\paragraph{Estimation results of score matching.} We consider two implementations of the gradient-free score matching estimator $M(z,\theta;\psi)$:
\vspace{-5pt}
\begin{itemize}
    \item[(a).] Gaussian parametric: we assume $p(z,\theta)=\mathcal{N}(\mu_p, \Sigma_p)$ and parameterize $\psi=\{\mu_p, \Sigma_p\}$;
    \item[(b).] DNN-based: a small deep neural network with two hidden layers of width 128.
\end{itemize}
\vspace{-5pt}
% 1) we assume $p(z,\theta)=\mathcal{N}(\mu_p, \Sigma_p)$ and parameterize $\psi$ with $\{\mu_p, \Sigma_p\}$; 
% 2) we use a small deep neural network with two hidden layers of width 128 to approximate $p(z,\theta)$.
We collect $\hat{\theta}_{1:t}$ trajectory and corresponding data $\{z_{1:t,i}\}_{i=1}^n$ to train both models via the SGD optimizer with a learning rate of 0.1 to minimize the empirical score‐matching objective $J(\psi)$.

In Figure~\ref{fig:eval_M}, we evaluate the estimation quality of two models by their final training loss $J(\psi)$ and the estimated variance error $\| \hat{V}_t-V_t \|$ over varying $n$ and $t$. We sample 1000 independent trajectories of $\hat{\theta}_{1:t}$ and report $J(\psi)$ as averaged $J(\hat{\theta};\psi)$ over all $\hat{\theta}$ in the collection of trajectories.
In all settings, both estimators achieve $J(\psi)<0.05$, indicating the perfect approximation of our learned model $M(z,\theta;\psi)$ to the true $p(z,\theta)$.
Correspondingly, the variance‐estimation error remains negligible and decreases as $n$ grows, verifying the feasibility of using our score matching models to fit $\nabla_\theta \log p(z,\theta)$ for estimating $\nabla G(\theta_k)$ and $V_t$.

\begin{figure}[htbp]
    \centering
    \begin{subfigure}[t]{0.49\textwidth}
    \includegraphics[width=\linewidth]{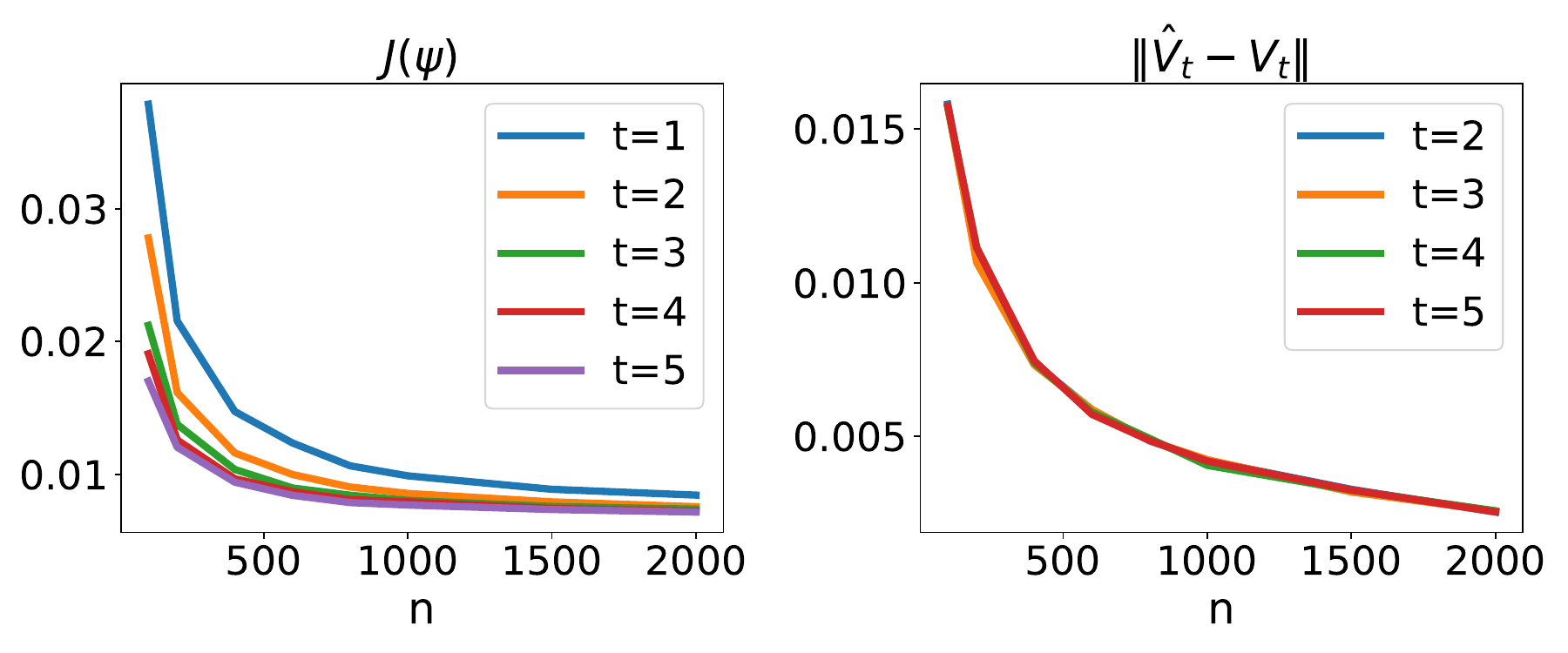}
    \caption{Gaussian parametric model.}
    \label{fig:eval_oracle}
    \end{subfigure}
    \hfill
    \begin{subfigure}[t]{0.49\textwidth}
        \includegraphics[width=\linewidth]{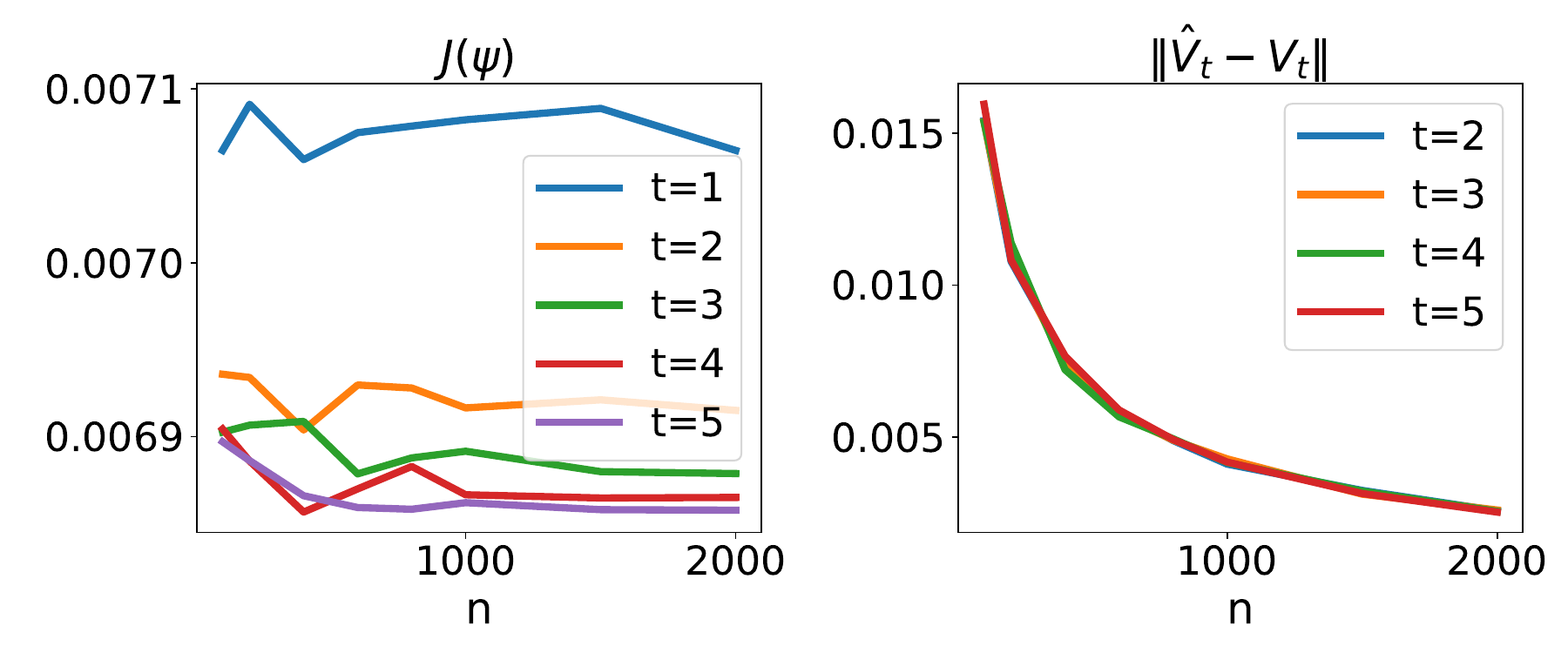}
        \caption{DNN-based model.}
        \label{fig:eval_dnn}
    \end{subfigure}

    \caption{Evaluating the estimation quality of two designed score matching models.}
    \label{fig:eval_M}
\end{figure}

\section{Conclusion}
In this paper, we introduce an important topic: statistical inference under performativity. We derive results on asymptotic distributions for a widely used iterative process for updating parameters in performative prediction. We further leverage and extend prediction-powered inference to the dynamic setting under performativity. Currently,  our framework uses bias-awared inference for $\theta_{\text{PS}}$. Obtaining direct inference methods will be of future interest. Our work can serve as an important tool and guideline for policy-making in a wide range of areas such as social science and economics.

\section{Acknowledgements}
We would like to acknowledge the helpful discussion and suggestions from Tijana Zrinc, Juan Perdomo, Anastasios Angelopoulos, and Steven Wu.

\bibliographystyle{plainnat}
\bibliography{refs}

\begin{thebibliography}{33}
\providecommand{\natexlab}[1]{#1}
\providecommand{\url}[1]{\texttt{#1}}
\expandafter\ifx\csname urlstyle\endcsname\relax
  \providecommand{\doi}[1]{doi: #1}\else
  \providecommand{\doi}{doi: \begingroup \urlstyle{rm}\Url}\fi

\bibitem[Angelopoulos et~al.(2023{\natexlab{a}})Angelopoulos, Duchi, and
  Zrnic]{angelopoulos2023ppi++}
Anastasios~N Angelopoulos, John~C Duchi, and Tijana Zrnic.
\newblock {PPI}++: Efficient prediction-powered inference.
\newblock \emph{arXiv preprint arXiv:2311.01453}, 2023{\natexlab{a}}.

\bibitem[Angelopoulos et~al.(2023{\natexlab{b}})Angelopoulos, Bates, Fannjiang,
  Jordan, and Zrnic]{Angelopoulos2023PredictionpoweredI}
Anastasios~Nikolas Angelopoulos, Stephen Bates, Clara Fannjiang, Michael~I.
  Jordan, and Tijana Zrnic.
\newblock Prediction-powered inference.
\newblock \emph{Science}, 382:\penalty0 669 -- 674, 2023{\natexlab{b}}.
\newblock URL \url{https://api.semanticscholar.org/CorpusID:256105365}.

\bibitem[Armstrong and Koles{\'a}r(2018)]{armstrong2018optimal}
Timothy~B Armstrong and Michal Koles{\'a}r.
\newblock Optimal inference in a class of regression models.
\newblock \emph{Econometrica}, 86\penalty0 (2):\penalty0 655--683, 2018.

\bibitem[Armstrong et~al.(2020)Armstrong, Koles{\'a}r, and
  Kwon]{armstrong2020bias}
Timothy~B Armstrong, Michal Koles{\'a}r, and Soonwoo Kwon.
\newblock Bias-aware inference in regularized regression models.
\newblock \emph{arXiv preprint arXiv:2012.14823}, 2020.

\bibitem[Bracale et~al.(2024)Bracale, Maity, Polo, Somerstep, Banerjee, and
  Sun]{bracale2024microfoundation}
Daniele Bracale, Subha Maity, Felipe~Maia Polo, Seamus Somerstep, Moulinath
  Banerjee, and Yuekai Sun.
\newblock Microfoundation inference for strategic prediction.
\newblock \emph{arXiv preprint arXiv:2411.08998}, 2024.

\bibitem[Cortinovis and Caron(2025)]{Cortinovis2025FABPPIFA}
Stefano Cortinovis and Franccois Caron.
\newblock Fab-ppi: Frequentist, assisted by bayes, prediction-powered
  inference.
\newblock \emph{ArXiv}, abs/2502.02363, 2025.
\newblock URL \url{https://api.semanticscholar.org/CorpusID:276107406}.

\bibitem[Cutler et~al.(2024)Cutler, Diaz, and
  Drusvyatskiy]{cutler2024stochastic}
Joshua Cutler, Mateo Diaz, and Dmitriy Drusvyatskiy.
\newblock Stochastic approximation with decision-dependent distributions:
  asymptotic normality and optimality.
\newblock \emph{Journal of Machine Learning Research}, 25\penalty0
  (90):\penalty0 1--49, 2024.

\bibitem[Demirtas(2018)]{Demirtas2018FlexibleIO}
Hakan Demirtas.
\newblock Flexible imputation of missing data.
\newblock \emph{Journal of Statistical Software}, 85:\penalty0 1--5, 2018.
\newblock URL \url{https://api.semanticscholar.org/CorpusID:65237735}.

\bibitem[Drusvyatskiy and Xiao(2020)]{Drusvyatskiy2020StochasticOW}
Dmitriy Drusvyatskiy and Lin Xiao.
\newblock Stochastic optimization with decision-dependent distributions.
\newblock \emph{Math. Oper. Res.}, 48:\penalty0 954--998, 2020.
\newblock URL \url{https://api.semanticscholar.org/CorpusID:227127621}.

\bibitem[Fisch et~al.(2024)Fisch, Maynez, Hofer, Dhingra, Globerson, and
  Cohen]{Fisch2024StratifiedPI}
Adam Fisch, Joshua Maynez, R.~Alex Hofer, Bhuwan Dhingra, Amir Globerson, and
  William~W. Cohen.
\newblock Stratified prediction-powered inference for hybrid language model
  evaluation.
\newblock \emph{ArXiv}, abs/2406.04291, 2024.
\newblock URL \url{https://api.semanticscholar.org/CorpusID:270285671}.

\bibitem[Hardt and Mendler-D{\"u}nner(2023)]{Hardt2023PerformativePP}
Moritz Hardt and Celestine Mendler-D{\"u}nner.
\newblock Performative prediction: Past and future.
\newblock \emph{ArXiv}, abs/2310.16608, 2023.
\newblock URL \url{https://api.semanticscholar.org/CorpusID:264451701}.

\bibitem[Hofer et~al.(2024)Hofer, Maynez, Dhingra, Fisch, Globerson, and
  Cohen]{Hofer2024BayesianPI}
R.~Alex Hofer, Joshua Maynez, Bhuwan Dhingra, Adam Fisch, Amir Globerson, and
  William~W. Cohen.
\newblock Bayesian prediction-powered inference.
\newblock \emph{ArXiv}, abs/2405.06034, 2024.
\newblock URL \url{https://api.semanticscholar.org/CorpusID:269740945}.

\bibitem[Huang et~al.(2015)Huang, Liu, and Bowling]{huang2015insufficient}
Jason~L Huang, Mengqiao Liu, and Nathan~A Bowling.
\newblock Insufficient effort responding: examining an insidious confound in
  survey data.
\newblock \emph{Journal of Applied Psychology}, 100\penalty0 (3):\penalty0 828,
  2015.

\bibitem[Hyv{\"a}rinen and Dayan(2005)]{hyvarinen2005estimation}
Aapo Hyv{\"a}rinen and Peter Dayan.
\newblock Estimation of non-normalized statistical models by score matching.
\newblock \emph{Journal of Machine Learning Research}, 6\penalty0 (4), 2005.

\bibitem[Imbens and Wager(2019)]{imbens2019optimized}
Guido Imbens and Stefan Wager.
\newblock Optimized regression discontinuity designs.
\newblock \emph{Review of Economics and Statistics}, 101\penalty0 (2):\penalty0
  264--278, 2019.

\bibitem[Khorsandi et~al.(2024)Khorsandi, Gupta, Mofakhami, Lacoste-Julien, and
  Gidel]{Khorsandi2024TightLB}
Pedram~J. Khorsandi, Rushil Gupta, Mehrnaz Mofakhami, Simon Lacoste-Julien, and
  Gauthier Gidel.
\newblock Tight lower bounds and improved convergence in performative
  prediction.
\newblock \emph{ArXiv}, abs/2412.03671, 2024.
\newblock URL \url{https://api.semanticscholar.org/CorpusID:274514654}.

\bibitem[Kim and Perdomo(2022)]{Kim2022MakingDU}
Michael~P. Kim and Juan~C. Perdomo.
\newblock Making decisions under outcome performativity.
\newblock \emph{ArXiv}, abs/2210.01745, 2022.
\newblock URL \url{https://api.semanticscholar.org/CorpusID:252693241}.

\bibitem[Klenke(2013)]{klenke2013probability}
Achim Klenke.
\newblock \emph{Probability theory: a comprehensive course}.
\newblock Springer Science \& Business Media, 2013.

\bibitem[Mendler-D{\"u}nner et~al.(2020)Mendler-D{\"u}nner, Perdomo, Zrnic, and
  Hardt]{MendlerDnner2020StochasticOF}
Celestine Mendler-D{\"u}nner, Juan~C. Perdomo, Tijana Zrnic, and Moritz Hardt.
\newblock Stochastic optimization for performative prediction.
\newblock \emph{ArXiv}, abs/2006.06887, 2020.
\newblock URL \url{https://api.semanticscholar.org/CorpusID:219636100}.

\bibitem[Miller et~al.(2021)Miller, Perdomo, and Zrnic]{Miller2021OutsideTE}
John Miller, Juan~C. Perdomo, and Tijana Zrnic.
\newblock Outside the echo chamber: Optimizing the performative risk.
\newblock In \emph{International Conference on Machine Learning}, 2021.
\newblock URL \url{https://api.semanticscholar.org/CorpusID:231942295}.

\bibitem[Mofakhami et~al.(2023)Mofakhami, Mitliagkas, and
  Gidel]{Mofakhami2023PerformativePW}
Mehrnaz Mofakhami, Ioannis Mitliagkas, and Gauthier Gidel.
\newblock Performative prediction with neural networks.
\newblock In \emph{International Conference on Artificial Intelligence and
  Statistics}, 2023.
\newblock URL \url{https://api.semanticscholar.org/CorpusID:253180829}.

\bibitem[Munro et~al.(2021)Munro, Wager, and Xu]{munro2021treatment}
Evan Munro, Stefan Wager, and Kuang Xu.
\newblock Treatment effects in market equilibrium.
\newblock \emph{arXiv preprint arXiv:2109.11647}, 5, 2021.

\bibitem[Noack and Rothe(2019)]{noack2019bias}
Claudia Noack and Christoph Rothe.
\newblock Bias-aware inference in fuzzy regression discontinuity designs.
\newblock \emph{arXiv preprint arXiv:1906.04631}, 2019.

\bibitem[Oesterheld et~al.(2023)Oesterheld, Treutlein, Cooper, and
  Hudson]{Oesterheld2023IncentivizingHP}
Caspar Oesterheld, Johannes Treutlein, Emery Cooper, and Rubi Hudson.
\newblock Incentivizing honest performative predictions with proper scoring
  rules.
\newblock \emph{ArXiv}, abs/2305.17601, 2023.
\newblock URL \url{https://api.semanticscholar.org/CorpusID:258960363}.

\bibitem[Perdomo et~al.(2020)Perdomo, Zrnic, Mendler-D{\"u}nner, and
  Hardt]{perdomo2020performative}
Juan Perdomo, Tijana Zrnic, Celestine Mendler-D{\"u}nner, and Moritz Hardt.
\newblock Performative prediction.
\newblock In \emph{International Conference on Machine Learning}, pages
  7599--7609. PMLR, 2020.

\bibitem[Perdomo(2025)]{Perdomo2025RevisitingTP}
Juan~C. Perdomo.
\newblock Revisiting the predictability of performative, social events.
\newblock \emph{ArXiv}, abs/2503.11713, 2025.
\newblock URL \url{https://api.semanticscholar.org/CorpusID:277066781}.

\bibitem[Robins and Rotnitzky(1995)]{Robins1995SemiparametricEI}
James~M. Robins and Andrea Rotnitzky.
\newblock Semiparametric efficiency in multivariate regression models with
  missing data.
\newblock \emph{Journal of the American Statistical Association}, 90:\penalty0
  122--129, 1995.
\newblock URL \url{https://api.semanticscholar.org/CorpusID:121261196}.

\bibitem[Song et~al.(2023)Song, Lin, and Zhou]{Song2023AGM}
Shanshan Song, Yuanyuan Lin, and Yong Zhou.
\newblock A general m-estimation theory in semi-supervised framework.
\newblock \emph{Journal of the American Statistical Association}, 119:\penalty0
  1065 -- 1075, 2023.
\newblock URL \url{https://api.semanticscholar.org/CorpusID:257266562}.

\bibitem[Taori and Hashimoto(2022)]{Taori2022DataFL}
Rohan Taori and Tatsunori Hashimoto.
\newblock Data feedback loops: Model-driven amplification of dataset biases.
\newblock In \emph{International Conference on Machine Learning}, 2022.
\newblock URL \url{https://api.semanticscholar.org/CorpusID:252118610}.

\bibitem[Tsiatis(2006)]{Tsiatis2006SemiparametricTA}
Anastasios~A. Tsiatis.
\newblock \emph{Semiparametric Theory and Missing Data}.
\newblock Springer, New York, 2006.
\newblock ISBN 978-0-387-35458-3.
\newblock URL \url{https://doi.org/10.1007/0-387-37345-4}.

\bibitem[Van~der Vaart(2000)]{van2000asymptotic}
Aad~W Van~der Vaart.
\newblock \emph{Asymptotic statistics}, volume~3.
\newblock Cambridge university press, 2000.

\bibitem[Viviano and Rudder(2024)]{viviano2024policy}
Davide Viviano and Jess Rudder.
\newblock Policy design in experiments with unknown interference.
\newblock \emph{arXiv preprint arXiv:2011.08174}, 4, 2024.

\bibitem[Zrnic and Cand{\`e}s(2023)]{Zrnic2023CrosspredictionpoweredI}
Tijana Zrnic and Emmanuel~J. Cand{\`e}s.
\newblock Cross-prediction-powered inference.
\newblock \emph{Proceedings of the National Academy of Sciences of the United
  States of America}, 121, 2023.
\newblock URL \url{https://api.semanticscholar.org/CorpusID:263134612}.

\end{thebibliography}

\newpage

%%%%%%%%%%%%%%%%%%%%%%%%%%%%%%%%%%%%%%%%%%%%%%%%%%%%%%%%%%%%

\appendix

\section{Theoretical Details}\label{app:tech}
We will include omitted technical details in the main context. We first summarize all the required additional assumptions in Section~\ref{sec:additionalas}. Then, we provide omitted proofs for Section~\ref{sec:CLT} in Section~\ref{subsec:vanillaclt} and omitted proofs for Section~\ref{sec:ppi} in Section~\ref{app:resultsforppi}.

\subsection{Assumptions}\label{sec:additionalas}

In this subsection, we summarize all the additional assumptions we will use to build various theoretical results in this paper. Before we state the assumptions, we need some further notations. 

Let us denote $\Psi(\theta)$ as the collection of minimizers of $\int p(z,\theta)\|\nabla_\theta \log p(z,\theta)-s(z,\theta;\psi)\|^2dz$ for the given $\theta$. We further denote the empirical estimation function for the two terms, i.e.,
$$\int p(z,\theta)\Big(\|s(z,\theta;\psi)\|^2-2 \nabla_\theta \log p(z,\theta)^\top s(z,\theta;\psi)\Big)dz,$$
as $\hat J_{n,k}(\psi;\theta)$ following the methods in Section~\ref{sec:clt} for any $\theta$ in the trajectory $\{\hat\theta_t\}_{t=1}^T$, where $n$ and $k$ are the number of samples we get at each iteration for $\hat\theta_t$ and perturbed policies.

\paragraph{Meaning of each assumption.}Assumption~\ref{app:as1} is used to establish the asymptotic normality of our estimator under performativity. Additionally, we use the fact that the population loss Hessians $H_{\tilde\theta}(\theta)=\nabla^2_\theta\bE_{z\sim \cD(\tilde\theta)}\ell(z;\theta)$ are positive definite, which is guaranteed by the strong convexity of the loss function (Assumption~\ref{as:pp}).
Assumption~\ref{as:exchange} and \ref{as:M} are used in the analysis of score matching. Assumption~\ref{as:exchange}, based on the differentiation lemma \cite{klenke2013probability}, ensures the interchangeability of integration and differentiation. Assumption~\ref{as:M} guarantees the consistency of the score matching estimator.
Assumption~\ref{as:ppiaddition} and \ref{app:ppias1} parallel to Assumption~\ref{as:pp}(a) and \ref{app:as1}, and are used to establish the consistency and asymptotic normality of our PPI estimator under performativity. Conditions such as local Lipschitz continuity and positive definiteness are standard for establishing asymptotic normality. Similar assumptions are also imposed in \cite{angelopoulos2023ppi++}.

\begin{assumption}[Positive Definiteness \& Regularity Conditions for the Estimator]\label{app:as1}

We assume the following.

(a). The loss function satisfies the gradient covariance matrix is uniformly bounded below: % matrices are positive definite: 
$$V_{\tilde{\theta}}(\theta)=\text{Cov}_{z\sim\cD(\tilde\theta)}\big(\nabla_\theta\ell(z;\theta)\big)\succeq cI ,$$
for any $\tilde\theta$, where $c>0$ is a constant.  %the population loss Hessians H_{\tilde\theta}(\theta)=\nabla^2_\theta\bE_{z\sim \cD(\tilde\theta)}\ell(z;\theta)\succ 0 is guranteed by strong convexity 

(b). For any sample size $n$, assume the M-estimator $\hat{\theta}_t$ has a density function with respect to the Lebesgue measure, and its characteristic function is absolutely integrable.

\end{assumption}
%\begin{assumption}[Regularity conditions for the estimator]\label{app:as2}
 %   For any sample size $n$, assume the M-estimator $\hat{\theta}_t$ has a density function with respect to the Lebesgue measure, and its characteristic function is absolutely integrable.
%\end{assumption}
%(Rmk: + PP's assumptions)

%\begin{assumption}[Based on Leibniz integral rule \cite{XX}]
%    Assume that the domains $\mathcal{Z}$ and $\Theta$ are bounded, and for $\forall i$, the following partial derivatives exist and are continuous in $z$ and $ \theta^{(i)}$:
  %  $$ \frac{\partial}{\partial \theta^{(i)}} \left[p(z,\theta)\frac{\partial \log M(z,\theta;\psi)}{\partial \theta^{(i)}}\right].$$
%\end{assumption}
%or
\begin{assumption}[Regularity Condition for $M$]\label{as:exchange}
    Assume that for $\forall i$:
    
    (a). The function $z \mapsto p(z,\theta)\dfrac{\partial \log M(z,\theta;\psi)}{\partial \theta^{(i)}}$ is Lebesgue integrable.
    
    (b). For almost every $z\in\mathcal{Z}$ (with respect to Lebesgue measure), the partial derivative $$\dfrac{\partial}{\partial \theta^{(i)}} \left[p(z,\theta)\dfrac{\partial \log M(z,\theta;\psi)}{\partial \theta^{(i)}}\right]$$ exists.
    
    (c). There exists a Lebesgue-integrable function $H(z)$ such that for almost every $z\in\mathcal{Z}$, 
    $$\left|\dfrac{\partial}{\partial \theta^{(i)}} \left[p(z,\theta)\dfrac{\partial \log M(z,\theta;\psi)}{\partial \theta^{(i)}}\right]\right|\leq H(z).$$
\end{assumption}

\begin{assumption}[Consistency of Optimizer]\label{as:M}
We let $k$ grows along with $n$ such that $n\rightarrow \infty$ leads to $k\rightarrow \infty$. We assume that the class $M(z,\theta;\psi)$ is rich enough that for all $\theta\in\Theta$, there exists $\psi^*(\theta)$ such that $M(z,\theta;\psi^*(\theta))=p(z,\theta)$. Moreover, for the underlying trajectory $\{\theta_t\}_{t=1}^T$, 
$$\lim_{n\rightarrow \infty}\argmin_{\psi}\hat J_{n,k}(\psi;\hat\theta_t)\subseteq \Psi(\theta_t).$$
\end{assumption}

\begin{assumption}[Local Lipschitzness with $f$]\label{as:ppiaddition}
    Loss function $\ell(x, f(x);\theta)$ is locally Lipschitz: for each $\theta\in\Theta$, there exist a neighborhood $\Upsilon(\theta)$ of $\theta$ such that $\ell(x,f(x);\tilde\theta)$ is $L^f(x)$ Lipschitz  w.r.t $\tilde\theta$ for all $\tilde\theta\in\Upsilon(\theta)$ and $\mathbb{E}_{x\sim\cD_{\cX}(\theta)} L^f(x)<\infty$. %\zhun{definition of $L^f$}
\end{assumption}

\begin{assumption}[Positive Definiteness with $f$ \& Regularity Conditions for the PPI Estimator]\label{app:ppias1}
We assume the following.

(a). Assume the loss function satisfies the the gradient covariance matrices are uniformly bounded below: % positive definite: 
\begin{align*}
        & V_{\tilde{\theta}}(\theta)=\text{Cov}_{z\sim\cD(\tilde\theta)}\big(\nabla_\theta\ell(z;\theta)\big)\succeq cI, \quad V_{\tilde{\theta}}^f(\theta) =\operatorname{Cov}_{x\sim\mathcal{D}_\cX(\tilde{\theta})}(\nabla_{\theta} \ell(x,f(x);\theta))\succeq cI,
\end{align*}
for any $\tilde\theta, \theta$, where $c>0$ is a constant.

(b). For any sample size $n$, assume ${\hat \theta}^{\text{PPI}}_{t}$ has a density function with respect to the Lebesgue measure, and its characteristic function is absolutely integrable.
\end{assumption}

%\begin{assumption}[Regularity conditions for the PPI estimator]\label{app:ppias2}
%    For any sample size $n$, assume ${\hat \theta}^{\text{PPI}}_{t}$ has a density function with respect to the Lebesgue measure, and its characteristic function is absolutely integrable.
%\end{assumption}
%(Rmk: + PP's assumptions)

\subsection{Details of Section~\ref{sec:CLT}: Theory of Inference under Performativity }\label{subsec:vanillaclt}
We provide the omitted details in Section~\ref{sec:CLT}.

%===================================================
\subsubsection{Consistency and  Central Limit Theorem of $\hat\theta_t$}
Let us denote:
\begin{align*}
    & \cL_{\tilde{\theta}}(\theta):={\operatorname*{\mathbb{E}}}_{z\sim\mathcal{D}(\tilde{\theta})}\ell(z;\theta), \quad \cL_{\tilde{\theta},n}(\theta):=\frac{1}{n}\sum_{i=1}^n\ell(z_i;\theta),
\end{align*}
where the samples $z_i=(x_i,y_i) \sim \cD(\tilde{\theta})$ are drawn from the distribution under $\tilde{\theta}$.

\begin{proposition}[Consistency of $\hat{\theta}_t$, Restatement of Proposition~\ref{prop:consistency}]
    Under Assumption~\ref{as:pp}, if $\varepsilon<\frac{\gamma}{\beta}$, then for any given $T\ge 0$, we have that for all $t\in[T]$,
    $$\hat{\theta}_t\xrightarrow{P}\theta_t.$$ 
\end{proposition}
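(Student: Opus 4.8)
The plan is to establish consistency by induction on $t$, leveraging the non-asymptotic convergence guarantees of \cite{perdomo2020performative} together with a uniform law of large numbers for the empirical risk. The base case $t=1$ is the standard consistency of an M-estimator: $\hat\theta_1=\argmin_\theta \cL_{\theta_0,n_0}(\theta)$ estimates $\theta_1=\argmin_\theta\cL_{\theta_0}(\theta)$, where the data at stage $0$ are i.i.d.\ from the \emph{fixed} distribution $\cD(\theta_0)$. Here Assumption~\ref{as:pp}(a) (local Lipschitzness with integrable envelope) plus Assumption~\ref{as:pp}(c) ($\gamma$-strong convexity, which makes $\theta_1$ the unique well-separated minimizer) give a uniform law of large numbers $\sup_{\theta\in K}|\cL_{\theta_0,n}(\theta)-\cL_{\theta_0}(\theta)|\xrightarrow{P}0$ on compacta, and hence $\hat\theta_1\xrightarrow{P}\theta_1$ by the standard argmax/argmin continuity argument (e.g.\ van der Vaart Theorem 5.7).

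For the inductive step, suppose $\hat\theta_{t-1}\xrightarrow{P}\theta_{t-1}$. The subtlety is that the stage-$(t-1)$ data $\{z_{t-1,i}\}$ are drawn from the \emph{random} distribution $\cD(\hat\theta_{t-1})$, not from $\cD(\theta_{t-1})$, so one cannot directly invoke a fixed-distribution LLN. I would handle this by a triangle inequality splitting the error as
$$
\|\hat\theta_t-\theta_t\|\;\le\;\|\hat\theta_t-G(\hat\theta_{t-1})\|\;+\;\|G(\hat\theta_{t-1})-G(\theta_{t-1})\|,
$$
recalling $\theta_t=G(\theta_{t-1})$. The second term vanishes in probability because $G$ is Lipschitz with constant $\le \varepsilon\beta/\gamma<1$ (this is exactly the contraction estimate from \cite{perdomo2020performative}, valid under Assumptions~\ref{as:pp}(b)--(d)) combined with the inductive hypothesis and the continuous mapping theorem. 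For the first term, I would condition on $\hat\theta_{t-1}$: given its realized value $\tilde\theta$, the data are genuinely i.i.d.\ from $\cD(\tilde\theta)$ and $\hat\theta_t$ is the M-estimator targeting $G(\tilde\theta)$, so a conditional uniform LLN gives consistency; one then needs this to hold \emph{uniformly} over $\tilde\theta$ in a shrinking neighborhood of $\theta_{t-1}$ so that the conditioning value converging in probability does not break the bound. This uniformity is where Assumption~\ref{as:pp}(a) is doing real work: the local Lipschitz envelope $L(z)$ with $\bE_{z\sim\cD(\theta)}L(z)<\infty$, together with $\beta$-joint smoothness in $z$ and $\varepsilon$-sensitivity controlling how $\cD(\tilde\theta)$ moves, lets one build a single integrable envelope valid on a whole neighborhood and transfer expectations across nearby distributions (via the $W_1$ bound applied to the $\beta$-Lipschitz-in-$z$ gradients).

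The main obstacle, then, is making the conditional-consistency argument uniform in the conditioning parameter $\tilde\theta$ ranging over a neighborhood of $\theta_{t-1}$ — i.e.\ proving $\sup_{\tilde\theta\in\cB(\theta_{t-1},\rho)}\|\hat\theta_t(\tilde\theta)-G(\tilde\theta)\|\xrightarrow{P}0$ and the continuity of $G$ — rather than any single pointwise step. Once that is in place, an $\varepsilon$-$\delta$ argument combining it with $\hat\theta_{t-1}\xrightarrow{P}\theta_{t-1}$ closes the induction: for any $\eta>0$, pick $\rho$ so small that $\tilde\theta\in\cB(\theta_{t-1},\rho)$ forces $\|G(\tilde\theta)-\theta_t\|<\eta/2$, and $n$ large so that both $\Pr(\hat\theta_{t-1}\notin\cB(\theta_{t-1},\rho))$ and $\Pr(\sup_{\tilde\theta}\|\hat\theta_t(\tilde\theta)-G(\tilde\theta)\|\ge\eta/2)$ are small. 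Since $T$ is fixed and finite, finitely many such steps suffice, giving $\hat\theta_t\xrightarrow{P}\theta_t$ for all $t\in[T]$.
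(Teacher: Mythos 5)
Your proposal matches the paper's proof essentially step for step: the same triangle-inequality decomposition $\|\hat\theta_t-\theta_t\|\le\|\hat\theta_t-G(\hat\theta_{t-1})\|+\|G(\hat\theta_{t-1})-G(\theta_{t-1})\|$, the same use of the $\varepsilon\beta/\gamma$-contraction of $G$ from \cite{perdomo2020performative} for the propagated term, and the same local-Lipschitz-plus-strong-convexity M-estimation argument (conditionally on $\hat\theta_{t-1}$) for the per-step statistical term, iterated over the finitely many steps $t\in[T]$. If anything, you are slightly more explicit than the paper about requiring the conditional consistency to hold uniformly over the realized value of $\hat\theta_{t-1}$ in a neighborhood of $\theta_{t-1}$ — a point the paper's proof handles informally by applying the law of large numbers with the random conditioning parameter directly.
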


\begin{proof}
Let us denote $\hat{G}(\theta):=\operatorname{argmin}_{\theta'\in\Theta}\frac{1}{n}\sum_{i=1}^{n}\ell(z_{i};\theta')$ where the samples $z_i \sim \cD(\theta)$ are drawn for some parameter $\theta$ along the dynamic trajectory $\theta_0\rightarrow\hat\theta_1\rightarrow\cdots \hat\theta_t\rightarrow  \cdots $.
   \begin{align*}
    \|\theta_t-\hat{\theta}_t\|&=\|G(\theta_{t-1})-\hat{G}(\hat{\theta}_{t-1})\|\\
    &\leq \|G(\hat{\theta}_{t-1})-\hat{G}(\hat{\theta}_{t-1})\| + \|G(\theta_{t-1})-G(\hat{\theta}_{t-1})\| \\
    &\leq  \|G(\hat{\theta}_{t-1})-\hat{G}(\hat{\theta}_{t-1})\| + \varepsilon\frac\beta\gamma \|\theta_{t-1}-\hat{\theta}_{t-1}\|,
\end{align*}  
where the last inequality follows from the results derived by \cite{perdomo2020performative}, under Assumption~\ref{as:pp}, we have $\|G(\theta)-G(\theta')\|\le  \frac{\varepsilon \beta}{\gamma}\|\theta-\theta'\|$.
%\textcolor{blue}{(theorem 2.1)}. 

Notice that $\mathbb E(\cL_{\hat{\theta}_{t-1},n}(\theta))=\cL_{\hat{\theta}_{t-1}}(\theta)$. By local Lipschitz condition, there exists $\epsilon_0>0$ such that
$$\sup_{\theta:\|\theta-G(\hat{\theta}_{t-1})\|\le\epsilon_0}|\cL_{\hat{\theta}_{t-1},n}(\theta)-\cL_{\hat{\theta}_{t-1}}(\theta)|\xrightarrow{P}0.$$
Since $\ell$ is strongly convex for any $\theta$, $G(\hat{\theta}_{t-1})$ is unique. Then we know that there exists $\delta$ such that $\cL_{\hat{\theta}_{t-1},n}(\theta)-\cL_{\hat{\theta}_{t-1}}(G(\hat{\theta}_{t-1})) > \delta$ for all $\theta$ in $\{\theta\mid\|\theta-G(\hat{\theta}_{t-1})\|=\epsilon_0\}$.
Then it follows that:
\begin{align*}
    &\inf_{\|\theta-G(\hat{\theta}_{t-1})\|=\epsilon_0}\cL_{\hat{\theta}_{t-1},n}(\theta)-\cL_{\hat{\theta}_{t-1},n}(G(\hat{\theta}_{t-1}))\\
    =&\inf_{\|\theta-G(\hat{\theta}_{t-1})\|=\epsilon_0}\Big((\cL_{\hat{\theta}_{t-1},n}(\theta)-\cL_{\hat{\theta}_{t-1}}(\theta))+(\cL_{\hat{\theta}_{t-1}}(\theta)-\cL_{\hat{\theta}_{t-1}}(G(\hat{\theta}_{t-1})))\\
    &+(\cL_{\hat{\theta}_{t-1}}(G(\hat{\theta}_{t-1}))-\cL_{\hat{\theta}_{t-1},n}(G(\hat{\theta}_{t-1})))\Big) \\
    \geq&\delta-o_{P}(1).
\end{align*}
Then we consider any fixed $\theta$ such that $\|\theta-G(\hat{\theta}_{t-1})\|\geq\epsilon_0$ it follows that
\begin{align*}
    \cL_{\hat{\theta}_{t-1},n}(\theta)-\cL_{\hat{\theta}_{t-1},n}(G(\hat{\theta}_{t-1}))&\geq \frac{\theta-G(\hat{\theta}_{t-1})}{\omega-G(\hat{\theta}_{t-1})}\left( \cL_{\hat{\theta}_{t-1},n}(\omega)-\cL_{\hat{\theta}_{t-1},n}(G(\hat{\theta}_{t-1}))\right)\\&\geq\frac{\|\theta-G(\hat{\theta}_{t-1})\|}{\epsilon_0}(\delta-o_P(1))\geq\delta-o_P(1),
\end{align*}
where the first inequality holds for any $\omega$ by the convexity condition of $\cL_{\hat{\theta}_{t-1},n}(\theta)$, and the second inequality holds as we take $\omega=\frac{\theta-G(\hat{\theta}_{t-1})}{\|\theta-G(\hat{\theta}_{t-1})\|}\epsilon_0+G(\hat{\theta}_{t-1})$ and using the above result.
Thus no $\theta$ such that $\|\theta-G(\hat{\theta}_{t-1})\|=\epsilon_0$ can be the minimizer of $\cL_{\hat{\theta}_{t-1},n}(\theta)$. Then $\|G(\hat{\theta}_{t-1})-\hat{G}(\hat{\theta}_{t-1})\|\xrightarrow{P}0$.

We then have, for a given $T\ge 0$, we have that for all $t\in[T]$,
$$ \|\hat{\theta}_t-\theta_t\|\leq \sum_{i=0}^t (\varepsilon\frac\beta\gamma )^{t-i}\|G(\hat{\theta}_{i})-\hat{G}(\hat{\theta}_{i})\|\xrightarrow{P}0.$$
Thus, we conclude that $\hat{\theta}_t\xrightarrow{P}\theta_t$.
\end{proof}

\begin{theorem}[Central Limit Theorem of $\hat\theta_t$, Restatement of Theorem \ref{thm:clt}]
Under Assumption~\ref{as:pp} and \ref{app:as1}, if $\varepsilon<\frac{\gamma}{\beta}$, then for any given $T\ge 0$, we have that for all $t\in[T]$,
$$\sqrt{n} (\hat{\theta}_t - \theta_t) \xrightarrow{D} \cN\left(0, V_t\right)$$
with 
$$V_t=\sum_{i=1}^t \left[\prod_{k=i}^{t-1}\nabla G(\theta_{k})\right] \Sigma_{\theta_{i-1}}(\theta_i) \left[\prod_{k=i}^{t-1}\nabla G(\theta_{k})\right]^\top.$$
In particular, 
$ \nabla G(\theta_{k})
    = -H_{\theta_{k}}(\theta_{k+1})^{-1}\big( \nabla_{\tilde{\theta}}\bE_{z\sim\mathcal{D}(\theta_{k})}\nabla_{\theta}\ell(z;\theta_{k+1}) \big)$, where $\nabla_{\tilde\theta}$ is taking gradient for the parameter in $\cD(\tilde\theta)$, $\nabla_{\theta}$ is taking gradient for the parameter in $\ell(z;\theta)$ and $\prod_{k=t}^{t-1}\nabla G(\theta_{k})=I_d$.
\end{theorem}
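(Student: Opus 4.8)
The plan is to argue by induction on $t$, peeling off one step of the RRM recursion at a time. For the base case $t=1$, the estimator $\hat\theta_1=\argmin_\theta\frac1n\sum_i\ell(z_{0,i};\theta)$ with $z_{0,i}\overset{\mathrm{iid}}{\sim}\cD(\theta_0)$ is an ordinary M-estimator for $\theta_1=G(\theta_0)$: consistency is Proposition~\ref{prop:consistency}, and the usual Taylor expansion of the estimating equation $\frac1n\sum_i\nabla_\theta\ell(z_{0,i};\hat\theta_1)=0$ around $\theta_1$, combined with the CLT for $\frac1{\sqrt n}\sum_i\nabla_\theta\ell(z_{0,i};\theta_1)$ (asymptotic covariance $V_{\theta_0}(\theta_1)$) and the LLN for the empirical Hessian (limit $H_{\theta_0}(\theta_1)\succ0$ by strong convexity), gives $\sqrt n(\hat\theta_1-\theta_1)\xrightarrow{D}\cN(0,\Sigma_{\theta_0}(\theta_1))$, which equals $V_1$ since the matrix product over the empty range is $I_d$.

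For the inductive step, suppose $\sqrt n(\hat\theta_{t-1}-\theta_{t-1})\xrightarrow{D}\cN(0,V_{t-1})$. Let $\hat G$ denote the empirical-RRM map formed from fresh samples drawn from the distribution at its argument, so $\hat\theta_t=\hat G(\hat\theta_{t-1})$ and $\theta_t=G(\theta_{t-1})$, and decompose
\[
\sqrt n(\hat\theta_t-\theta_t)=\underbrace{\sqrt n\big(\hat G(\hat\theta_{t-1})-G(\hat\theta_{t-1})\big)}_{(\mathrm I)}+\underbrace{\sqrt n\big(G(\hat\theta_{t-1})-G(\theta_{t-1})\big)}_{(\mathrm{II})}.
\]
For $(\mathrm{II})$, I first establish that $G$ is continuously differentiable near $\theta_{t-1}$ by applying the implicit function theorem to the population optimality condition $\bE_{z\sim\cD(\tilde\theta)}\nabla_\theta\ell\big(z;G(\tilde\theta)\big)=0$; differentiating in $\tilde\theta$ through both the distribution and the argument $G(\tilde\theta)$ yields $\nabla_{\tilde\theta}\bE_{z\sim\cD(\tilde\theta)}\nabla_\theta\ell(z;\theta)\big|_{\theta=G(\tilde\theta)}+H_{\tilde\theta}(G(\tilde\theta))\,\nabla G(\tilde\theta)=0$, which after evaluating at $\tilde\theta=\theta_k$ and inverting $H$ (strong convexity) is exactly the stated formula for $\nabla G(\theta_k)$. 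The delta method with the induction hypothesis then gives $(\mathrm{II})\xrightarrow{D}\cN\big(0,\nabla G(\theta_{t-1})V_{t-1}\nabla G(\theta_{t-1})^\top\big)$.

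For $(\mathrm I)$: conditionally on $\hat\theta_{t-1}$, the step-$t$ data are i.i.d.\ from $\cD(\hat\theta_{t-1})$ and independent of everything generated through step $t-1$, and $\hat G(\hat\theta_{t-1})$ is an M-estimator for $G(\hat\theta_{t-1})$. An M-estimator CLT that is \emph{locally uniform in the data-generating parameter} (justified by the joint smoothness of $\ell$, positive-definiteness of $V$ and $H$, and continuity of $\tilde\theta\mapsto\Sigma_{\tilde\theta}(G(\tilde\theta))$) shows that, conditionally on $\hat\theta_{t-1}$, $(\mathrm I)$ converges in distribution to $\cN(0,\Sigma_{\theta_{t-1}}(\theta_t))$ --- a limit that does not depend on the conditioning value. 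Since conditional convergence to a fixed law implies joint convergence of $(\mathrm I)$ and $\sqrt n(\hat\theta_{t-1}-\theta_{t-1})$ to a product measure, $(\mathrm I)$ is asymptotically independent of $(\mathrm{II})$; adding the two Gaussian limits gives $\sqrt n(\hat\theta_t-\theta_t)\xrightarrow{D}\cN\big(0,\Sigma_{\theta_{t-1}}(\theta_t)+\nabla G(\theta_{t-1})V_{t-1}\nabla G(\theta_{t-1})^\top\big)$. Unrolling this recursion --- the $i=t$ term is $\Sigma_{\theta_{t-1}}(\theta_t)$ (empty product), and factoring $\nabla G(\theta_{t-1})$ out of the remaining terms reproduces $\nabla G(\theta_{t-1})V_{t-1}\nabla G(\theta_{t-1})^\top$ --- yields the closed form for $V_t$.

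The main obstacle is making $(\mathrm I)$ rigorous. One must (a) obtain asymptotic normality of $\hat G(\tilde\theta)$ that is uniform over $\tilde\theta$ in a neighborhood of $\theta_{t-1}$, so that substituting the random, data-dependent argument $\hat\theta_{t-1}$ is legitimate --- an empirical-process / stochastic-equicontinuity argument relying on Assumption~\ref{as:pp}(a,b) and Assumption~\ref{app:as1}(a) --- and (b) exploit that the step-$t$ samples are drawn \emph{afresh} to obtain the asymptotic independence, which is delicate because $\hat G(\hat\theta_{t-1})$ is a nonlinear functional of both $\hat\theta_{t-1}$ and the new data rather than a simple partial sum; Assumption~\ref{app:as1}(b) (densities with absolutely integrable characteristic functions) is what licenses the clean passage from conditional to unconditional limits here. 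The remaining ingredients --- the implicit-function computation of $\nabla G$ and the algebraic verification that the recursion unrolls to the displayed sum --- are routine.
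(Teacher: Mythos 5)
Your proposal is correct and follows the same skeleton as the paper's proof: the identical decomposition $\sqrt n(\hat\theta_t-\theta_t)=\sqrt n\big(\hat G(\hat\theta_{t-1})-G(\hat\theta_{t-1})\big)+\sqrt n\big(G(\hat\theta_{t-1})-G(\theta_{t-1})\big)$, a conditional M-estimator expansion for the fresh-sample term (the paper does this via Lemma 19.31 and Corollary 5.53 of van der Vaart, conditioning on $\hat\theta_{t-1}$), the implicit-function-theorem computation of $\nabla G(\theta_k)$, and the same recursion $V_t=\nabla G(\theta_{t-1})V_{t-1}\nabla G(\theta_{t-1})^\top+\Sigma_{\theta_{t-1}}(\theta_t)$ unrolled to the closed form. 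The only genuine divergence is how the conditional limit is converted into a marginal one: you invoke the device that convergence of the conditional law of term $(\mathrm I)$ to a \emph{fixed} Gaussian (after using consistency of $\hat\theta_{t-1}$ and continuity of $\tilde\theta\mapsto\Sigma_{\tilde\theta}(G(\tilde\theta))$) yields joint convergence to a product measure, hence asymptotic independence of $(\mathrm I)$ and $(\mathrm{II})$ and additivity of the limiting covariances; the paper instead keeps the conditioning-value-dependent mean $\sqrt n\big(G(\hat\theta_{t-1})-G(\theta_{t-1})\big)$ and covariance $\Sigma_{\hat\theta_{t-1}}(G(\hat\theta_{t-1}))$ inside and computes the marginal characteristic function of $U_t$ explicitly, writing the density of $U_{t-1}$ by Fourier inversion and passing to the limit by dominated convergence together with $\sqrt n\big(G(\theta_{t-1}+u/\sqrt n)-G(\theta_{t-1})\big)\to\nabla G(\theta_{t-1})u$. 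Your route is arguably cleaner, but note one misattribution: Assumption~\ref{app:as1}(b) (density of the estimator with absolutely integrable characteristic function) is what the \emph{paper's} Fourier-inversion computation needs; your conditional-characteristic-function argument does not really require it, whereas what your sketch does need --- and correctly flags as the main technical burden --- is a locally uniform (in the data-generating parameter) M-estimator CLT so that plugging in the random argument $\hat\theta_{t-1}$ is legitimate, a point the paper also treats somewhat informally.
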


%\begin{theorem}{(\textbf{Central Limit Theorem of $\hat{\theta}_t$})}
%     Define $\Sigma_{\tilde{\theta}}(\theta)=H_{\tilde{\theta}}(\theta)^{-1}  V_{\tilde{\theta}}(\theta)H_{\tilde{\theta}}(\theta)^{-1}$, with   $$H_{\tilde{\theta}}(\theta):=\nabla^2 L_{\tilde{\theta}}(\theta) , \quad V_{\tilde{\theta}}(\theta)=\operatorname{Cov}_{\mathcal{D}(\tilde{\theta})}(\nabla \ell(\theta)).$$
%    Then under \textcolor{blue}{assumption 4.4}, for a fixed $T\in \mathbb{N}^+$ and any $1\leq t\leq T$, if the consistency $\hat{\theta}_t \xrightarrow{P} \theta_t$ holds, we have:
%    \begin{align*}
%        & \sqrt{n} (\hat{\theta}_t - \theta_t) \xrightarrow{D} \mathsf{N}\left(0, V_t\right), \textit{ where }V_t=\sum_{i=1}^t \left[\prod_{k=i}^{t-1}\nabla G(\theta_{k})\right] \Sigma_{\theta_{i-1}}(\theta_i) \left[\prod_{k=i}^{t-1}\nabla G(\theta_{k})\right]^\top .
%    \end{align*}
%In particular, 
%$ \nabla G(\theta_{k})    = -H_{\theta_{k}}(\theta_{k+1})^{-1}\big( \nabla_{\tilde{\theta}}\bE_{z\sim\mathcal{D}(\theta_{k})}\nabla_{\theta}\ell(z;\theta_{k+1}) \big)$, where $\nabla_{\tilde\theta}$ is taking gradient for the parameter in $\cD(\tilde\theta)$ and $\nabla_{\theta}$ is taking gradient for the parameter in $\ell(z;\theta)$.
%\end{theorem}

\begin{proof}
    Let $U_t:=\sqrt{n}(\hat{\theta}_t-\theta_t)$ and denote $\tilde{\theta}_t=G(\hat{\theta}_{t-1})$. We make the following decomposition:
$$\hat{\theta}_t - \theta_t = 
\underbrace{(\tilde{\theta}_t - \theta_t)}_{\text{(1)}} +
\underbrace{(\hat{\theta}_t- \tilde{\theta}_t)}_{\text{(2)}}.$$

\textbf{Step 1: Conditional distribution of $U_t | U_{t-1}$.} 

For term (1), we have 
\begin{equation*}
    \sqrt{n} (\tilde{\theta}_t - {\theta}_t)=\sqrt{n} (G(\hat{\theta}_{t-1}) - G(\theta_{t-1})) .
\end{equation*}

For term (2), the empirical process analysis in \citep{angelopoulos2023ppi++} establishes that
\begin{equation*}
    \sqrt{n} (\hat{\theta}_t- \tilde{\theta}_t) \mid \hat{\theta}_{t-1} \xrightarrow{D} \cN(0, \Sigma_{\hat{\theta}_{t-1}}(\tilde{\theta}_t)),
\end{equation*}
where the variance is given by
$$\Sigma_{\hat{\theta}_{t-1}}(\tilde{\theta}_t) = H_{\hat{\theta}_{t-1}}(\tilde{\theta}_t)^{-1}  V_{\hat{\theta}_{t-1}}(\tilde{\theta}_t) H_{\hat{\theta}_{t-1}}(\tilde{\theta}_t)^{-1}.$$

Conditioning on $\hat{\theta}_{t-1}$ and considering the distribution $D(\hat{\theta}_{t-1})$, for any function $h$, we use the following shorthand notations:
$$
    \mathbb E_n h := \frac{1}{n} \sum_{i=1}^{n} h(x_i, y_i), \quad  \mathbb G_n h := \sqrt{n} ( \mathbb E_n h - \mathbb  E_{(x,y)\sim\mathcal{D}(\hat{\theta}_{t-1})}[h(x, y)]).
$$

Note that $\tilde{\theta}_t=G(\hat{\theta}_{t-1})$. Recall that $$\cL_{\tilde{\theta}}(\theta):=\underset{(x,y)\sim\mathcal{D}(\tilde{\theta})}{\operatorname*{\mathbb{E}}}\ell(x,y;\theta), \quad \cL_{\tilde{\theta},n}:=\frac{1}{n}\sum_{i=1}^n\ell(x_i,y_i;\theta)  \text{, where } (x_i,y_i)\sim\mathcal{D}(\tilde{\theta}).$$
Under the assumptions, Lemma 19.31 in \cite{van2000asymptotic} implies that for every sequence $h_n=O_P(1)$, we have 
\begin{align*}
     \mathbb G_n \left[ \sqrt{n}\left( \ell(x,y; \tilde{\theta}_t+\frac{h_n}{\sqrt{n}} )- \ell(x,y; \tilde{\theta}_t ) \right) - h_n ^\top \nabla_{\theta} \ell(x,y; \tilde{\theta}_t ) \right] \xrightarrow{P}0.
\end{align*}

Applying second-order Taylor expansion, we obtain that
\begin{align*}
    n\mathbb E_n\left( \ell(x,y; \tilde{\theta}_t+\frac{h_n}{\sqrt{n}} )- \ell(x,y; \tilde{\theta}_t ) \right) 
    &=n\left( \cL_{ \hat{\theta}_{t-1} }(\tilde{\theta}_t+\frac{h_n}{\sqrt{n}})-\cL_{ \hat{\theta}_{t-1} }(\tilde{\theta}_t) \right)\\
    &+h_n^{\top} \mathbb G_n\nabla_{\theta} \ell(x,y;\tilde{\theta}_t)
    +o_p(1)\\
    &=\frac{1}{2}h_n^\top H_{\hat{\theta}_{t-1}}(\tilde{\theta}_t) h_n +h_n^{\top} \mathbb G_n\nabla_{\theta} \ell(x,y;\tilde{\theta}_t)+o_p(1).
\end{align*}

Set $h_n^*=\sqrt{n}(\hat{\theta}_t- \tilde{\theta}_t)$ and $h_n=-H_{\hat{\theta}_{t-1}}(\tilde{\theta}_t)^{-1}\mathbb G_n\nabla_{\theta} \ell(x,y;\tilde{\theta}_t)$, Corollary 5.53 in \citep{van2000asymptotic} implies they are $O_P(1)$.

Since $\hat{\theta}_t$ is the minimizer of $\cL_{n,\hat{\theta}_{t-1} }$, the first term is smaller than the second term. We can rearrange the terms and obtain:
$$\frac{1}{2}(h_n^{*}-h_n)^{T}H_{\hat{\theta}_{t-1}}(\tilde{\theta}_t)(h_n^{*}-h_n)=o_P(1),$$
which leads to $h_n^{*}-h_n=O_P(1)$.
Then the above asymptotic normality result follows directly by applying the central limit theorem (CLT) to the following terms, conditioning on $\hat{\theta}_{t-1}$:
\begin{align*}
    &\sqrt{n} (\hat{\theta}_t- \tilde{\theta}_t)\mid \hat{\theta}_{t-1}  =-H_{\hat{\theta}_{t-1}}(\tilde{\theta}_t)^{-1} S  +o_P(1),\\
    &S=\sqrt{\dfrac{1}{n}}\sum_{i=1}^n\Big(\nabla_{\theta} \ell(x_{t,i},y_{t,i};\tilde{\theta}_t) - \underset{(x,y)\sim\mathcal{D}(\hat{\theta}_{t-1})}{\operatorname*{\mathbb{E}}} [\nabla_{\theta}\ell(x,y;\tilde{\theta_t})]\Big).
\end{align*}

Note that, conditioning on $\hat{\theta}_{t-1}$, (1) is a constant. Therefore, (1) and (2) follow a joint Gaussian distribution. Consequently, given ${U}_{t-1}$, the conditional distribution of ${U}_t$ is given by:
\begin{align*}
    {U}_t\mid{U}_{t-1} &=\sqrt{n}(\hat{\theta}_t - \theta_t) \mid\hat{\theta}_{t-1} \\
    &=\sqrt{n} (\tilde{\theta}_t - {\theta}_t)+\sqrt{n} (\hat{\theta}_t- \tilde{\theta}_t) \mid \hat{\theta}_{t-1} \\
    &=\sqrt{n} (G(\hat{\theta}_{t-1}) - G(\theta_{t-1}))+\sqrt{n} (\hat{\theta}_t- \tilde{\theta}_t) \mid \hat{\theta}_{t-1} \\
    &\xrightarrow{D} \cN\left(\sqrt{n} (G(\hat{\theta}_{t-1}) - G(\theta_{t-1})) , \Sigma_{\hat{\theta}_{t-1}}(\tilde{\theta}_t)\right).\\
    &= \cN\left(\sqrt{n}(G(\frac{{U}_{t-1}}{\sqrt{n}}+\theta_{t-1})-G(\theta_{t-1})), \Sigma_{\frac{{U}_{t-1}}{\sqrt{n}}+\theta_{t-1}}(G(\frac{{U}_{t-1}}{\sqrt{n}}+\theta_{t-1}))\right).
\end{align*}
For later references, we denote $ {U}_t\mid{U}_{t-1}\xrightarrow{D} \cN (\mu({U}_{t-1}),\Sigma({U}_{t-1}))$.

\textbf{Step 2: Marginal distribution of ${U}_t$.} We calculate the characteristic function of ${U}_t$ by induction. To begin with, we directly have
\begin{align*}
    X_1\xrightarrow{D} \cN(0, V_1), \quad V_1=\Sigma_{\theta_0}(\theta_1).
\end{align*}
Now, assume that $ U_{t-1}\xrightarrow{D} \cN(0, V_{t-1})$, we derive the joint distribution of $(U_t,U_{t-1})$ and marginal distribution of ${U}_t$. Then we have, the characteristics functions $\phi$ and the probability density function $p$ of distributions $U_{t-1}$ and $U_{t}\mid U_{t-1}$ follow:
$$\phi_{U_{t-1}}(s)\rightarrow \phi_{\cN(0,V_{t-1})}(s)=\exp(-\frac{1}{2}s^{T}V_{t-1}s), \quad p_{U_{t-1}}(u)=\frac{1}{(2\pi)^{d}}\int e^{-iz^{T}u}\phi_{U_{t-1}}(z)dz,$$
$$\phi_{U_{t}\mid U_{t-1}}(s) \rightarrow \phi_{ \cN (\mu({U}_{t-1}),\Sigma({U}_{t-1}))}(s)=\exp(is^{T}\mu(U_{t-1})-\frac{1}{2}s^{T}\Sigma(U_{t-1})s).$$

Then we have
\begin{align*}
&\phi_{U_{t}}(s)
=\mathbb{E}e^{is^{T}U_{t}}
=\mathbb{E}\bigl(\mathbb{E}(e^{is^{\top}U_{t}}\mid U_{t-1})\bigr)
=E_{U_{t-1}}\phi_{U_{t}\mid U_{t-1}}(s\mid U_{t-1})\\
&=\int\phi_{U_{t}\mid U_{t-1}}(s\mid u)\,p_{U_{t-1}}(u)\,du\\
&=\int\phi_{U_{t}\mid U_{t-1}}(s\mid u)\,\frac{1}{(2\pi)^{d}}
\int e^{-iz^{\top}u}\,\phi_{U_{t-1}}(z)\,dz\,du\\
&=\frac{1}{(2\pi)^{d}}
\int\!\!\!\int \phi_{U_{t}\mid U_{t-1}}(s\mid u)\,\phi_{U_{t-1}}(z)\,e^{-iz^{\top}u}\,dz\,du\\
&=\frac{1}{(2\pi)^{d}}
\int\!\!\!\int \exp\bigl(is^{\top}\mu(U_{t-1})-\tfrac12\,s^{\top}\Sigma(U_{t-1})s\bigr)\,
\exp\bigl(-\tfrac12\,z^{\top}V_{t-1}z\bigr)\,e^{-iz^{\top}u}\,dz\,du\\
&=\frac{1}{(2\pi)^{d}}
\int \exp\bigl(is^{\top}\mu(U_{t-1})-\tfrac12\,s^{\top}\Sigma(U_{t-1})s\bigr)
\Bigl(\int \exp\bigl(-\tfrac12\,z^{\top}V_{t-1}z - i\,z^{\top}u\bigr)\,dz\Bigr)\,du\\
&=\frac{1}{(2\pi)^{d}}
\int \exp\bigl(is^{\top}\mu(U_{t-1})-\tfrac12\,s^{\top}\Sigma(U_{t-1})s\bigr)\\& \times
\Bigl(\int \exp\bigl(-\tfrac12\,u^{\top}V_{t-1}^{-1}u\bigr)
\exp\bigl(-\tfrac12\,(z-V_{t-1}^{-1}i u)^{\top}V_{t-1}(z-V_{t-1}^{-1}i u)\bigr)\,dz\Bigr)du\\
&=\frac{1}{(2\pi)^{d}}
\int \exp\bigl(is^{\top}\mu(U_{t-1})-\tfrac12\,s^{\top}\Sigma(U_{t-1})s\bigr)
(\left(2\pi\right)^{\frac{d}{2}}\frac{1}{\det\left|V_{t-1}\right|}\cdot\exp(-\frac{1}{2}u^{\top}V_{t-1}^{-1}u))\,du\\
&=\frac{1}{(2\pi)^{\frac{d}{2}}\det|V_{t-1}|}\int\exp(is^{T}\mu(U_{t-1})-\frac{1}{2}s^{T}\Sigma(U_{t-1})s-\frac{1}{2}u^{T}V_{t-1}u)du.
\end{align*}

Apply dominant convergence theorem to $\lim_{n\to\infty}\phi_{U_{t}}(s)$, we have:
\begin{align*}
    &\lim_{n\to\infty}\phi_{U_{t}}(s)=\lim_{n\to\infty}\frac{1}{(2\pi)^{\frac{d}{2}}\det|V_{t-1}|}\int\exp(is^{T}\mu(U_{t-1})-\frac{1}{2}s^{T}\Sigma(U_{t-1})s-\frac{1}{2}u^{T}V_{t-1}u)du
    \\&=\lim_{n\to\infty}\frac{1}{(2\pi)^{\frac{d}{2}}\det|V_{t-1}|}\int\exp(is^{T}\sqrt{n}(G(\frac{{U}_{t-1}}{\sqrt{n}}+\theta_{t-1})-G(\theta_{t-1}))\\&-\frac{1}{2}s^{T}\Sigma_{\frac{{U}_{t-1}}{\sqrt{n}}+\theta_{t-1}}(G(\frac{{U}_{t-1}}{\sqrt{n}}+\theta_{t-1}))s-\frac{1}{2}u^{T}V_{t-1}u)du
     \\&=\frac{1}{(2\pi)^{\frac{d}{2}}\det|V_{t-1}|}\int\lim_{n\to\infty}\exp(is^{T}\sqrt{n}(G(\frac{{U}_{t-1}}{\sqrt{n}}+\theta_{t-1})-G(\theta_{t-1}))\\&-\frac{1}{2}s^{T}\Sigma_{\frac{{U}_{t-1}}{\sqrt{n}}+\theta_{t-1}}(G(\frac{{U}_{t-1}}{\sqrt{n}}+\theta_{t-1}))s-\frac{1}{2}u^{T}V_{t-1}u)du
     \\&=\frac{1}{(2\pi)^{\frac{d}{2}}\det|V_{t-1}|}\int\exp(is^{T}\nabla G(\theta_{t-1})u-\frac{1}{2}s^{T}\Sigma_{\theta_{t-1}}(G(\theta_{t-1}))s-\frac{1}{2}u^{T}V_{t-1}u)du\\&=\exp(-\frac{1}{2}s^{T}\nabla G(\theta_{t-1})V_{t+1}\nabla G(\theta_{t-1})^{T}s-\frac{1}{2}s^{T}\Sigma_{{U}_{t-1}}(\theta_t) s),
\end{align*}
which is the characteristic function of $\cN(0,V_t)$, where $V_t= \nabla G(\theta_{t-1})V_{t-1} \nabla G(\theta_{t-1})^\top + \Sigma_{\theta_{t-1}}(\theta_t) $. 
Here we use the fact that $\lim_{n\to \infty}\sqrt{n} \left( G( \frac{y}{\sqrt{n}} + \theta_{t-1} ) -  G(\theta_{t-1}) \right)=\nabla G(\theta_{t-1}) y$, and the dominant convergence theorem holds as we have 
\begin{align*}
&|\exp(is^{T}\sqrt{n}(G(\frac{{U}_{t-1}}{\sqrt{n}}+\theta_{t-1})-G(\theta_{t-1}))\\
&-\frac{1}{2}s^{T}\Sigma_{\frac{{U}_{t-1}}{\sqrt{n}}+\theta_{t-1}}(G(\frac{{U}_{t-1}}{\sqrt{n}}+\theta_{t-1}))s-\frac{1}{2}u^{T}V_{t-1}u)|\leq |\exp(-\frac{1}{2}u^{T}V_{t-1}u)|.
\end{align*}

Thus we conclude by induction that 
\begin{align*}
    & {U}_t\xrightarrow{D} \cN\left(0, V_t\right),\\
    &V_t=\sum_{i=1}^t \left[\prod_{k=i}^{t-1}\nabla G(\theta_{k})  \right] \Sigma_{\theta_{i-1}}(\theta_i) \left[\prod_{k=i}^{t-1}\nabla G(\theta_{k})\right]^\top .
\end{align*}

And by the theorem of implicit function, we can calculate the gradient of $G$ as follows
\begin{equation*}
    \nabla G(\theta)=-\left[ \big(\nabla_\psi^2 \underset{(x,y)\sim\mathcal{D}(\theta)}{\operatorname*{\mathbb{E}}}\ell(x,y;\psi) \big)|_{\psi=G(\theta)}  \right]^{-1} \big(\nabla_\psi \nabla_{\tilde{\theta}} \underset{(x,y)\sim\mathcal{D}(\theta)}{\operatorname*{\mathbb{E}}}\ell(x,y;\psi) \big)|_{\psi=G(\theta)} .
\end{equation*}
\begin{align*}
    \nabla G(\theta_k)&=-\left[ \underset{(x,y)\sim\mathcal{D}(\theta_k)}{\operatorname*{\mathbb{E}}}\nabla_{\theta}^2\ell(x,y;\theta_{k+1}) \right]^{-1} \big( \nabla_{\tilde{\theta}}\underset{(x,y)\sim\mathcal{D}(\theta_k)}{\operatorname*{\mathbb{E}}}\nabla_{\theta}\ell(x,y;\theta_{k+1}) \big)
   \\ &= -H_{\theta_k}(\theta_{k+1})^{-1}\big( \nabla_{\tilde{\theta}}\underset{(x,y)\sim\mathcal{D}(\theta_k)}{\operatorname*{\mathbb{E}}}\nabla_{\theta}\ell(x,y;\theta_{k+1}) \big) \\&=-H_{\theta_k}(\theta_{k+1})^{-1}\mathbb{E}_{z\sim\mathcal{D}(\theta_k)}[\nabla_\theta\ell(z,\theta_{k+1})\nabla_\theta\log p(z,\theta_k)^\top].
\end{align*}

\end{proof} 
\subsubsection{Score Matching}

In this part, we provide details about our score matching mechanism. 

Given that
$$\nabla G(\theta_{k})=-H_{\theta_{k}}(\theta_{k+1})^{-1}\bE_{z\sim\mathcal{D}(\theta_{k})}[\nabla_\theta\ell(z,\theta_{k+1})\nabla_{\theta}\operatorname{log}p(z,\theta_{k})^\top],$$
once we have a good estimation of  $\nabla_{\theta}\operatorname{log}p(z,\theta_{k})$ for all $z\in\cZ$, $\nabla G(\theta_{k})$ could be easily estimated by samples.

Recall that we use a model $M(z,\theta;\psi)$ parameterized by $\psi$ to approximate $p(z,\theta)$. Inspired by the objective in \citep{hyvarinen2005estimation}, for any given $\theta$ (e.g., $\hat\theta_t$), we aim to optimize the following objective parameterized by $\psi$:
\begin{align*}
J(\theta;\psi)&=\int p(z,\theta)\|\nabla_\theta \log p(z,\theta)-s(z,\theta;\psi)\|^2dz\\
&= \int p(z,\theta)\Big(\|\nabla_\theta \log p(z,\theta)\|^2+\|s(z,\theta;\psi)\|^2-2 \nabla_\theta \log p(z,\theta)^\top s(z,\theta;\psi)\Big)dz
\end{align*}
where $s(z,\theta;\psi)=\nabla_\theta\log M(z,\theta;\psi)$.

As mentioned in the main context, the first term is unrelated to $\psi$; the second term involves model $M$ that is chosen by us, so we have the analytical expression of $s(z,\theta;\psi)$. Thus, our key task will be estimating the third term, which involves
$\cK(\theta;\psi):=\int p(z,\theta)\nabla_\theta \log p(z,\theta)^\top s(z,\theta;\psi)dz$.

\begin{lemma}[Restatement of Lemma~\ref{lm:K}] Under Assumption~\ref{as:exchange}, we have
\begin{align*}
\cK(\theta;\psi)
%=\sum_{i=1}^d\int\frac{\partial p(z,\theta)}{\partial \theta^{(i)}}\cdot\frac{\partial \log M(z,\theta;\psi)}{\partial \theta^{(i)}}dz\\
=\sum_{i=1}^d\left[ \frac{\partial}{\partial \theta^{(i)}}\int p(z,\theta)\frac{\partial \log M(z,\theta;\psi)}{\partial \theta^{(i)}} dz-\int p(z,\theta)\frac{\partial^2 \log M(z,\theta;\psi)}{\partial \theta^{(i)2}}dz\right]
\end{align*}
where $\theta^{(i)}$ is the $i$-th coordinate of $\theta$. %and $s^{(i)}(z,\theta;\psi)=\partial s(z,\theta;\psi)/\partial \theta^{(i)}$. 
\end{lemma}
\begin{proof}
Recall that $\theta$ is of $d$-dimension.
\begin{align*}
\int p(z,\theta)\nabla_\theta \log p(z,\theta)^\top s(z,\theta;\psi)dz&=\sum_{i=1}^d\int p(z,\theta)\frac{\partial \log p(z,\theta)}{\partial \theta^{(i)}}\cdot\frac{\partial\log M(z,\theta;\psi)}{\partial \theta^{(i)}}dz\\
&= \sum_{i=1}^d\int p(z,\theta)\frac{\partial \log p(z,\theta)}{\partial \theta^{(i)}}\cdot\frac{\partial\log M(z,\theta;\psi)}{\partial \theta^{(i)}}dz\\
&=\sum_{i=1}^d\int \frac{\partial p(z,\theta)}{\partial \theta^{(i)}}\cdot\frac{\partial\log M(z,\theta;\psi)}{\partial \theta^{(i)}}dz.
\end{align*}
Then, we study $\int \frac{\partial p(z,\theta)}{\partial \theta^{(i)}}\cdot\frac{\partial\log M(z,\theta;\psi)}{\partial \theta^{(i)}}dz.$ Under Assumption~\ref{as:exchange}, the integral and differentiation of the following equation is exchangeable, i.e.,
$$\frac{\partial}{\partial \theta^{(i)}}\int p(z,\theta)\frac{\partial M(z,\theta;\psi)}{\partial \theta^{(i)}}dz=\int \frac{p(z,\theta)}{\partial \theta^{(i)}}\frac{\partial M(z,\theta;\psi)}{\partial \theta^{(i)}}dz.$$

According to integral by parts, we have 
\begin{align*}
\int \frac{\partial p(z,\theta)}{\partial \theta^{(i)}}\cdot\frac{\partial\log M(z,\theta;\psi)}{\partial \theta^{(i)}}dz=\frac{\partial}{\partial \theta^{(i)}}\int p(z,\theta)\frac{\partial M(z,\theta;\psi)}{\partial \theta^{(i)}}dz-\int p(z,\theta)\frac{\partial^2 \log M(z,\theta;\psi)}{\partial \theta^{(i)2}}dz.
\end{align*}
Thus, our proof is completed.
\end{proof}

The rest of the estimation process via policy perturbation is provided in the main context in Section~\ref{sec:CLT}.

The other part omitted in Section~\ref{sec:CLT} is the details about Eq.~\ref{eq:clt} that
\begin{align*}
\hat{V}_t^{-1/2}\sqrt{n} (\hat{\theta}_t - \theta_t) \overset{D}{\to} \cN\left(0, I_d\right).
\end{align*}
Here $\hat{V}_t$ denotes the sample-based estimator of the variance, obtained by plugging in the empirical Hessian and empirical covariance matrices:
$$\widehat{H}_{\hat{\theta}_{t-1}}(\hat{\theta}_{t})=\widehat{\bE}_{z\sim \cD(\hat{\theta}_{t-1})}\nabla^2_\theta\ell(z;\hat{\theta}_t), \quad \widehat{\text{Cov}}_{z\sim\cD(\hat{\theta}_{t-1})}\big(\nabla_\theta\ell(z;\hat{\theta}_t)\big),$$
as well as the estimator for $\nabla G(\hat{\theta}_{t-1})$:
$$-\widehat{H}_{\hat{\theta}_{t-1}}(\hat{\theta}_{t})^{-1}\widehat{\bE}_{z\sim\mathcal{D}(\hat{\theta}_{t-1})}[\nabla_\theta\ell(z,\hat{\theta}_{t}) \nabla_\theta\log M(z,\hat{\theta}_{t-1},\hat{\psi})^\top],$$ 
where $\hat{\psi}$ is obtained by minimizing $\hat J_{n,k}$.

Eq.~\ref{eq:clt} is a direct result following Slutsky's theorem. Assumption~\ref{as:M} makes sure the empirical optimizer set can converge to the population optimizer set. Then other parts such as estimation of the Hessian matrix etc. could all be directly obtained by standard law of large numbers. Thus, we can directly use Slutsky's theorem to obtain Eq.~\ref{eq:clt}.

\subsection{Policy Perturbation}\label{app:policy_perturbation}
In this part, we prove the validity of $\hat{g}_{k}$ as an estimator of $\nabla G(\theta_{k})$.
Recall that we have $s(z,\theta;\psi)=\nabla_{\theta}\log M(z,\theta;\psi)$ and we further denote $s_{i}=\frac{\partial M(z,\theta;\psi)}{\partial \theta^{(i)}}$, and use $\mathbb{E}_{\theta},\hat{\mathbb{E}}_{\theta,n}$ for the expectation $\mathbb{E}_{z\sim D(\theta)}$ and empirical expectation $\mathbb{E}_{z\sim \hat{D}_{n}(\theta)}$ respectively.

%\begin{equation*}
%\begin{aligned}
%    J(\psi,\theta) &= \mathbb{E}_{z\sim \theta}[\|s(z,\theta;\psi)\|^{2}] - 2\mathbb{E}_{\theta}[\nabla_{\theta}p(z,\theta)^{\top}s(z,\theta;\psi)] 
%    \\& = \mathbb{E}_{\theta}[\|s(z,\theta;\psi)\|^{2}] - 2\sum_{i=1}^{d}\big[\frac{\partial}{\partial \theta^{(i)}}\mathbb{E}_{\theta}[s_{i}(z,\theta;\psi)]\big] + 2\sum_{i=1}^{d}\big[\mathbb{E}_{\theta}[\frac{\partial }{\partial \theta^{(i)}}s_{i}(z,\theta,\psi)]\big]
%\end{aligned}
%\end{equation*}

Firstly, we define the following function families:
\begin{equation*}
    \begin{aligned}
        \mathcal{F}_{1,\theta}:&= \{s(\cdot,\theta;\psi):\psi\in \Psi\},
        \\\mathcal{F}_{2,\theta}^{(i)}:&= \{\frac{\partial }{\partial \theta^{(i)}}s_{i}(\cdot,\theta;\psi):\psi\in \Psi\},
        \\\mathcal{F}_{3,\theta}^{(i)}:&= \{s_{i}(\cdot,\theta;\psi):\psi\in \Psi\}.
    \end{aligned}
\end{equation*}
Further, we set
\begin{equation*}
\begin{aligned}
    \hat{J}_{n,k}(\theta;\psi):&= \hat{\mathbb{E}}_{\theta,n}[\|s(z,\theta;\psi)\|^{2}] + \mathbb{E}_{\theta}\big[\|\nabla_{\theta}\log p(z,\theta)\|^{2}\big]  + 2\sum_{i=1}^{d}\big[\hat{\mathbb{E}}_{\theta,n}[\frac{\partial }{\partial \theta^{(i)}}s_{i}(z,\theta;\psi)]\big]
 \\ &\ -2\sum_{i=1}^{d}\frac{1}{\eta}\bigg(\hat{\mathbb{E}}_{\theta+\eta e^{(i)},k}\big[s_{i}(z,\theta+\eta e^{(i)};\psi)\big]-\hat{\mathbb{E}}_{\theta,n}\big[s_{i}(z,\theta;\psi)\big]\bigg),
 \\J_{n}(\theta;\psi):&= \mathbb{E}_{\theta}[\|s(z,\theta;\psi)\|^{2}] + \mathbb{E}_{\theta}\big[\|\nabla_{\theta}\log p(z,\theta)\|^{2}\big]  + 2\sum_{i=1}^{d}\big[\mathbb{E}_{\theta}[\frac{\partial }{\partial \theta^{(i)}}s_{i}(z,\theta;\psi)]\big]
       \\ &\ -2\sum_{i=1}^{d}\frac{1}{\eta}\bigg({\mathbb{E}}_{\theta+\eta e^{(i)}}\big[s_{i}(z,\theta+\eta e^{(i)};\psi)\big]-{\mathbb{E}}_{\theta}\big[s_{i}(z,\theta;\psi)\big]\bigg).
\end{aligned}
\end{equation*}

\begin{assumption}\label{app:discussas1}
We assume that the score function $s$, distribution map $D$ and the corresponding function families $\{\mathcal{F}_{1,\theta},\mathcal{F}_{2,\theta}^{(i)},\mathcal{F}_{3,\theta}^{(i)}:i=1,\ldots,d\}$ has the following properties:

(a). There is a positive constant $C>0$, such that for $\forall \theta\in \theta;\psi\in \Psi$, 
    \begin{equation*}
        \big|\frac{\partial }{\partial \theta^{(i)^{2}}}\mathbb{E}_{\theta}[s_{i}(z,\theta;\psi)]\big|\leq C<\infty.
    \end{equation*}

(b). (\textit{Enveloping function}) For $\forall \theta\in \Theta$, there is a function $H_{\theta}(z)$, such that $\mathbb{E}_{\theta}[H_{\theta}(z)^{2}]<\infty$, and for any function $f\in \mathcal{F}_{1,\theta}\bigcup(\cup_{i=1}^{d}\mathcal{F}_{2,\theta}^{(i)})\bigcup(\cup_{i=1}^{d}\mathcal{F}_{3,\theta}^{(i)})$, we have 
\begin{equation*}
    |f(z)|\leq H_{\theta}(z)
\end{equation*}

(c). (\textit{$\theta$-uniform Donsker}) Let $\mathcal{N}(\varepsilon,\mathcal{F},\|\cdot\|)$ denote the covering number, that is the minimal number of $\|\cdot\|$-balls of
radius $\varepsilon$ needed to cover the set $\mathcal{F}$, there exists $\rho(\varepsilon)>0$, such that
\begin{equation*}
    \int_{0}^{1}\sqrt{\rho(\varepsilon)}\mathrm{d}\varepsilon <\infty,
\end{equation*}
and 
\begin{equation*}
    \sup_{\theta\in \Theta}\sup_{Q}\log \mathcal{N}\big(\varepsilon\|H_{\theta}(z)\|_{Q,2},\mathcal{F}_{\theta},L_{2}(Q)\big)<\rho(\varepsilon),
\end{equation*}
where $\mathcal{F}_{\theta}\in \{\mathcal{F}_{1,\theta},\mathcal{F}_{2,\theta}^{(i)},\mathcal{F}_{3,\theta}^{(i)}:i=1,\ldots,d\}$ and $Q$ is any distribution on $\mathcal{Z}$.

(d). (\textit{Vanishing optimization error}) There exists $a_{n}=o(1)$, such that \begin{equation*}
    \hat{J}_{n,k}(\theta;\hat{\psi}(\theta))\leq \min_{\psi\in \Psi} \hat{J}_{n,k}(\theta;\psi) + a_{n}.
\end{equation*}
(e). (\textit{Richness of class}) There exists $\psi^{\ast}(\theta)$ for each $\theta$, s.t.
\begin{equation*}
    s(z,\theta;\psi^{\ast}(\theta)) = \nabla_{\theta}\log p(z,\theta).
\end{equation*}
\end{assumption}
\begin{remark}
    Assumption \ref{app:discussas1}(c) holds with $\rho(\varepsilon)=C\log\frac{1}{\varepsilon}$, when $\mathcal{F}_{\theta}$ is a VC-subgraph class for all $\mathcal{F}_{\theta}\in \{\mathcal{F}_{1,\theta},\mathcal{F}_{2,\theta}^{(i)},\mathcal{F}_{3,\theta}^{(i)}:i=1,\ldots,d\}$.
    
\end{remark}
\begin{assumption}\label{app:discussas2}
Assume that the following conditions hold:

    (a). (\textit{Smoothness}) $H_{\Tilde{\theta}}(\theta)$ is $L$-joint smooth in $(\Tilde{\theta},\theta)$ for some $L<\infty$.
    
    (b). There exists $C>0$, such that \begin{equation*}
\sup_{\Tilde{\theta}}\mathbb{E}_{\Tilde{\theta}}\big[\nabla_{\theta}\ell(z;\theta_{PS})\big]\leq C<\infty
    \end{equation*}
\end{assumption}

\begin{lemma}\label{lm:discuss1}
    Under Assumption \ref{app:discussas1}(a), we have 
    \begin{equation*}
        \sup_{\theta;\psi}|J(\theta;\psi)-J_{n}(\theta;\psi)|\leq 2dC\eta.
    \end{equation*}
\end{lemma}
\begin{proof}
By mean-value theorem, there exists $\eta^{(i)}\in [0,\eta]$, $1\leq i\leq d$ , such that 
\begin{equation*}
    \begin{aligned}
        J(\theta;\psi)-J_{n}(\theta;\psi) &= 2\sum_{i=1}^{d}\frac{1}{\eta}\bigg({\mathbb{E}}_{\theta+\eta e^{(i)}}\big[s_{i}(z,\theta+\eta e^{(i)};\psi)\big]-{\mathbb{E}}_{\theta}\big[s_{i}(z,\theta;\psi)\big]\bigg) - 2\sum_{i=1}^{d}\big[\frac{\partial}{\partial \theta^{(i)}}\mathbb{E}_{\theta}[s_{i}(z,\theta;\psi)]\big]
        \\& = 2\sum_{i=1}^{d}\big[\frac{\partial}{\partial \theta^{(i)}}\mathbb{E}_{\theta+\eta^{(i)}e^{(i)}}[s_{i}(z,\theta+\eta^{(i)}e^{(i)};\psi)]\big]-
        2\sum_{i=1}^{d}\big[\frac{\partial}{\partial \theta^{(i)}}\mathbb{E}_{\theta}[s_{i}(z,\theta;\psi)]\big]
        \\& \leq 2C(\sum_{i=1}^{d}\eta^{(i)})
        \\&\leq 2dC\eta,
    \end{aligned}
\end{equation*}
the first inequality follows from a direct application of mean-value theorem.
\end{proof}

\begin{theorem}\label{thm:discuss1}
    Let $\Theta_{d,t}=\{\hat{\theta}_{j},\hat{\theta}_{j}+\eta e^{(i)}:\ i=1,\ldots,d,\ j=1,\ldots,T\}$.
    Under Assumption \ref{app:discussas1}, for $\forall \theta\in \Theta_{d,t}$, the following inequality holds
    \begin{equation*}
        \mathbb{E}_{\theta}\big[\|\nabla \log p(z,\theta)-s(z,\theta;\psi({\theta}))\|^{2}\big|\hat{\psi}(\theta)\big] = O_{p}(\frac{1}{\sqrt{n}}+\frac{1}{\eta\sqrt{\min(n,k)}}+\eta+a_{n}).
    \end{equation*}
\end{theorem}
\begin{proof}
    Fix $\theta\in \Theta$, by Dudley's uniform entropy bound, c.f. Corollary 19.35 in \cite{van2000asymptotic}, we have
    \begin{equation*}
        \begin{aligned}
            \mathbb{E}_{\theta}\bigg[\sup_{\psi}\bigg|\hat{E}_{\theta,n}\big[\|s(z,\theta;\psi)\|^{2}\big]-E_{\theta}\big[\|s(z,\theta;\psi)\|^{2}\big]\bigg|\bigg]&\lesssim \frac{1}{\sqrt{n}},
            \\ \mathbb{E}_{\theta}\bigg[\sup_{\psi}\bigg|\hat{E}_{\theta,n}\big[\dfrac{\partial}{\partial \theta^{(i)}}s_{i}(z,\theta;\psi)\big]-E_{\theta}\big[\dfrac{\partial}{\partial \theta^{(i)}}s_{i}(z,\theta;\psi)\big]\bigg|\bigg]&\lesssim \frac{1}{\sqrt{n}},
            \\ \mathbb{E}_{\theta}\bigg[\sup_{\psi}\bigg|\hat{E}_{\theta,k}\big[s_{i}(z,\theta;\psi)\|^{2}\big]-E_{\theta}\big[\|s_{i}(z,\theta;\psi)\|^{2}\big]\bigg|\bigg]&\lesssim \frac{1}{\sqrt{k}}.
        \end{aligned}
    \end{equation*}
    Since $d,T=O(1)$, from the above results, the inequality below holds 
    \begin{equation*}
        \sup_{\psi\in \Psi}\sup_{\theta\in \Theta_{d,T}}\big|\hat{J}_{n,k}(\theta;\psi)-J_{n}(\theta;\psi)\big| = O_{p}(\frac{1}{\sqrt{n}}+\frac{1}{\eta\sqrt{\min(n,k)}}),
    \end{equation*}
    by Lemma \ref{lm:discuss1}, we know
    \begin{equation}\label{eq:discuss1}
        \sup_{\psi\in \Psi}\sup_{\theta\in \Theta_{d,T}}\big|\hat{J}_{n,k}(\theta;\psi)-J(\theta;\psi)\big| = O_{p}(\frac{1}{\sqrt{n}}+\frac{1}{\eta\sqrt{\min(n,k)}}+\eta).
    \end{equation}

    From Assumption \ref{app:discussas1}(d), we know that for any $\theta\in \Theta_{d,T}$, 
    \begin{equation*}
        \hat{J}_{n,k}(\theta;\hat{\psi}(\theta))\leq \min_{\psi\in \Psi}\hat{J}_{n,k}(\theta;\psi) + a_{n},
    \end{equation*}
    
    Using Assumption \ref{app:discussas1}(e), we know that $\min_{\psi\in\Psi}J(\theta;\psi)=0$, and assume there exists\newline  $\psi^{\ast}(\theta)\in \arg\min_{\psi\in\Psi}J(\theta;\psi)$.

    Take $\psi=\psi^{\ast}(\theta)$ and $\hat{\psi}(\theta)$ respectively in inequality \eqref{eq:discuss1}, we have
    \begin{equation*}
        \begin{aligned}
            \hat{J}_{n,k}(\theta;\psi^{\ast}(\theta)) &= O_{p}(\frac{1}{\sqrt{n}}+\frac{1}{\eta\sqrt{\min(n,k)}}+\eta),
            \\ |J(\theta;\hat{\psi}(\theta))-\hat{J}_{n,k}(\theta;\hat{\psi}(\theta))|&=O_{p}(\frac{1}{\sqrt{n}}+\frac{1}{\eta\sqrt{\min(n,k)}}+\eta).
        \end{aligned}
    \end{equation*}
    Since $\hat{J}_{n,k}(\theta;\hat{\psi}(\theta))\leq \hat{J}_{n,k}(\theta;\psi^{\ast}(\theta)) + a_{n}$, we have
    \begin{equation*}
        \begin{aligned}
            J(\theta;\hat{\psi}(\theta))&= J(\theta;\hat{\psi}(\theta)) - \hat{J}_{n,k}(\theta;\hat{\psi}(\theta)) + \hat{J}_{n,k}(\theta;\hat{\psi}(\theta)) - \hat{J}_{n,k}(\theta;\psi^{\ast}(\theta)) + \hat{J}_{n,k}(\theta;\psi^{\ast}(\theta))
            \\&= O_{p}(\frac{1}{\sqrt{n}}+\frac{1}{\eta\sqrt{\min(n,k)}}+\eta+a_{n}).
        \end{aligned}
    \end{equation*}
    By definition of ${J}(\theta;\psi)$, we have proved
    \begin{equation*}
       \mathbb{E}_{\theta}\big[\|\nabla \log p(z,\theta)-s(z,\theta;\psi({\theta}))\|^{2}\big|\hat{\psi}(\theta)\big] = O_{p}(\frac{1}{\sqrt{n}}+\frac{1}{\eta\sqrt{\min(n,k)}}+\eta+a_{n}). 
    \end{equation*}
\end{proof}

Recall that
\begin{equation*}
    \nabla G(\theta_{k})=-H_{\theta_{k}}(\theta_{k+1})^{-1}\bE_{\theta_{k}}[\nabla_\theta\ell(z;\theta_{k+1})\nabla_{\theta}\operatorname{log}p(z,\theta_{k})^\top],
\end{equation*}
and the estimator is defined by 
\begin{equation*}
     \hat{g}_{k} := -H_{\hat{\theta}_{k}}(\hat{\theta}_{k+1})^{-1}\Hat{\mathbb{E}}_{\hat{\theta}_{k,n}}\big[\nabla_{\theta}\ell(z;\hat{\theta}_{k+1})s(z,\hat{\theta}_{k};\hat{\psi}(\hat{\theta}_{k}))^{\top}\big].
\end{equation*}

\begin{theorem}[Restatement of Theorem~\ref{thm:discuss2}]
    Under Assumption \ref{as:pp} \ref{app:as1}, \ref{app:discussas1} and \ref{app:discussas2}, we have
    \begin{equation*}
        \|\hat{g}_{k}-\nabla {G}(\theta_{k})\|^2 = O_{p}(\frac{1}{\sqrt{n}}+\frac{1}{\eta\sqrt{\min(n,k)}}+\eta+a_{n}).
    \end{equation*}
\end{theorem}
\begin{proof}
    By Assumption \ref{as:pp}, 
    \begin{equation*}
        \|\nabla_{\theta}\ell(z;\theta_{k+1})-\nabla_{\theta}\ell(z;\hat{\theta}_{k+1})\|\leq \beta\|\theta_{k+1}-\hat{\theta}_{k+1}\|,
    \end{equation*}
    by Assumption \ref{as:pp}, there exists $C_{1}>0$, such that
    \begin{equation*}
        \|H_{\hat{\theta}_{k}}(\hat{\theta}_{k+1})^{-1}\|\leq C_{1}<\infty.
    \end{equation*}
The proof falls into five parts.

Let 
    \begin{equation*}
        \hat{g}_{k,1} := -H_{\hat{\theta_{k}}}(\hat{\theta}_{k+1})^{-1}\Hat{\mathbb{E}}_{\hat{\theta}_{k,n}}\big[\nabla_{\theta}\ell(z;{\theta}_{k+1})s(z,\hat{\theta}_{k};\hat{\psi}(\hat{\theta}_{k}))^{\top}\big].
    \end{equation*}
\textbf{Step 1: Convergence of $\|\hat{g}_{k}-\hat{g}_{k,1}\|$}.
    
    We have 
    \begin{equation*}
        \begin{aligned}
            \|\hat{g}_{k}-\hat{g}_{k,1}\|&\lesssim \|\hat{\theta}_{k+1}-\theta_{k+1}\|\Hat{\mathbb{E}}_{\hat{\theta}_{k,n}}\big[\|s(z,\hat{\theta}_{k};\hat{\psi}(\hat{\theta}_{k}))\|\big]\\ &=\frac{1}{n}\|\hat{\theta}_{k+1}-\theta_{k+1}\|\sum_{l=1}^{n}\|s(z_{l,k},\hat{\theta}_{k};\hat{\psi}(\hat{\theta}_{k}))\|.
        \end{aligned}
    \end{equation*}
    From Assumption \ref{app:discussas1}(b), there exists $C_{2}>0$, such that  
    \begin{equation*}
        \mathbb{E}_{\hat{\theta}_{k}}\big[\|s(z,\hat{\theta}_{k};\psi(\hat{\theta}_{k}))\|^{2}\big]\leq C_{2}<\infty,
    \end{equation*}
    by the law of large numbers, we know 
    \begin{equation*}
        \|\hat{g}_{k}-\hat{g}_{k,1}\| = O_{p}(\|\hat{\theta}_{k+1}-\theta_{k+1}\|),
    \end{equation*}
    thus combine the above result with Theorem \ref{thm:clt}, we have
    \begin{equation*}
        \|\hat{g}_{k}-\hat{g}_{k,1}\| = O_{p}(\frac{1}{\sqrt{n}}).
    \end{equation*}
    Let 
    \begin{equation*}
        \hat{g}_{k,2} := -H_{\hat{\theta_{k}}}(\hat{\theta}_{k+1})^{-1}\mathbb{E}_{\hat{\theta}_{k}}\big[\nabla_{\theta}\ell(z;{\theta}_{k+1})s(z,\hat{\theta}_{k};\hat{\psi}(\hat{\theta}_{k}))^{\top}\big].
    \end{equation*}
\textbf{Step 2: Convergence of $\|\hat{g}_{k,1}-\hat{g}_{k,2}\|$}.

    By Cauchy-Schwarz inequality, we know
    \begin{equation*}
        \begin{aligned}
            \mathbb{E}_{\hat{\theta}_{k}}\big[\big\|\nabla_{\theta}\ell(z;{\theta}_{k+1})s(z,\hat{\theta}_{k};\hat{\psi}(\hat{\theta}_{k}))^{\top}\big\|^{2}\big]
            &\leq \mathbb{E}_{\hat{\theta}_{k}}\big[\|\nabla_{\theta}\ell(z;\theta_{k+1})\|^{2}\big]\mathbb{E}_{\hat{\theta}_{k}}\big[\|s(z,\hat{\theta}_{k};\hat{\psi}(\hat{\theta}_{k}))\|^{2}\big].
        \end{aligned}
    \end{equation*}
    From Assumption \ref{app:discussas1}(b), we have
    \begin{equation*}
    \mathbb{E}_{\hat{\theta}_{k}}\big[\|s(z,\hat{\theta}_{k};\hat{\psi}(\hat{\theta}_{k}))\|^{2}\big]\leq \mathbb{E}_{\hat{\theta}_{k}}[H_{\hat{\theta}_{k}}^{2}(z)]\leq C<\infty,
    \end{equation*}
    where $H_{\hat{\theta}_{k}}(z)$ is the enveloping function for $\hat{\theta}_{k}$.

    By Assumption \ref{as:pp}, 
    \begin{equation*}
        \|\nabla_{\theta}\ell(z;\theta_{k+1})-\nabla_{\theta}\ell(z;\theta_{PS})\|\leq \beta\|\theta_{k+1}-\theta_{PS}\|,
    \end{equation*}
    since 
    \begin{equation*}
        \|\theta_{k+1}-\theta_{PS}\|\leq (\frac{\varepsilon \beta}{\gamma})^{k+1}\|\theta_{0}-\theta_{PS}\|\leq \|\theta_{0}-\theta_{PS}\|,
    \end{equation*}
    thus by Assumption \ref{app:discussas2}(b),
    \begin{equation}\label{eq:discuss2}
        \mathbb{E}_{\hat{\theta}_{k}}\big[\|\nabla_{\theta}\ell(z;\theta_{k+1})\|^{2}\big]\lesssim \mathbb{E}_{\hat{\theta}_{k}}\big[\|\nabla_{\theta}\ell(z;\theta_{PS})\|^{2}\big] + \|\theta_{0}-\theta_{PS}\|^{2} = O(1).
    \end{equation}
    Hence we have 
    \begin{equation}\label{eq:discuss3}
        \mathbb{E}_{\hat{\theta}_{k}}\big[\big\|\nabla_{\theta}\ell(z;{\theta}_{k+1})s(z,\hat{\theta}_{k};\hat{\psi}(\hat{\theta}_{k}))^{\top}\big\|^{2}\big] = O(1),
    \end{equation}
    by Chebyshev inequality and Assumption \ref{as:pp}, this lead to the following bound of $\|\hat{g}_{k,1}-\hat{g}_{k,2}\|$,
    \begin{equation*}
        \begin{aligned}
            \|\hat{g}_{k,1}-\hat{g}_{k,2}\|&\lesssim \bigg\|\Hat{\mathbb{E}}_{\hat{\theta}_{k},n}\big[\nabla_{\theta}\ell(z;{\theta}_{k+1})s(z,\hat{\theta}_{k};\hat{\psi}(\hat{\theta}_{k}))^{\top}\big]-\mathbb{E}_{\hat{\theta}_{k}}\big[\nabla_{\theta}\ell(z;{\theta}_{k+1})s(z,\hat{\theta}_{k};\hat{\psi}(\hat{\theta}_{k}))^{\top}\big]\bigg\|
            \\&=O_{p}(\frac{1}{\sqrt{n}}).
        \end{aligned}
    \end{equation*}

Let 
\begin{equation*}
        \hat{g}_{k,3} := -H_{\theta_{k}}(\theta_{k+1})^{-1}\mathbb{E}_{\hat{\theta}_{k}}\big[\nabla_{\theta}\ell(z;{\theta}_{k+1})s(z,\hat{\theta}_{k};\hat{\psi}(\hat{\theta}_{k}))^{\top}\big].
    \end{equation*}
\textbf{Step 3: Convergence of $\|\hat{g}_{k,2}-\hat{g}_{k,3}\|$}.
By Assumption \ref{app:discussas2}(a) and \eqref{eq:discuss3}, we have
\begin{equation*}
    \begin{aligned}
        \|\hat{g}_{k,2}-\hat{g}_{k,3}\|
        &\lesssim \|H_{\theta_{k}}(\theta_{k+1})^{-1}-H_{\hat{\theta_{k}}}(\hat{\theta}_{k+1})^{-1}\|
        \\&=\big\|H_{\theta_{k}}(\theta_{k+1})^{-1}\big(H_{\theta_{k}}(\theta_{k+1})-H_{\hat{\theta_{k}}}(\hat{\theta}_{k+1})\big)H_{\hat{\theta_{k}}}(\hat{\theta}_{k+1})^{-1}\big\|,
    \end{aligned}
\end{equation*}
further, using Assumption \ref{as:pp} and \ref{app:discussas2}(a), we know
\begin{equation*}
    \|\hat{g}_{k,2}-\hat{g}_{k,3}\|\lesssim \|\hat{\theta}_{k}-\theta_{k}\| + \|\hat{\theta}_{k+1}-\theta_{k+1}\|,
\end{equation*}
thus by Theorem \ref{thm:clt},
\begin{equation*}
    \|\hat{g}_{k,2}-\hat{g}_{k,3}\| = O_{p}(\frac{1}{\sqrt{n}}). 
\end{equation*}
Let 
\begin{equation*}
    \hat{g}_{k,4} := -H_{\theta_{k}}(\theta_{k+1})^{-1}\mathbb{E}_{\hat{\theta}_{k}}\big[\nabla_{\theta}\ell(z;{\theta}_{k+1})\nabla_{\theta}\log p(z,\hat
    \theta_{k})\big].
\end{equation*}
\textbf{Step 4: Convergence of $\|\hat{g}_{k,3}-\hat{g}_{k,4}\|$}.

By Assumption \ref{as:pp} and Assumption \ref{app:as1}, we have
\begin{equation*}
\begin{aligned}
    \|\hat{g}_{k,3}-\hat{g}_{k,4}\|^{2}&\lesssim \bigg\{\mathbb{E}_{\hat{\theta}_{k}}\bigg[\big\|\nabla_{\theta}\ell(z;\theta_{k+1})\big\|\times \big\|s(z,\hat{\theta}_{k};\hat{\psi}(\hat{\theta_{k}}))-\nabla_{\theta}\log p(z,\hat
    \theta_{k})\big\|\bigg]\bigg\}^{2}
    \\&\leq \mathbb{E}_{\hat{\theta}_{k}}\big[\big\|\nabla_{\theta}\ell(z;\theta_{k+1})\big\|^{2}\big]
    \\&\ \times \mathbb{E}_{\hat{\theta}_{k}}\big[\big\|s(z,\hat{\theta}_{k};\hat{\psi}(\hat{\theta_{k}}))-\nabla_{\theta}\log p(z,\hat
    \theta_{k})\big\|^{2}\big]\quad \mathrm{(by\ Cauchy-Schwarz\ inequality)}
    \\& \lesssim \mathbb{E}_{\hat{\theta}_{k}}\big[\big\|s(z,\hat{\theta}_{k};\hat{\psi}(\hat{\theta_{k}}))-\nabla_{\theta}\log p(z,\hat
    \theta_{k})\big\|^{2}\big]\quad \mathrm{(by\ formula\ \eqref{eq:discuss2})}
    \\& = O_{p}(\frac{1}{\sqrt{n}}+\frac{1}{\eta\sqrt{\min(n,k)}}+\eta+a_{n})\quad \mathrm{(by\ Theorem\ \ref{thm:discuss1})}.
\end{aligned}
\end{equation*}
\textbf{Step 5: Convergence of $\|\hat{g}_{k}-\nabla G(\theta_{k})\|$}.

Finally, recall that 
\begin{equation*}
    \nabla G(\theta_{k}) = -H_{\theta_{k}}(\theta_{k+1})^{-1}\mathbb{E}_{\hat{\theta}_{k}}\big[\nabla_{\theta}\ell(z;{\theta}_{k+1})\nabla_{\theta}\log p(z,    \theta_{k})\big],
\end{equation*}
by Assumption \ref{as:pp} and \ref{app:as1},
\begin{equation*}
    \begin{aligned}
        \|\hat{g}_{k,4}-\nabla G(\theta_{k})\|&\lesssim \mathbb{E}_{\hat{\theta}_{k}}\big[\|\nabla_{\theta}\ell(z;{\theta}_{k+1})\|\big]\times\|\hat{\theta}_{k}-\theta_{k}\|
        \\ &\lesssim \|\hat{\theta}_{k}-\theta_{k}\|
        \\ &= O_{p}(\frac{1}{\sqrt{n}})\quad (\mathrm{By\ Theorem\ \ref{thm:clt}}).
    \end{aligned}
\end{equation*}

Combining the above results, we thus have proved 
\begin{equation*}
     \|\hat{g}_{k}-\nabla G(\theta_{k})\|^2 = O_{p}(\frac{1}{\sqrt{n}}+\frac{1}{\eta\sqrt{\min(n,k)}}+\eta+a_{n}).
\end{equation*}
\end{proof}

\subsection{Details of Section~\ref{sec:ppi}: Theory of Prediction-Powered Inference under Performativity}\label{app:resultsforppi}

Without loss of generality, we let $N_t=N$ and $n_t=n$ for all $t\in[T]$.

Let us denote
\begin{align*}
     \cL_{\tilde{\theta}}(\theta):={\operatorname*{\mathbb{E}}}_{(x,y)\sim\cD(\tilde{\theta})}\ell(x,y;\theta),
     \quad \cL_{\tilde{\theta}}^{f,\lambda}(\theta):= \cL_{\tilde{\theta},n}(\theta)+\lambda\cdot(\widetilde{\cL}_{\tilde{\theta},N}^f(\theta)-\cL_{\tilde{\theta},n}^f(\theta)),
     \end{align*}
     where
     \begin{align*}
     \cL_{\tilde{\theta},n}(\theta):=\frac{1}{n}\sum_{i=1}^n\ell(x_i,y_i;\theta),~\cL_{\tilde{\theta},n}^f(\theta):=\frac{1}{n}\sum_{i=1}^n\ell\left((x_i,f(x_i));\theta\right), ~\widetilde{\cL}_{\tilde{\theta},N}^f(\theta):=\frac{1}{N}\sum_{i=1}^N\ell \left((x_i^u,f(x_i^u) );\theta \right).
\end{align*}
Here the samples $(x_i,y_i) \sim \cD(\tilde{\theta})$ and $x_i^u \sim \cD_\cX(\tilde{\theta})$ are drawn from the distribution under $\tilde{\theta}$.
Recall that we have defined $\Sigma_{\lambda,\tilde{\theta}}(\theta)=H_{\tilde{\theta}}(\theta)^{-1} \left( r V_{\lambda,\tilde{\theta}}^f(\theta) + V_{\lambda,\tilde{\theta}}(\theta)\right) H_{\tilde{\theta}}(\theta)^{-1}$ before Theorem \ref{thm:ppiclt} (in the following we sometimes omit $r$ for simplicity).

\begin{theorem} [Consistency of ${\hat \theta}^{\text{PPI}}_{t}$]
Under Assumption~\ref{as:pp} and \ref{as:ppiaddition}, if $\varepsilon<\frac{\gamma}{\beta}$, then for any given $T\ge 0$, we have that for all $t\in[T]$,
$$\hat {\theta}^{\text{PPI}}_{t+1}(\lambda_t)\xrightarrow{P} \theta_{t+1}.$$
\end{theorem}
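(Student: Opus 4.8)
The plan is to mirror the proof of Proposition~\ref{prop:consistency}, replacing the plain empirical risk minimizer $\hat G$ by its PPI analogue, and controlling the extra stochastic term coming from the unlabeled data. Concretely, define $\hat G^{\text{PPI}}_\lambda(\theta):=\argmin_{\theta'\in\Theta}\cL_\theta^{f,\lambda}(\theta')$, i.e. the PPI objective built from samples drawn at parameter $\theta$. As in the unlabeled-free case, decompose
$$\|\hat\theta^{\text{PPI}}_{t+1}(\lambda_t)-\theta_{t+1}\|
\le \|\hat G^{\text{PPI}}_{\lambda_t}(\hat\theta^{\text{PPI}}_t)-G(\hat\theta^{\text{PPI}}_t)\|
+\|G(\hat\theta^{\text{PPI}}_t)-G(\theta_t)\|,$$
where the second term is at most $\tfrac{\varepsilon\beta}{\gamma}\|\hat\theta^{\text{PPI}}_t-\theta_t\|$ by the Lipschitz bound on $G$ from \cite{perdomo2020performative} (valid under Assumption~\ref{as:pp}). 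Iterating this from $t=0$ gives $\|\hat\theta^{\text{PPI}}_{t+1}(\lambda_t)-\theta_{t+1}\|\le\sum_{i=0}^{t}(\tfrac{\varepsilon\beta}{\gamma})^{t-i}\|\hat G^{\text{PPI}}_{\lambda_i}(\hat\theta^{\text{PPI}}_i)-G(\hat\theta^{\text{PPI}}_i)\|$, so it suffices to show each summand $\xrightarrow{P}0$, using $\varepsilon<\gamma/\beta$.

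The core step is therefore the uniform law of large numbers for the PPI objective: for $\tilde\theta$ ranging over the (random) trajectory, $\cL_{\tilde\theta}^{f,\lambda}$ concentrates around $\cL_{\tilde\theta}$ near its unique minimizer $G(\tilde\theta)$. Here I would expand $\cL_{\tilde\theta}^{f,\lambda}(\theta)-\cL_{\tilde\theta}(\theta)=(\cL_{\tilde\theta,n}(\theta)-\cL_{\tilde\theta}(\theta))+\lambda\big((\widetilde{\cL}^f_{\tilde\theta,N}(\theta)-\cL^f_{\tilde\theta,n}(\theta)\big)$ and bound each piece on a small ball around $G(\tilde\theta)$ using the local Lipschitz conditions: Assumption~\ref{as:pp}(a) controls $\ell(z;\cdot)$ and Assumption~\ref{as:ppiaddition} controls $\ell(x,f(x);\cdot)$, so that $\sup_{\|\theta-G(\tilde\theta)\|\le\epsilon_0}|\cL_{\tilde\theta}^{f,\lambda}(\theta)-\cL_{\tilde\theta}(\theta)|\xrightarrow{P}0$. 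Note the population limit is unchanged because $\bE[\widetilde{\cL}^f_{\tilde\theta,N}(\theta)]=\bE[\cL^f_{\tilde\theta,n}(\theta)]$ (both marginals are $\cD_\cX(\tilde\theta)$), so the bias term vanishes in expectation and only its fluctuation must be controlled; $\lambda_t$ being a fixed constant makes this routine. Then, exactly as in Proposition~\ref{prop:consistency}, strong convexity of $\cL_{\tilde\theta}$ (inherited from Assumption~\ref{as:pp}(c), which is preserved by the PPI objective since the $\lambda$-terms are added with a $z$-independent weight and the labeled term keeps the $\tfrac{\gamma}{2}\|\cdot\|^2$ curvature) gives a $\delta>0$ gap on the sphere $\{\|\theta-G(\tilde\theta)\|=\epsilon_0\}$, a convexity argument rules out any minimizer on or outside that sphere, and hence $\|\hat G^{\text{PPI}}_{\lambda_i}(\hat\theta^{\text{PPI}}_i)-G(\hat\theta^{\text{PPI}}_i)\|\xrightarrow{P}0$.

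The main obstacle is that $\hat\theta^{\text{PPI}}_t$ is itself random, so the concentration statements above are really about $\cL^{f,\lambda}_{\tilde\theta}$ with $\tilde\theta$ a data-dependent point on the trajectory rather than a fixed parameter. I would handle this by the same inductive scheme used in Proposition~\ref{prop:consistency}: one shows the uniform convergence holds along the trajectory by conditioning on $\hat\theta^{\text{PPI}}_t$ (so that, given $\hat\theta^{\text{PPI}}_t$, the stage-$t$ samples are i.i.d.\ from $\cD(\hat\theta^{\text{PPI}}_t)$ and $\cD_\cX(\hat\theta^{\text{PPI}}_t)$), establishing an $o_P(1)$ bound that is uniform in the conditioning value over a neighborhood of the underlying $\theta_t$, and then invoking the induction hypothesis $\hat\theta^{\text{PPI}}_t\xrightarrow{P}\theta_t$ to conclude. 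Since $T$ is fixed and finite, finitely many such steps chain together without difficulty, completing the induction and the proof.
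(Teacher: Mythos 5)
Your proposal follows essentially the same route as the paper's proof: the same decomposition into $\|\hat G^{\text{PPI}}_{\lambda_t}(\hat\theta^{\text{PPI}}_t)-G(\hat\theta^{\text{PPI}}_t)\|$ plus the contraction term $\tfrac{\varepsilon\beta}{\gamma}\|\hat\theta^{\text{PPI}}_t-\theta_t\|$ from \cite{perdomo2020performative}, the same uniform convergence of $\cL^{f,\lambda}_{\tilde\theta}$ to $\cL_{\tilde\theta}$ near $G(\tilde\theta)$ using local Lipschitzness (Assumptions~\ref{as:pp}(a) and \ref{as:ppiaddition}) together with the unbiasedness $\bE[\cL^{f,\lambda}_{\tilde\theta}(\theta)]=\cL_{\tilde\theta}(\theta)$, the same strong-convexity gap on a small sphere with the convexity argument ruling out minimizers outside it, and the same geometric summation over the finitely many iterations. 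Your extra remarks on conditioning along the data-dependent trajectory make explicit what the paper handles implicitly, but the substance of the argument is identical.
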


\begin{proof}
Let us denote 
$\hat{G}_{\lambda}^f(\theta):=\argmin_{\theta'\in\Theta} \frac{\lambda}{N} \sum_{i=1}^N\ell(x^u_i,f(x^u_i);\theta') +\frac{1}{n} \sum_{i=1}^n\big(\ell(x_i,y_i;\theta')-\lambda\ell(x_i,f(x_i);\theta')\big)$, where the samples $(x_i,y_i) \sim \cD(\theta)$ and $x_i^u \sim \cD_\cX(\theta)$ are drawn for some parameter $\theta$ along the dynamic trajectory 
$\theta_0\rightarrow\hat\theta_1\rightarrow\cdots \hat\theta_t\rightarrow  \cdots $.
  \begin{align*}
    \|\theta_t-\hat{\theta}^{\text{PPI}}_t\|&=\|G(\theta_{t-1})-\hat{G}_{\lambda_{t}}^f(\hat{\theta}^{\text{PPI}}_{t-1})\|\\
    &\leq \|G(\hat{\theta}^{\text{PPI}}_{t-1})-\hat{G}_{\lambda_{t}}^f(\hat{\theta}^{\text{PPI}}_{t-1})\| + \|G(\theta_{t-1})-G(\hat{\theta}^{\text{PPI}}_{t-1})\| \\
    &\leq  \|G(\hat{\theta}^{\text{PPI}}_{t-1})-\hat{G}_{\lambda_{t}}^f(\hat{\theta}^{\text{PPI}}_{t-1})\| + \varepsilon\frac\beta\gamma \|\theta_{t-1}-\hat{\theta}^{\text{PPI}}_{t-1}\|,
\end{align*}  
where the last inequality follows from the results derived by \cite{perdomo2020performative}, under Assumption~\ref{as:pp}, we have $\|G(\theta)-G(\theta')\|\le  \frac{\varepsilon \beta}{\gamma}\|\theta-\theta'\|$.
%\textcolor{blue}{(theorem 2.1)}.  

Notice that $\mathbb E(\cL_{\hat{\theta}^{\text{PPI}}_{t-1}}^{f,\lambda_{t}}(\theta))=\cL_{\hat{\theta}^{\text{PPI}}_{t-1}}(\theta)$. By local Lipschitz condition, there exists $\epsilon_0>0$ such that
$$\sup_{\theta:\|\theta-G(\hat{\theta}^{\text{PPI}}_{t-1})\|\le\epsilon_0}|\cL_{\hat{\theta}^{\text{PPI}}_{t-1}}^{f,\lambda_{t}}(\theta)-\cL_{\hat{\theta}^{\text{PPI}}_{t-1}}(\theta)|\xrightarrow{P}0.$$
Since $\ell$ is strongly convex for any $\theta$, $G(\hat{\theta}^{\text{PPI}}_{t-1})$ is unique. Then we know that there exists $\delta$ such that $\cL_{\hat{\theta}^{\text{PPI}}_{t-1}}^{f,\lambda_{t}}(\theta)-\cL_{\hat{\theta}^{\text{PPI}}_{t-1}}(G(\hat{\theta}^{\text{PPI}}_{t-1})) > \delta$ for all $\theta$ in $\{\theta\mid\|\theta-G(\hat{\theta}^{\text{PPI}}_{t-1})\|=\epsilon_0\}$.
Then it follows that:
\begin{align*}
    &\inf_{\|\theta-G(\hat{\theta}^{\text{PPI}}_{t-1})\|=\epsilon_0}\cL_{\hat{\theta}^{\text{PPI}}_{t-1}}^{f,\lambda_{t}}(\theta)-\cL_{\hat{\theta}^{\text{PPI}}_{t-1}}^{f,\lambda_{t}}(G(\hat{\theta}^{\text{PPI}}_{t-1}))\\
    =&\inf_{\|\theta-G(\hat{\theta}^{\text{PPI}}_{t-1})\|=\epsilon_0}\Big((\cL_{\hat{\theta}^{\text{PPI}}_{t-1}}^{f,\lambda_{t}}(\theta)-\cL_{\hat{\theta}^{\text{PPI}}_{t-1}}(\theta))+(\cL_{\hat{\theta}^{\text{PPI}}_{t-1}}(\theta)-\cL_{\hat{\theta}^{\text{PPI}}_{t-1}}(G(\hat{\theta}^{\text{PPI}}_{t-1})))\\
    &+(\cL_{\hat{\theta}^{\text{PPI}}_{t-1}}(G(\hat{\theta}^{\text{PPI}}_{t-1}))-\cL_{\hat{\theta}^{\text{PPI}}_{t-1}}^{f,\lambda_{t}}(G(\hat{\theta}^{\text{PPI}}_{t-1})))\Big) \\
    \geq&\delta-o_{P}(1).
\end{align*}
Then we consider any fixed $\theta$ such that $\|\theta-G(\hat{\theta}^{\text{PPI}}_{t-1})\|\geq\epsilon_0$ it follows that
\begin{align*}
    \cL_{\hat{\theta}^{\text{PPI}}_{t-1}}^{f,\lambda_{t}}(\theta)-\cL_{\hat{\theta}^{\text{PPI}}_{t-1}}^{f,\lambda_{t}}(G(\hat{\theta}^{\text{PPI}}_{t-1}))&\geq \frac{\theta-G(\hat{\theta}^{\text{PPI}}_{t-1})}{\omega-G(\hat{\theta}^{\text{PPI}}_{t-1})}\left( \cL_{\hat{\theta}^{\text{PPI}}_{t-1}}^{f,\lambda_{t}}(\omega)-\cL_{\hat{\theta}^{\text{PPI}}_{t-1}}^{f,\lambda_{t}}(G(\hat{\theta}^{\text{PPI}}_{t-1}))\right)\\&\geq\frac{\|\theta-G(\hat{\theta}^{\text{PPI}}_{t-1})\|}{\epsilon_0}(\delta-o_P(1))\geq\delta-o_P(1),
\end{align*}
where the first inequality holds for any $\omega$ by the convexity condition of $\cL_{\hat{\theta}^{\text{PPI}}_{t-1}}^{f,\lambda_{t}}(\theta)$, and the second inequality holds as we take $\omega=\frac{\theta-G(\hat{\theta}^{\text{PPI}}_{t-1})}{\|\theta-G(\hat{\theta}^{\text{PPI}}_{t-1})\|}{\epsilon_0}+G(\hat{\theta}^{\text{PPI}}_{t-1})$ and using the above result.
Thus no $\theta$ such that $\|\theta-G(\hat{\theta}^{\text{PPI}}_{t-1})\|=\epsilon_0$ can be the minimizer of $\cL_{\hat{\theta}^{\text{PPI}}_{t-1}}^{f,\lambda_{t}}(\theta)$. Then $\|G(\hat{\theta}^{\text{PPI}}_{t-1})-\hat{G}_{\lambda_{t}}^f(\hat{\theta}^{\text{PPI}}_{t-1})\|\xrightarrow{P}0$.

We then have, for a given $T\ge 0$, we have that for all $t\in[T]$,
$$ \|\hat{\theta}^{\text{PPI}}_t-\theta_t\|\leq \sum_{i=0}^t (\varepsilon\frac\beta\gamma )^{t-i}\|G(\hat{\theta}^{\text{PPI}}_{i})-\hat{G}_{\lambda_{i}}^f(\hat{\theta}^{\text{PPI}}_{i})\|\xrightarrow{P}0.$$
Thus, we conclude that $\hat{\theta}^{\text{PPI}}_{t}\xrightarrow{P}\theta_t$.
\end{proof}

%Denote: $$H_{\tilde{\theta}}(\theta):=\nabla^2 L_{\tilde{\theta}}(\theta) , \quad V_{\lambda,\tilde{\theta}}^f(\theta) =\lambda^2 \operatorname{Cov}_{\mathcal{D}(\tilde{\theta})}(\nabla \ell^f(\theta)), \quad V_{\lambda,\tilde{\theta}}(\theta)=\operatorname{Cov}_{\mathcal{D}(\tilde{\theta})}(\nabla \ell(\theta)-\lambda \nabla \ell^f(\theta)).$$

\begin{theorem}[Central Limit Theorem of $\hat\theta^{\text{PPI}}_t(\lambda_t)$, Restatement of Theorem \ref{thm:ppiclt}]
Under Assumption~\ref{as:pp}, \ref{as:ppiaddition}, and \ref{app:ppias1}, if $\varepsilon<\frac{\gamma}{\beta}$ and $\frac{n}{N}\rightarrow r$ for some $r\ge 0$, then for any given $T\ge 0$, we have that for all $t\in[T]$,
$$\sqrt{n} \big(\hat\theta^{\text{PPI}}_t(\lambda_t) - \theta_t\big) \xrightarrow{D} \cN\Big(0, V^{\text{PPI}}_t\big(\{\lambda_j,\theta_j\}_{j=1}^t;r\big)\Big)$$
with 
$$V^{\text{PPI}}_t\big(\{\lambda_j,\theta_j\}_{j=1}^t;r\big)=\sum_{i=1}^t \left[\prod_{k=i}^{t-1}\nabla G(\theta_{k})\right] \Sigma_{\lambda_{i},\theta_{i-1}}(\theta_i;r) \left[\prod_{k=i}^{t-1}\nabla G(\theta_{k})\right]^\top.$$
\end{theorem}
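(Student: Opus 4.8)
The plan is to follow the two-step structure of the proof of Theorem~\ref{thm:clt} almost verbatim, replacing the vanilla M-estimator by the PPI M-estimator. Write $U_t := \sqrt{n}\big(\hat\theta^{\text{PPI}}_t(\lambda_t) - \theta_t\big)$ and $\tilde\theta_t := G(\hat\theta^{\text{PPI}}_{t-1})$, and decompose $\hat\theta^{\text{PPI}}_t(\lambda_t) - \theta_t = (\tilde\theta_t - \theta_t) + (\hat\theta^{\text{PPI}}_t(\lambda_t) - \tilde\theta_t)$. After rescaling, the first term is $\sqrt{n}\big(G(\hat\theta^{\text{PPI}}_{t-1}) - G(\theta_{t-1})\big)$, which is measurable with respect to the data up to step $t-1$ and will converge by differentiability of $G$; the consistency of $\hat\theta^{\text{PPI}}_t$ that is needed to localize the second term is precisely the PPI consistency theorem already established above (under Assumptions~\ref{as:pp} and \ref{as:ppiaddition}).

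The only genuinely new computation is the conditional CLT for the second term. Conditioning on $\hat\theta^{\text{PPI}}_{t-1}$, so that the step-$t$ data are i.i.d.\ from $\cD(\hat\theta^{\text{PPI}}_{t-1})$ and $\cD_\cX(\hat\theta^{\text{PPI}}_{t-1})$, I first note that the population PPI objective $\cL^{f,\lambda_t}_{\hat\theta^{\text{PPI}}_{t-1}}$ has expectation $\cL_{\hat\theta^{\text{PPI}}_{t-1}}(\theta)$, because the two $f$-dependent terms cancel in expectation; hence its minimizer is $\tilde\theta_t$ and its Hessian at $\tilde\theta_t$ equals $H_{\hat\theta^{\text{PPI}}_{t-1}}(\tilde\theta_t)$, which is positive definite by strong convexity. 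Then the same argmin-of-a-quadratic-approximation argument as in Theorem~\ref{thm:clt} (invoking Lemma~19.31 and Corollary~5.53 of \cite{van2000asymptotic}, now applied to the convex PPI loss, with Assumption~\ref{as:ppiaddition} providing the local Lipschitz control of the $f$-part) gives
$$\sqrt{n}\big(\hat\theta^{\text{PPI}}_t(\lambda_t) - \tilde\theta_t\big) \mid \hat\theta^{\text{PPI}}_{t-1} = -H_{\hat\theta^{\text{PPI}}_{t-1}}(\tilde\theta_t)^{-1} S_t + o_P(1),$$
where $S_t$ is the $\sqrt{n}$-scaled centered gradient of the empirical PPI loss at $\tilde\theta_t$. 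The key point is that $S_t$ splits into two independent pieces: a labeled-data piece $\frac{1}{\sqrt n}\sum_{i=1}^n\big(\nabla_\theta\ell(x_i,y_i;\tilde\theta_t) - \lambda_t\nabla_\theta\ell(x_i,f(x_i);\tilde\theta_t)\big)$, centered, which by the ordinary CLT converges to $\cN\big(0, V_{\lambda_t,\hat\theta^{\text{PPI}}_{t-1}}(\tilde\theta_t)\big)$; and an unlabeled-data piece $\lambda_t\frac{\sqrt n}{N}\sum_{i=1}^N\nabla_\theta\ell(x^u_i,f(x^u_i);\tilde\theta_t)$, centered, whose variance is $\tfrac{n}{N}\lambda_t^2\Cov_{x\sim\cD_\cX(\hat\theta^{\text{PPI}}_{t-1})}\!\big(\nabla_\theta\ell(x,f(x);\tilde\theta_t)\big) \to r\,V^f_{\lambda_t,\hat\theta^{\text{PPI}}_{t-1}}(\tilde\theta_t)$ under $n/N\to r$. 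Adding the two variances and conjugating by $H^{-1}$ gives exactly $\Sigma_{\lambda_t,\hat\theta^{\text{PPI}}_{t-1}}(\tilde\theta_t;r)$, so conditionally $U_t \mid U_{t-1} \xrightarrow{D} \cN\big(\sqrt n(G(\hat\theta^{\text{PPI}}_{t-1})-G(\theta_{t-1})),\, \Sigma_{\lambda_t,\hat\theta^{\text{PPI}}_{t-1}}(\tilde\theta_t;r)\big)$.

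From here the argument is identical to Step~2 of the proof of Theorem~\ref{thm:clt}: assuming inductively that $U_{t-1} \xrightarrow{D} \cN(0, V^{\text{PPI}}_{t-1})$, compute the characteristic function of $U_t$ by conditioning on $U_{t-1}$, evaluate the Gaussian integral in $u$, and pass to the limit $n\to\infty$ using the dominated convergence theorem together with $\sqrt n\big(G(U_{t-1}/\sqrt n + \theta_{t-1}) - G(\theta_{t-1})\big) \to \nabla G(\theta_{t-1})U_{t-1}$ and the continuity of $\tilde\theta \mapsto \Sigma_{\lambda_t,\tilde\theta}(\cdot;r)$. This yields the recursion $V^{\text{PPI}}_t = \nabla G(\theta_{t-1})V^{\text{PPI}}_{t-1}\nabla G(\theta_{t-1})^\top + \Sigma_{\lambda_t,\theta_{t-1}}(\theta_t;r)$, which unrolls to the claimed closed form; Assumption~\ref{app:ppias1}(b) supplies the integrable characteristic function required for the density-based manipulation, and Assumption~\ref{app:ppias1}(a) keeps the relevant covariance matrices nonsingular. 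I expect the main obstacle, and the only real departure from the vanilla case, to be the bookkeeping for $S_t$: one must check that the labeled and unlabeled gradient sums are asymptotically independent and track how the $n/N\to r$ ratio enters the variance — this is precisely the PPI++ variance calculation of \cite{angelopoulos2023ppi++}, transplanted into the conditional (given $\hat\theta^{\text{PPI}}_{t-1}$) dynamic setting.
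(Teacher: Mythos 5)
Your proposal matches the paper's proof essentially step for step: the same decomposition through $\tilde\theta_t = G(\hat\theta^{\text{PPI}}_{t-1})$, the same conditional argument via Lemma~19.31 and Corollary~5.53 of \cite{van2000asymptotic} yielding the influence-function representation with the labeled and unlabeled gradient sums (the paper's $S_1, S_2$) and the $n/N \to r$ variance bookkeeping, and the same characteristic-function induction with dominated convergence giving the recursion $V^{\text{PPI}}_t = \nabla G(\theta_{t-1})V^{\text{PPI}}_{t-1}\nabla G(\theta_{t-1})^\top + \Sigma_{\lambda_t,\theta_{t-1}}(\theta_t;r)$. It is correct and takes essentially the same route as the paper.
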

%\hui{old: $\Sigma_{\lambda_{i-1},\theta_{i-1}}(\theta_i) $, now: index changed to $\Sigma_{\lambda_{i},\theta_{i-1}}(\theta_i;r)$}

\begin{proof}
Let us denote the variance terms by $V^{\text{PPI}}_t$ for simplicity, while omitting explicit dependence on parameters in the notation.
Let $U_t:=\sqrt{n}(\hat{\theta}^{\text{PPI}}_t-\theta_t)$ and denote $\tilde{\theta}_t=G(\hat{\theta}^{\text{PPI}}_{t-1})$. We make the following decomposition:
$$\hat{\theta}^{\text{PPI}}_t - \theta_t = 
\underbrace{(\tilde{\theta}_t - \theta_t)}_{\text{(1)}} +
\underbrace{(\hat{\theta}^{\text{PPI}}_t- \tilde{\theta}_t)}_{\text{(2)}}.$$

\textbf{Step 1: Conditional distribution of $U_t|U_{t-1}$.} 

For term (1), we have 
\begin{equation*}
    \sqrt{n} (\tilde{\theta}_t - {\theta}_t)=\sqrt{n} (G(\hat{\theta}^{\text{PPI}}_{t-1}) - G(\theta_{t-1})) .
\end{equation*}

For term (2), the empirical process analysis in \citep{angelopoulos2023ppi++} establishes that
\begin{equation*}
    \sqrt{n} (\hat{\theta}^{\text{PPI}}_t- \tilde{\theta}_t) | \hat{\theta}^{\text{PPI}}_{t-1} \xrightarrow{D} \cN(0, \Sigma_{\lambda_{t},\hat{\theta}^{\text{PPI}}_{t-1}}(\tilde{\theta}_t;r)),
\end{equation*}
where the variance is given by
$$\Sigma_{\hat{\theta}^{\text{PPI}}_{t-1}}(\tilde{\theta}_t;r) = H_{\hat{\theta}^{\text{PPI}}_{t-1}}(\tilde{\theta}_t)^{-1} \left( r V_{\lambda_{t},\hat{\theta}^{\text{PPI}}_{t-1}}^f(\tilde{\theta}_t) + V_{\lambda_{t},\hat{\theta}^{\text{PPI}}_{t-1}}(\tilde{\theta}_t)\right) H_{\hat{\theta}^{\text{PPI}}_{t-1}}(\tilde{\theta}_t)^{-1}.$$

Conditioning on $\hat{\theta}^{\text{PPI}}_{t-1}$, for any function $h$, we use the following shorthand notations:
\begin{align*}
    & \mathbb E_n h := \frac{1}{n} \sum_{i=1}^{n} h(x_i, y_i), \quad  \mathbb G_n h := \sqrt{n} ( \mathbb E_n h - \mathbb  E_{(x,y)\sim\mathcal{D}(\hat{\theta}^{\text{PPI}}_{t-1})}[h(x, y)]),\\
    & \hat{\mathbb E}_N^f h := \frac{1}{N} \sum_{i=1}^{N} h(x^u_i, f(x^u_i)), \quad \hat{\mathbb G}_N^f h := \sqrt{N} (\hat{\mathbb E}_N h - \mathbb E_{x\sim \cD_\cX(\hat{\theta}^{\text{PPI}}_{t-1})}[h(x, f(x))]),\\
    & \hat{\mathbb  E}_n^f h := \frac{1}{n} \sum_{i=1}^{n} h(x_i, f(x_i)), \quad \hat{\mathbb G}_n^f h := \sqrt{n} (\hat{\mathbb E}_n h -  \mathbb E_{x\sim \cD_\cX(\hat{\theta}^{\text{PPI}}_{t-1})}[h(x, f(x))]).
\end{align*}

Note that $\tilde{\theta}_t=G(\hat{\theta}^{\text{PPI}}_{t-1})$. Recall that $$\cL_{\tilde{\theta}}(\theta):=\underset{(x,y)\sim\mathcal{D}(\tilde{\theta})}{\operatorname*{\mathbb{E}}}\ell(x,y;\theta), \quad \cL_{\tilde{\theta}}^{f,\lambda}(\theta):= \cL_{\tilde{\theta},n}(\theta)+\lambda\cdot(\widetilde{\cL}_{\tilde{\theta},N}^f(\theta)-\cL_{\tilde{\theta},n}^f(\theta)).$$
Under the assumptions, Lemma 19.31 in \cite{van2000asymptotic} implies that for every sequence $h_n=O_P(1)$, we have 
\begin{align*}
    & \mathbb G_n \left[ \sqrt{n}\left( \ell(x,y; \tilde{\theta}_t+\frac{h_n}{\sqrt{n}} )- \ell(x,y; \tilde{\theta}_t ) \right) - h_n ^\top \nabla_{\theta} \ell(x,y; \tilde{\theta}_t ) \right] \xrightarrow{P}0,\\
    & \hat{\mathbb G}_N^f \left[ \sqrt{n}\left( \ell(x,y; \tilde{\theta}_t+\frac{h_n}{\sqrt{n}} )- \ell(x,y; \tilde{\theta}_t ) \right) - h_n ^\top \nabla_{\theta} \ell(x,y; \tilde{\theta}_t ) \right] \xrightarrow{P}0,\\
    & \hat{\mathbb G}_n^f \left[ \sqrt{n} \left( \ell(x,y; \tilde{\theta}_t+\frac{h_n}{\sqrt{n}} )- \ell(x,y; \tilde{\theta}_t ) \right) - h_n ^\top \nabla_{\theta} \ell(x,y; \tilde{\theta}_t ) \right] \xrightarrow{P}0.
\end{align*}

Applying second-order Taylor expansion, we obtain that
\begin{align*}
    n\mathbb E_n\left( \ell(x,y; \tilde{\theta}_t+\frac{h_n}{\sqrt{n}} )- \ell(x,y; \tilde{\theta}_t ) \right) 
    &=n\left( \cL_{ \hat{\theta}^{\text{PPI}}_{t-1} }(\tilde{\theta}_t+\frac{h_n}{\sqrt{n}})-\cL_{ \hat{\theta}^{\text{PPI}}_{t-1} }(\tilde{\theta}_t) \right)+h_n^{\top} \mathbb G_n\nabla_{\theta} \ell(x,y;\tilde{\theta}_t)+o_p(1)\\
    &=\frac{1}{2}h_n^\top H_{\hat{\theta}^{\text{PPI}}_{t-1}}(\tilde{\theta}_t) h_n +h_n^{\top} \mathbb G_n\nabla_{\theta} \ell(x,y;\tilde{\theta}_t)+o_p(1).
\end{align*}

Based on similar calculation of the previous two terms, we can obtain that:
\begin{align*}
    &n\left( \cL^{f,\lambda}_{ \hat{\theta}^{\text{PPI}}_{t-1} }(\tilde{\theta}_t+\frac{h_n}{\sqrt{n}})-\cL^{f,\lambda}_{ \hat{\theta}^{\text{PPI}}_{t-1} }(\tilde{\theta}_t) \right)\\
    =&\frac{1}{2}h_n^\top H_{\hat{\theta}^{\text{PPI}}_{t-1}}(\tilde{\theta}_t) h_n +h_n^{\top} \left(\mathbb G_n+\lambda\sqrt{\frac{n}{N}}\hat{\mathbb G}_N^f-\lambda\hat{\mathbb G}_n^f \right)\nabla_{\theta} \ell(x,y;\tilde{\theta}_t)+o_p(1).
\end{align*}

By considering $h_n^*=\sqrt{n}(\hat{\theta}^{\text{PPI}}_t- \tilde{\theta}_t)$ and $h_n=-H_{\hat{\theta}^{\text{PPI}}_{t-1}}(\tilde{\theta}_t)^{-1}\left(\mathbb G_n+\lambda\sqrt{\frac{n}{N}}\hat{\mathbb G}_N^f-\lambda\hat{\mathbb G}_n^f \right)\nabla_{\theta} \ell(x,y;\tilde{\theta}_t)$, Corollary 5.53 in \cite{van2000asymptotic} implies they are $O_P(1)$ and we obtain that 
\begin{align*}
    &n\left( \cL^{f,\lambda}_{ \hat{\theta}^{\text{PPI}}_{t-1} }(\hat{\theta}^{\text{PPI}}_t)-\cL^{f,\lambda}_{ \hat{\theta}^{\text{PPI}}_{t-1} }(\tilde{\theta}_t) \right)=\frac{1}{2}h_n^{*\top} H_{\hat{\theta}^{\text{PPI}}_{t-1}}(\tilde{\theta}_t) h_n^* +h_n^{*\top} \left(\mathbb G_n+\lambda\sqrt{\frac{n}{N}}\hat{\mathbb G}_N^f-\lambda\hat{\mathbb G}_n^f \right)\nabla_{\theta} \ell(x,y;\tilde{\theta}_t)+o_p(1)\\
    &n\left( \cL^{f,\lambda}_{ \hat{\theta}^{\text{PPI}}_{t-1} }(\tilde{\theta}_t+\frac{h_n}{\sqrt{n}})-\cL^{f,\lambda}_{ \hat{\theta}^{\text{PPI}}_{t-1} }(\tilde{\theta}_t) \right)=-\frac{1}{2}h_n^\top H_{\hat{\theta}^{\text{PPI}}_{t-1}}(\tilde{\theta}_t) h_n+o_P(1).
\end{align*}
Since $\hat{\theta}^{\text{PPI}}_t$ is the minimizer of $\cL^{f,\lambda}_{ \hat{\theta}^{\text{PPI}}_{t-1} }$, the first term is smaller than the second term. We can rearrange the terms and obtain:
$$\frac{1}{2}(h_n^{*}-h_n)^{T}H_{\hat{\theta}^{\text{PPI}}_{t-1}}(\tilde{\theta}_t)(h_n^{*}-h_n)=o_P(1),$$
which leads to $h_n^{*}-h_n=O_P(1)$.
Then the above asymptotic normality result follows directly by applying the central limit theorem (CLT) to the following terms, conditioning on $\hat{\theta}^{\text{PPI}}_{t-1}$:
\begin{align*}
    &\sqrt{n} (\hat{\theta}^{\text{PPI}}_t- \tilde{\theta}_t)| \hat{\theta}^{\text{PPI}}_{t-1}  =-H_{\hat{\theta}^{\text{PPI}}_{t-1}}(\tilde{\theta}_t)^{-1} \left( S_1+S_2 \right)  +o_P(1),\\
    &S_1=\lambda_{t} \sqrt{\dfrac{n}{N}} \sqrt{\dfrac{1}{N}}\sum_{i=1}^N\Big(\nabla_{\theta} \ell(x_{t,i}^u,f(x_{t,i}^u);\tilde{\theta}_t) - \underset{x\sim \cD_\cX(\hat{\theta}^{\text{PPI}}_{t-1})}{\operatorname*{\mathbb{E}}} \nabla_{\theta}\ell(x,f(x);\tilde{\theta_t})\Big),\\
    &S_2=\sqrt{\dfrac{1}{n}}\sum_{i=1}^n\Big(\nabla_{\theta} \ell(x_{t,i},y_{t,i};\tilde{\theta}_t)-\lambda_{t}\nabla_{\theta} \ell(x_{t,i},f(x_{t,i});\tilde{\theta}_t) \\&\quad- \underset{(x,y)\sim\mathcal{D}(\hat{\theta}^{\text{PPI}}_{t-1})}{\operatorname*{\mathbb{E}}} [\nabla_{\theta}\ell(x,y;\tilde{\theta_t})-\lambda_{t}\nabla_{\theta}\ell(x,f(x);\tilde{\theta_t})]\Big).
\end{align*}

Note that, conditioning on $\hat{\theta}^{\text{PPI}}_{t-1}$, (1) is a constant. Therefore, (1) and (2) follow a joint Gaussian distribution. Consequently, given ${U}_{t-1}$, the conditional distribution of ${U}_t$ is given by: 
\begin{align*}
    {U}_t|{U}_{t-1} &=\sqrt{n}(\hat{\theta}^{\text{PPI}}_t - \theta_t) |\hat{\theta}^{\text{PPI}}_{t-1} \\
    &=\sqrt{n} (\tilde{\theta}_t - {\theta}_t)+\sqrt{n} (\hat{\theta}^{\text{PPI}}_t- \tilde{\theta}_t) | \hat{\theta}^{\text{PPI}}_{t-1} \\
    &=\sqrt{n} (G(\hat{\theta}^{\text{PPI}}_{t-1}) - G(\theta_{t-1}))+\sqrt{n} (\hat{\theta}^{\text{PPI}}_t- \tilde{\theta}_t) | \hat{\theta}^{\text{PPI}}_{t-1} \\
    &\xrightarrow{D} \cN\left(\sqrt{n} (G(\hat{\theta}^{\text{PPI}}_{t-1}) - G(\theta_{t-1})) , \Sigma_{\lambda_{t},\hat{\theta}^{\text{PPI}}_{t-1}}(\tilde{\theta}_t;r)\right).\\
    &= \cN\left(\sqrt{n}(G(\frac{{U}_{t-1}}{\sqrt{n}}+\theta_{t-1})-G(\theta_{t-1})), \Sigma_{\lambda_{t},\frac{{U}_{t-1}}{\sqrt{n}}+\theta_{t-1}}(G(\frac{{U}_{t-1}}{\sqrt{n}}+\theta_{t-1});r)\right).
\end{align*}
For later references, we denote $ {U}_t\mid{U}_{t-1}\xrightarrow{D} \cN (\mu({U}_{t-1}),\Sigma({U}_{t-1};r))$.

\textbf{Step 2: Marginal distribution of ${U}_t$.} We calculate the characteristic function of ${U}_t$ by induction. To begin with, we directly have
\begin{align*}
    X_1\xrightarrow{D} \cN(0, V^{\text{PPI}}_1), \quad V^{\text{PPI}}_1=\Sigma_{\lambda_0,\theta_0}(\theta_1;r).
\end{align*}
Now, assume that $ {U}_{t-1}\xrightarrow{D} \cN(0, V^{\text{PPI}}_{t-1})$, we derive the joint distribution of $({U}_t,{U}_{t-1})$ and marginal distribution of ${U}_t$.Then we have, the characteristics functions $\phi$ and the probability density function $p$ of distributions $U_{t-1}$ and $U_{t}\mid U_{t-1}$ follow:
$$\phi_{U_{t-1}}(s)\rightarrow \phi_{\cN(0,V^{\text{PPI}}_{t-1})}(s)=\exp(-\frac{1}{2}s^{T}V^{\text{PPI}}_{t-1}s), \quad p_{U_{t-1}}(u)=\frac{1}{(2\pi)^{d}}\int e^{-iz^{T}u}\phi_{U_{t-1}}(z)dz $$
$$\phi_{U_{t}\mid U_{t-1}}(s) \rightarrow \phi_{ \cN (\mu({U}_{t-1}),\Sigma({U}_{t-1};r))}(s)=\exp(is^{T}\mu(U_{t-1})-\frac{1}{2}s^{T}\Sigma(U_{t-1};r)s).$$

Then according to the proof of vanilla CLT under performativity in Section~\ref{subsec:vanillaclt}, we have:
\begin{align*}
\phi_{U_{t}}(s)=\frac{1}{(2\pi)^{\frac{d}{2}}\det|V^{\text{PPI}}_{t-1}|}\int\exp\left(is^{T}\mu(U_{t-1})-\frac{1}{2}s^{T}\Sigma(U_{t-1};r)s-\frac{1}{2}u^{T}V^{\text{PPI}}_{t-1}u\right)du.
\end{align*}

Apply dominant convergence theorem to $\lim_{n\to\infty}\phi_{U_{t}}(s)$, we have:
\begin{align*}
    &\lim_{n\to\infty}\phi_{U_{t}}(s)
    =\lim_{n\to\infty}\frac{1}{(2\pi)^{\frac{d}{2}}\det|V^{\text{PPI}}_{t-1}|}\int
    \exp\Bigl(is^{T}\mu(U_{t-1})
    -\tfrac{1}{2}s^{T}\Sigma(U_{t-1};r)s
    -\tfrac{1}{2}u^{T}V^{\text{PPI}}_{t-1}u\Bigr)\,du
    \\
    &=\lim_{n\to\infty}\frac{1}{(2\pi)^{\frac{d}{2}}\det|V^{\text{PPI}}_{t-1}|}\int
    \exp\Bigl(is^{T}\sqrt{n}\bigl(G(\tfrac{U_{t-1}}{\sqrt{n}}+\theta_{t-1})-G(\theta_{t-1})\bigr)\\
    &\quad -\tfrac{1}{2}s^{T}\Sigma_{\lambda_{t},\frac{U_{t-1}}{\sqrt{n}}+\theta_{t-1}}
    \bigl(G(\tfrac{U_{t-1}}{\sqrt{n}}+\theta_{t-1});r\bigr)s
    -\tfrac{1}{2}u^{T}V^{\text{PPI}}_{t-1}u\Bigr)\,du
    \\
    &=\frac{1}{(2\pi)^{\frac{d}{2}}\det|V^{\text{PPI}}_{t-1}|}\int
    \exp\Bigl(is^{T}\sqrt{n}\bigl(G(\tfrac{U_{t-1}}{\sqrt{n}}+\theta_{t-1})-G(\theta_{t-1})\bigr)\\
    &\quad -\tfrac{1}{2}s^{T}\Sigma_{\lambda_{t},\frac{U_{t-1}}{\sqrt{n}}+\theta_{t-1}}
    \bigl(G(\tfrac{U_{t-1}}{\sqrt{n}}+\theta_{t-1});r\bigr)s
    -\tfrac{1}{2}u^{T}V^{\text{PPI}}_{t-1}u\Bigr)\,du
    \\
    &=\frac{1}{(2\pi)^{\frac{d}{2}}\det|V^{\text{PPI}}_{t-1}|}\int
    \exp\Bigl(is^{T}\nabla G(\theta_{t-1})u
    -\tfrac{1}{2}s^{T}\Sigma_{\lambda_{t},\theta_{t-1}}(G(\theta_{t-1});r)s
    -\tfrac{1}{2}u^{T}V^{\text{PPI}}_{t-1}u\Bigr)\,du
    \\
    &=\exp\Bigl(-\tfrac{1}{2}s^{T}\nabla G(\theta_{t-1})V^{\text{PPI}}_{t+1}\nabla G(\theta_{t-1})^{T}s
    -\tfrac{1}{2}s^{T}\Sigma_{\lambda_{t},\theta_{t-1}}(\theta_{t};r)s\Bigr),
\end{align*}
which is the characteristic function of $\cN(0,V^{\text{PPI}}_t)$, where $V^{\text{PPI}}_t= \nabla G(\theta_{t-1})V^{\text{PPI}}_{t-1} \nabla G(\theta_{t-1})^\top + \Sigma_{\lambda_{t},\theta_{t-1}}(\theta_t;r) $. 
Here we use the fact that $\lim_{n\to \infty}\sqrt{n} \left( G( \frac{y}{\sqrt{n}} + \theta_{t-1} ) -  G(\theta_{t-1}) \right)=\nabla G(\theta_{t-1}) y$, and the dominant convergence theorem holds as we have 
\begin{align*}
    &\quad|\exp(is^{T}\sqrt{n}(G(\frac{{U}_{t-1}}{\sqrt{n}}+\theta_{t-1})-G(\theta_{t-1}))-\frac{1}{2}s^{T}\Sigma_{\lambda_{t},\frac{{U}_{t-1}}{\sqrt{n}}+\theta_{t-1}}(G(\frac{{U}_{t-1}}{\sqrt{n}}+\theta_{t-1});r)s-\frac{1}{2}u^{T}V^{\text{PPI}}_{t-1}u)|\\&\leq |\exp(-\frac{1}{2}u^{T}V^{\text{PPI}}_{t-1}u)|.
\end{align*}

Thus we conclude by induction that 
\begin{align*}
    & {U}_t\xrightarrow{D} \cN\left(0, V^{\text{PPI}}_t\right),\\
    &V^{\text{PPI}}_t=\sum_{i=1}^t \left[\prod_{k=i}^{t-1}\nabla G(\theta_{k})  \right] \Sigma_{\lambda_{i-1},\theta_{i-1}}(\theta_i;r) \left[\prod_{k=i}^{t-1}\nabla G(\theta_{k})\right]^\top.
\end{align*}

And we have:
\begin{align*}
    \nabla G(\theta_k)&=-\left[ \underset{(x,y)\sim\mathcal{D}(\theta_k)}{\operatorname*{\mathbb{E}}}\nabla_{\theta}^2\ell(x,y;\theta_{k+1}) \right]^{-1} \big( \nabla_{\tilde{\theta}}\underset{(x,y)\sim\mathcal{D}(\theta_k)}{\operatorname*{\mathbb{E}}}\nabla_{\theta}\ell(x,y;\theta_{k+1}) \big)
   \\ &= -H_{\theta_k}(\theta_{k+1})^{-1}\big( \nabla_{\tilde{\theta}}\underset{(x,y)\sim\mathcal{D}(\theta_k)}{\operatorname*{\mathbb{E}}}\nabla_{\theta}\ell(x,y;\theta_{k+1}) \big) \\&=-H_{\theta_k}(\theta_{k+1})^{-1}\mathbb{E}_{z\sim\mathcal{D}(\theta_k)}[\nabla_\theta\ell(z,\theta_{k+1})\nabla_\theta\log p(z,\theta_k)^\top].
\end{align*}

\end{proof}

%%%%%%%%%%%%%%%%%%%%%%%%%%%%%%%%%%%%%%%%%%%%%%%%%%%%%%%%%%%%
\renewcommand{\thefigure}{A\arabic{figure}}
\renewcommand{\thetable}{A\arabic{table}}
\setcounter{figure}{0}
\setcounter{table}{0}

\section{Experimental Details}\label{app:exp}

\subsection{Additional Experimental Details} 
%\zhun{what is our objectivezxc}
As described in Section~\ref{sec:exp}, we construct simulation studies on a performative linear regression problem, where data are sampled from $D(\theta)$ as 
\begin{align*}
    y=\alpha^\top x+\mu^\top\theta+\nu,\  x\sim\mathcal{N}(\mu_x,\Sigma_x),\  \nu\sim\mathcal{N}(0,\sigma_y^2).
\end{align*}
At each time step $t$, the label $y_t$ is updated with $\hat{\theta}_{t-1}$ via the above equation, and then $\hat{\theta}_{t}$ is obtained by empirical repeated risk minimization with the updated data $z_t=(x_t, y_t)$.
The objective of this task is to provide inference on an unbiased $\hat\theta_t$ with low variance, that is, the ground-truth $\theta_t$ is covered by the confidence region of $\hat\theta_t$ with high probability, and the width of this confidence region is small.

Given a set of labeled data, we can obtain the underlying $\theta_t$ as
\begin{equation}\label{eq:theta_t}
    \theta_{t} = (\Sigma_x + \mu_x\mu_x^\top + \gamma I_d)^{-1}\left( \mu_x\mu^\top\theta_{t-1}+(\Sigma_x+\mu_x\mu_x^\top)\alpha \right).
\end{equation}
To compute the coverage and width of a confidence region $\hat{\cR}_t(n,\delta)$ for $\theta_t$, we run 1000 independent trials.
For each trial $j$, we sample $\hat{\theta}_{t,j}$ together with with its estimated variance $\hat{V}_{t,j}$, and construct two-sided normal intervals for each coordinate $i= 1,\dots, d$:
\begin{align*}
    \left[\hat{\theta}_{t,j}^{(i)}\pm q_{1-\frac{\delta}{2d}}\sqrt{\hat{V}_{t,j}^{(i)}/n} \right],\quad q_{1-\frac{\delta}{2d}} = \Phi^{-1}(1-\frac{\delta}{2d}),
\end{align*}
where $d=2$ is the parameter dimension, $\delta=0.1$ the significance level, $n$ the data size, and $\Phi^{-1}$ the standard normal quantile.
The interval width of each trial is averaged over $d$ coordinate intervals, and we count this trial as covered if the ground-truth $\theta_t$ lies inside \textit{all} $d$ coordinate intervals simultaneously. 
Finally, we report the average width and coverage rate over all trials.

Similarly, to compute the coverage for performative stable point $\theta_{\rm PS}$, we can obtain the close-form $\theta_{\rm PS}$ for this task as follows:
\begin{equation}\label{eq:theta_ps}
     \theta_{\rm PS} = (\Sigma_x+\mu_x\mu_x^\top-\mu_x\mu^\top+\gamma I_d)^{-1}(\Sigma_x+\mu_x\mu_x^\top).
\end{equation}
As defined in Corollary~\ref{col:ps}, the confidence region for $\theta_{\rm PS}$ is constructed with 
\begin{align*}
    \hat{\cR}_t(n,\delta)+\cB\Big(0,2B\big(\frac{\varepsilon\beta}{\gamma}\big)^{t}\Big) ,
\end{align*}
where $\varepsilon=\|\mu\|_2$, $B=\|\theta_0 - \theta_{\rm PS}\|_2$, and $$\beta=\max\left\{\max_{x\in \mathcal{X}}\{ \|x\|_2^2+\gamma \}, \max_{(x,y)\in(\mathcal{X}, \mathcal{Y}),\theta\in \Theta}\{ \sqrt{(x^\top\theta-y+\|x\|_2\|\theta\|_2)^2 + \|x\|_2^2} \}\right\}.$$
Here we take $\cX=\{x: \|x\|_2^2\leq 20\}$. Note that the closed form expressions for the update and the performatively stable point in Eq.~\ref{eq:theta_t} and Eq.~\ref{eq:theta_ps} hold for any distribution of $x$ with mean $\mu_x$ and variance $\Sigma_x$, and $\nu$ with mean $0$ and variance $\sigma_y^2$.
For easier calculation for the smoothness parameter, we truncate the normal distribution of $(x,y)$ such that $\|x\|_2^2\leq 20$. The mean and variance of the resulting truncated distribution can be well approximated by those of the original normal distribution due to the concentration of Gaussian. 

We run our experiments on NVIDIA GPUs A100 in a single-GPU setup.

\subsection{Additional Experimental Results}

\paragraph{Ablation study on effects of $\gamma$.}
In Figure~\ref{fig:CI_g}, we compare confidence‐region performance under regularization strengths $\gamma=1$ and $\gamma=3$.
Together with results of $\gamma=2$ in Figure~\ref{fig:CI_1}, we can find that as $\gamma$ increases, the gap between the coverage for $\theta_t$ (solid curve) and the bias‐adjusted coverage for $\theta_{\rm PS}$ (dashed curve) vanishes more quickly across iterations.
For example, at $t=3$, the dashed and solid curves are tightly closed for $\gamma=3$, while a substantial gap remains for $\gamma=1$.
This phenomenon derives that the larger $\gamma$ yields a more strongly convex loss, which both accelerates convergence of the estimate $\hat{\theta}_t$ to its stable point and reduces the performative bias $\|\theta_t - \theta_{\rm PS}\|$. 
Consequently, the bias-awared intervals converge for $\theta_{\rm PS}$ to the original ones for $\theta_t$ in fewer iterations when $\gamma$ is larger.

\paragraph{Ablation study on effects of $\varepsilon$.}
In Figure~\ref{fig:CI_mu}, we compare confidence‐region performance under sensitivity $\varepsilon\approx0.003$ and $\varepsilon\approx0.03$.
We can find that as $\varepsilon$ increases, the gap between the coverage for $\theta_t$ (solid curve) and the bias‐adjusted coverage for $\theta_{\rm PS}$ (dashed curve) vanishes more slowly across iterations. 
For example, for $\varepsilon\approx0.003$, the dashed curves tightly upper-bound the solid curves at $t=3$, whereas for $\varepsilon\approx0.03$, a noticeable gap persists even at $t=5$.
This behavior is because a higher $\varepsilon$ amplifies the performative shift (the dependence of the label distribution on $\theta$), which increases the performative bias.
That is, stronger sensitivity requires more iterations for $\hat{\theta}_t$ to approach its stable point, slowing down convergence of the two confidence regions.

\paragraph{Ablation study on effects of $\sigma_y^2$.}
In Figure~\ref{fig:CI_y}, we compare confidence‐region performance under noise level $\sigma_y^2=0.1$ and $\sigma_y^2=0.4$.
We observe that across all settings, PPI with our greedy-selected $\hat{\lambda}$ is essentially never worse than either baseline $\lambda=1$ or $\lambda=0$. 
When the noise is low ($\sigma_y^2=0.1$), greedy $\hat{\lambda}$ behaves similarly to $\lambda=0$, placing almost all weight on the true labels.
Conversely, when the noise is high ($\sigma_y^2=0.4$), greedy $\hat{\lambda}$ behaves like $\lambda=1$, relying more heavily on pseudo‐labels to reduce variance.
For the intermediate noise level $\sigma_y^2=0.2$ in Figure~\ref{fig:CI_1}, greedy $\hat{\lambda}$ significantly outperforms both baselines by hitting the optimal bias–variance balance.

\subsection{Case Study on Semi-synthetic Dataset}
\label{sec:case}
In Section~\ref{sec:exp}, we originally consider experiments on a synthetic dataset because the performative prediction is an on-policy setting, which means we need to collect the corresponding data every time we update the parameter (policy). In all the previous literature on performative prediction, no such dataset is provided. Alternatively, previous work always uses a \textbf{semi-synthetic} dataset, which one will need to specify how the data distribution will react and shift according to the new policy.  

Following~\citet{perdomo2020performative}, we further conduct a case study in a semi-synthetic way on a realistic credit scoring task using a Kaggle dataset~\footnote{https://www.kaggle.com/c/GiveMeSomeCredit/data}. The dataset contains features of individuals and a binary label indicating whether a loan should be granted or not. Consider the setting where a bank uses a logistic regression classifier $\theta$ trained on features of loan applicants to predict their creditworthiness, while the individual applicants respond to this classifier by manipulating their features to induce a positive classification. Following [2], we can formulate this task as performative prediction because applicants' feature distribution $\mathcal{D}(\theta)$ is strategically adapted in response to $\theta$. By applying repeated risk minimization, a performative stable point $\theta_{PS}$ can be achieved.

We treat the data points in the original dataset as the true distribution to compute $\theta_{PS}$. We add Gaussian noise to the original data feature to generate an unlabeled set of the same size. Then, we sample varying $n$ labeled points with $N=18000$ unlabeled points and perform $t=5$ repeated risk minimization steps to compute the estimated $\hat{\theta}_t$ and build the confidence region for it over 100 independent trials. From the experimental results, we find that a coverage of 0.9 is achieved with decreasing width as $n$ increases. Notably, our optimized greedy $\hat{\lambda}_t$ (orange) achieves the highest coverage and narrowest confidence width compared with when $\lambda$ is fixed to $0$ or $1$. The results support our proposed theory and strengthen the practical significance of our methods. We hope this case study can inspire future work for practical settings of PPI under performativity.

\begin{figure}[htbp]
    \centering
    \begin{subfigure}[t]{\textwidth}
        \includegraphics[width=\linewidth]{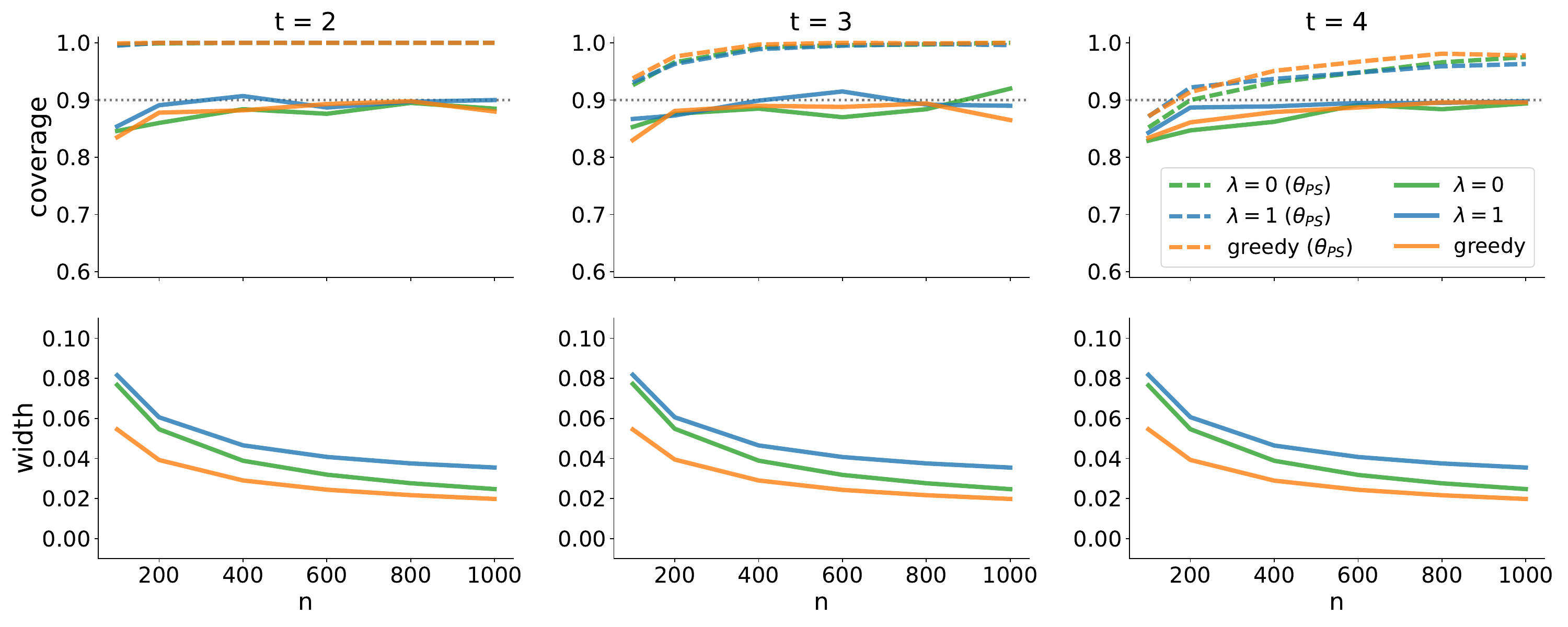}
        \caption{Confidence-region coverage and width with $\gamma =1$.}
        \label{fig:CI_g1}
    \end{subfigure}
    \vspace{5pt}
    \begin{subfigure}[t]{\textwidth}
        \includegraphics[width=\linewidth]{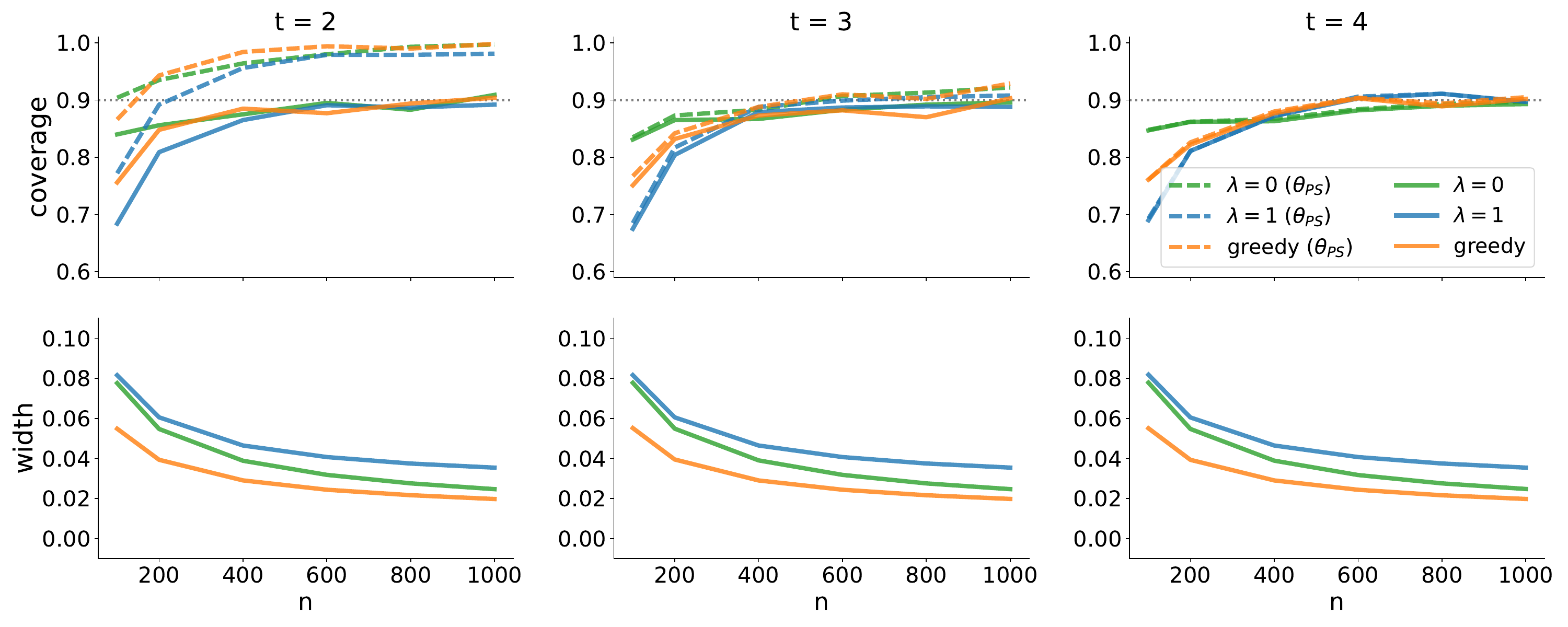}
        \caption{Confidence-region coverage and width with $\gamma =3$.}
        \label{fig:CI_g3}
    \end{subfigure}
    \caption{Confidence-region coverage (top row) and width (bottom row) with different choices of $\lambda$. The setup is the same as in Figure~\ref{fig:CI_1}, only we change $\gamma =1$ or $\gamma =3$.}
    \label{fig:CI_g}
\end{figure}

\begin{figure}[htbp]
    \centering
    \begin{subfigure}[t]{\textwidth}
        \includegraphics[width=\linewidth]{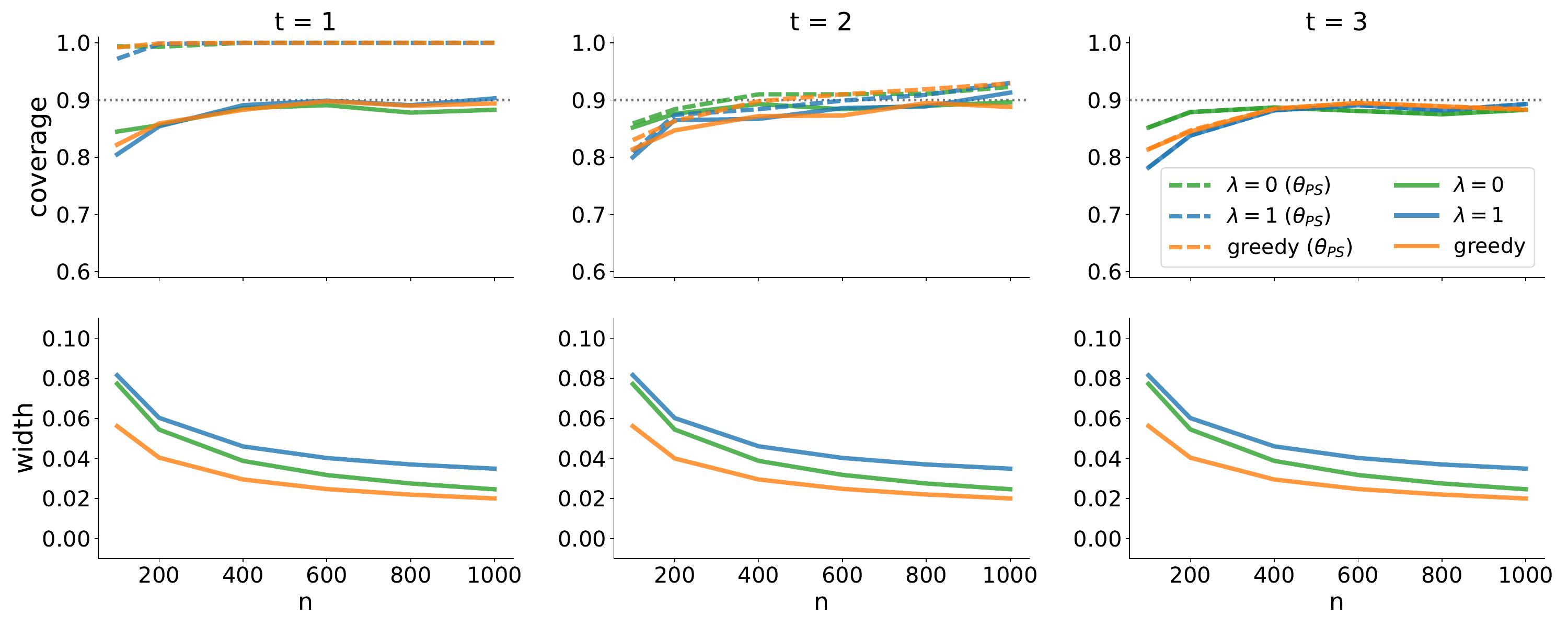}
        \caption{Confidence-region coverage and width with $\varepsilon\approx 0.003$.}
        \label{fig:CI_mu1}
    \end{subfigure}
    \vspace{5pt}
    \begin{subfigure}[t]{\textwidth}
        \includegraphics[width=\linewidth]{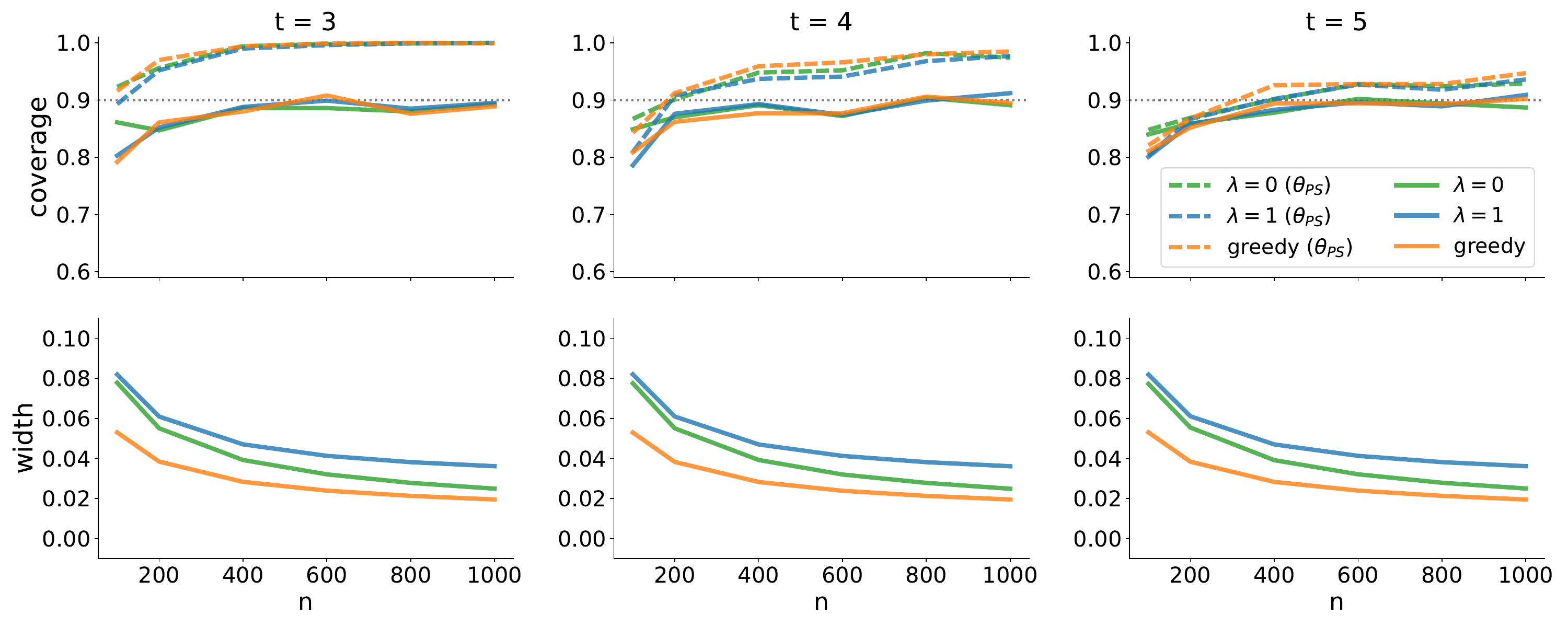}
        \caption{Confidence-region coverage and width with $\varepsilon\approx 0.03$.}
        \label{fig:CI_mu2}
    \end{subfigure}
    \caption{Confidence-region coverage (top row) and width (bottom row) with different choices of $\lambda$. The setup is the same as in Figure~\ref{fig:CI_1}, only we change $\varepsilon\approx 0.003$ or $\varepsilon\approx 0.03$.}
    \label{fig:CI_mu}
\end{figure}

\begin{figure}[htbp]
    \centering
    \begin{subfigure}[t]{\textwidth}
        \includegraphics[width=\linewidth]{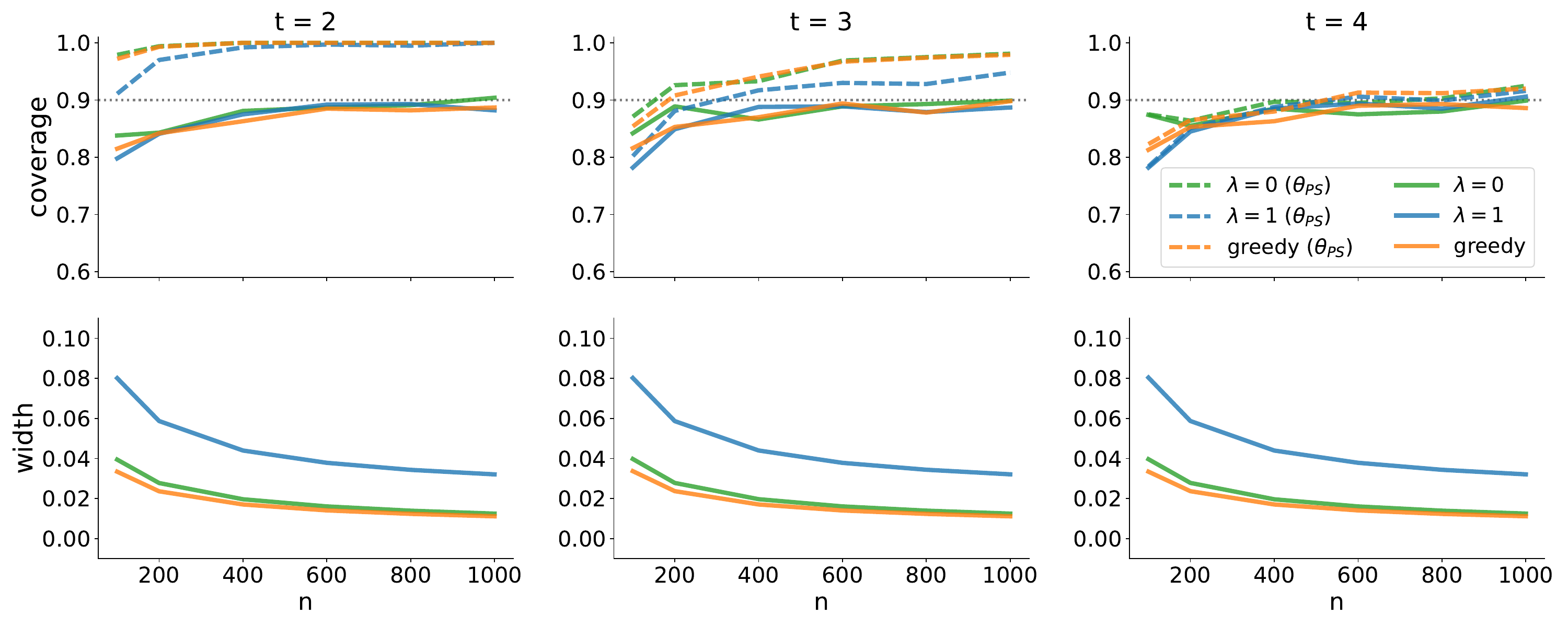}
        \caption{Confidence-region coverage and width with $\sigma_y^2= 0.1$.}
        \label{fig:CI_y1}
    \end{subfigure}
    \vspace{5pt}
    \begin{subfigure}[t]{\textwidth}
        \includegraphics[width=\linewidth]{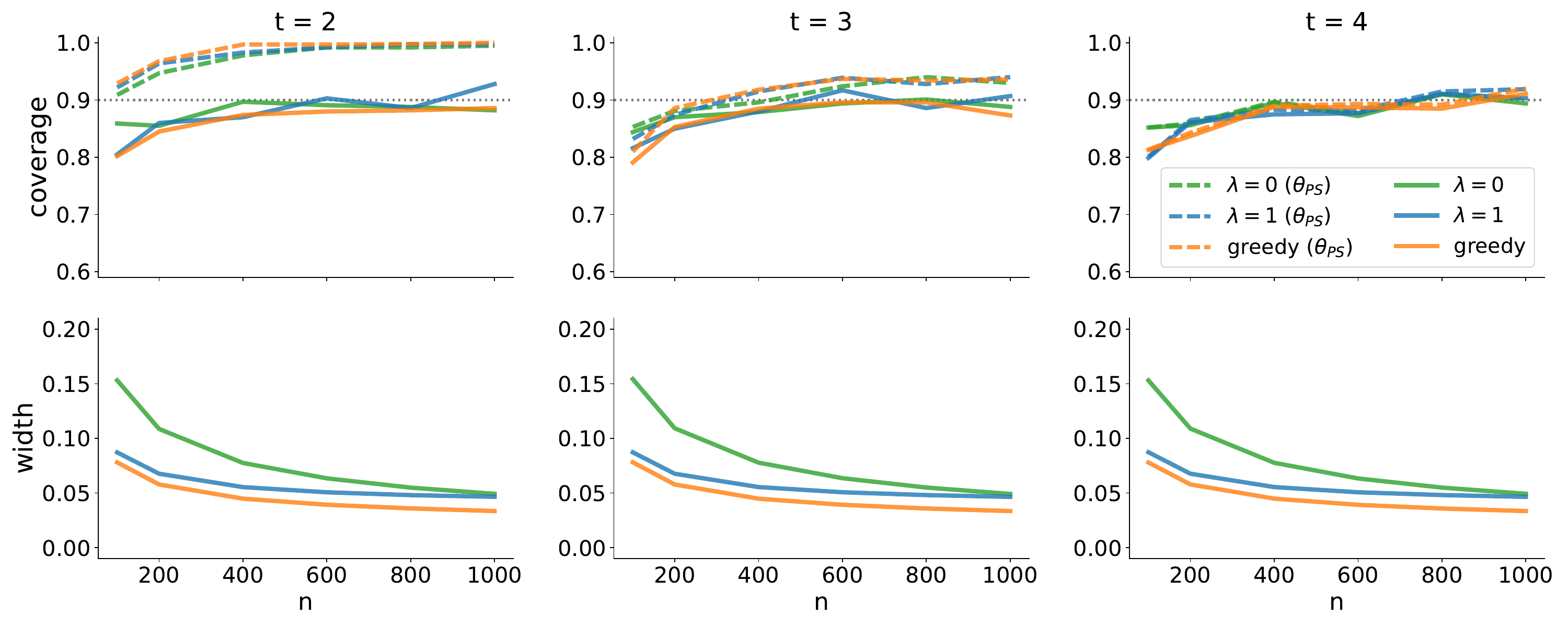}
        \caption{Confidence-region coverage and width with $\sigma_y^2= 0.4$.}
        \label{fig:CI_y2}
    \end{subfigure}
    \caption{Confidence-region coverage (top row) and width (bottom row) with different choices of $\lambda$. The setup is the same as in Figure~\ref{fig:CI_1}, only we change $\sigma_y^2= 0.1$ or $\sigma_y^2= 0.4$.}
    \label{fig:CI_y}
\end{figure}

\begin{figure}[htbp]
    \centering
    \includegraphics[width=\linewidth]{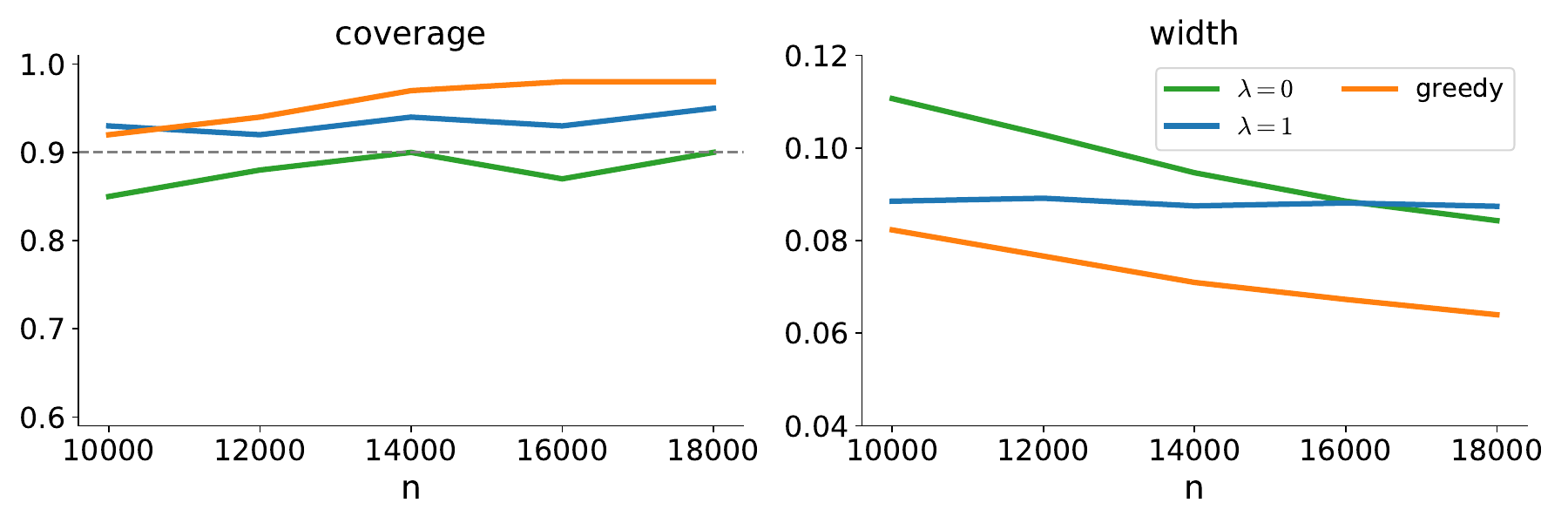}   
    \caption{Confidence-region coverage and width for $\theta_t$ ($t=5$) with different choices of $\lambda$ on the semi-synthetic Kaggle credit scoring dataset.}
    \label{fig:CI_y}
\end{figure}

\end{document}